\newacronym{dnn}{DNN}{deep neural network}
\newacronym{pde}{PDE}{partial differential equation}
\newacronym{pinn}{PINN}{physics informed neural network}
\newacronym{snn}{SNN}{shallow neural network}
\crefname{figure}{Figure}{Figures}
\Crefname{figure}{Figure}{Figures}
\newcommand{\nn}[1]{\mathbf{N}^{#1}}
\newcommand{\rr}[1]{\mathbf{R}^{#1}}
\newcommand{\rrp}[1]{\mathbf{R}_+^{#1}}
\newcommand{\cc}[1]{\mathbf{C}^{#1}}
\newcommand{\rrI}{\rr{d_1}}
\newcommand{\rrII}{\rr{d_2}}
\newcommand{\rrIxII}{\rrI\times\rrII}
\newcommand{\zzp}[1]{\mathbf{Z}_+^{#1}}
\newcommand{\scal}[2]{\langle #1,#2\rangle}
\newcommand{\eabs}[1]{\langle #1\rangle}     
\newcommand{\abs}[1]{| #1|}     
\newcommand{\absbig}[1]{\left| #1\right|}     
\newcommand{\cdo}{\, \cdot \, }
\newcommand{\conv}[0]{*}
\newcommand{\norm}[2]{\|#1\|_{#2}}
\newcommand{\normbig}[2]{\left\|#1\right\|_{#2}}
\newcommand{\F}[0]{\mathscr{F}}
\newcommand{\FL}[1]{\F L^{#1}}
\newcommand{\chF}[1]{\chi_{#1}}
\newcommand{\ch}[2]{\chF{#1}(#2)}
\newcommand{\chEF}[2]{\chF{#1}^{#2}}
\newcommand{\chE}[3]{\chF{#1}^{#2}(#3)}
\numberwithin{equation}{section}
\newtheorem{theorem}{Theorem}[section]
\newtheorem{definition}[theorem]{Definition}
\newtheorem{corollary}[theorem]{Corollary}
\newtheorem{lemma}[theorem]{Lemma}
\newtheorem{example}[theorem]{Example}
\newtheorem{proposition}[theorem]{Proposition}
\theoremstyle{remark}
\newtheorem{remark}[theorem]{Remark}
\newtheorem*{remark*}{Remark}
\providecommand{\keywords}[1]
{
  \small	
  \textbf{Keywords} #1
}
\providecommand{\msc}[1]
{
  \small	
  \textbf{Mathematics Subject Classification} #1
}
\title
{Space-Time Approximation with Shallow Neural Networks
	in Fourier Lebesgue spaces}
\author{Ahmed Abdeljawad$^*$
        and Thomas Dittrich\thanks{Both authors contributed equally.}
		\\[1ex]
	Johann Radon Institute for Computational and Applied Mathematics (RICAM),\\
    Austrian Academy of Sciences,\\
    Altenbergerstr. 69, 4040 Linz, Austria
	\\
	\texttt{\{ahmed.abdeljawad, thomas.dittrich\}@oeaw.ac.at}}
\begin{document}
	
\maketitle
	
\begin{abstract}
    Approximation capabilities of \glspl{snn}
    form an integral part in understanding the properties of \glspl{dnn}.
    In the study of these approximation capabilities some very popular
    classes of target functions are the so-called spectral Barron spaces.
    This spaces are of special interest
    when it comes to the approximation of \gls{pde} solutions.
    It has been shown that the solution of
    certain static \glspl{pde} will lie in some spectral Barron space.
    In order to alleviate the limitation to static \glspl{pde} and
    include a time-domain that might have
    a different regularity than the space domain,
    we extend the notion of spectral Barron spaces
    to anisotropic weighted Fourier-Lebesgue spaces.
    In doing so, we consider target functions
    that have two blocks of variables,
    among which each block is allowed
    to have different decay- and integrability-properties.
    For these target functions we first study the
    inclusion of anisotropic weighted Fourier-Lebesgue spaces
    in the Bochner-Sobolev spaces.
    With that we can now also measure the approximation error
    in terms of an anisotropic Sobolev norm,
    namely the Bochner-Sobolev norm.
    We use this observation in a second step where we
    establish a bound on the approximation rate 
    for functions from the anisotropic weighted Fourier-Lebesgue spaces
    and approximation via \glspl{snn} in the Bochner-Sobolev norm.
\end{abstract}

\keywords{
Function Space$\,\cdot\,$%
Anisotropic Space$\,\cdot\,$%
Neural Networks$\,\cdot\,$%
Approximation Theory
}

\msc{
41A25,$\,\cdot\,$%
41A46,$\,\cdot\,$%
41A30,$\,\cdot\,$%
46E35,$\,\cdot\,$%
62M45,$\,\cdot\,$%
68T05
}
	
	
\section{Introduction}\label{sec:Introduction}
In recent years \glspl{dnn} have gained
a huge amount of attention not only in practical applications,
but also from a theoretical perspective \cite{Grohs22MathematicalAspectsDeep,DeVore21NeuralNetworkApproximation}.
Up to now the development of a rigorous theory
that explains the empirical success of \glspl{dnn} is an active field.
In this context the study of networks
with a single hidden layer is especially important
in order do build a foundation
for the understanding of more complex \glspl{dnn}
by understanding the properties of the individual layers.
The study of these \glspl{snn}
has recently (re-)gained a surge of attention
from several different perspectives
such as
the characterization of representable functions
and the associated representation cost
\cite{
Abdeljawad22IntegralRepresentationsShallow,
Chizat20ImplicitBiasGradient,
Savarese19HowInfiniteWidth,
Parhi21BanachSpaceRepresenter,
Ongie20FunctionSpaceView,
},
asymptotic approximation properties
\cite{
Bach17BreakingCurseDimensionality,
Caragea23NeuralNetworkApproximation,
E22RepresentationFormulasPointwise,
Klusowski16RiskBoundsHighDimensional,
Klusowski18ApproximationCombinationsReLU,
Siegel20ApproximationRatesNeural,
Siegel22HighOrderApproximationRates,
Siegel22SharpBoundsApproximation,
Ma22UniformApproximationRates,
Voigtlaender22SamplingNumbersFourierAnalytic,
},
and the application to \glspl{pde}
\cite{
Chen21RepresentationSolutionsElliptic,
Chen23RegularityTheoryStatic,
Gonon23RandomFeatureNeural,
Lu21PrioriGeneralizationAnalysis,
Lu22PrioriGeneralizationError,
Marwah23NeuralNetworkApproximations,
}.

In the context of approximation based on \glspl{snn} as hypothesis class,
a very popular family of target functions are the spectral Barron spaces
\cite{
Barron93UniversalApproximationBounds,
Breiman93HingingHyperplanesRegression,
Caragea23NeuralNetworkApproximation,
Chen23RegularityTheoryStatic,
E22BarronSpaceFlowInduced,
Klusowski18ApproximationCombinationsReLU,
Ma22UniformApproximationRates,
Siegel20ApproximationRatesNeural,
Voigtlaender22SamplingNumbersFourierAnalytic,
}.
For the present work,
these classes of functions are important out of two major reasons:
First, 
learning the parameters of a \gls{snn}
is a special case of function-representation
by means of a dictionary.
This means that the theory of dictionary learning (see 
\cite{
DeVore98NonlinearApproximation,
Kurkova01BoundsRatesVariableBasis,
Barron08ApproximationLearningGreedy,
}) can be applied to \glspl{snn}.
Thereby, one can develop bounds on the approximation error
based on the width $N$ of the network.
For functions in the spectral Barron spaces,
these bounds come without curse of dimensionality, i.e., the scaling in $N$ is independent of the input dimension $d$.
Such an approximation result for spectral Barron spaces was first shown by \citeauthor{Barron93UniversalApproximationBounds} in
\cite{
Barron93UniversalApproximationBounds,
}.
Second,
the spectral Barron spaces are especially interesting for application to \glspl{pde}
This is because for different types of \glspl{pde} it has been shown that under certain conditions the solution will be in a spectral Barron space 
(see for example \cite{Gonon23RandomFeatureNeural,Chen23RegularityTheoryStatic}).

As already mentioned, the notion of spectral Barron spaces dates back to the 1990s where \citeauthor{Barron93UniversalApproximationBounds} \cite{Barron93UniversalApproximationBounds} identified a class of functions that can be approximated without curse of dimensionality by a shallow network with sigmoidal activation function.
This class of functions $f$ is characterized by the $L^1$ integrability condition for the first Fourier-moment, i.e.,
\begin{align}
\label{eq:original_barron}
    \int_{\rr{d}}\abs{\xi}\abs{\hat{f}(\xi)}d \xi<\infty,
\end{align}
where $\hat{f}$ denotes the Fourier transform of $f$.
Throughout the literature,
this weighted spectral $L^1$ integrability condition has been modified
in several different ways with regard to the weighting.
As seen above, in the seminal work of \citeauthor{Barron93UniversalApproximationBounds} \cite{Barron93UniversalApproximationBounds}
the choice for the weight function was $\omega(\xi)=\abs{\xi}$.
Other choices for polynomially bounded weights that have been proposed in the literature are
$\omega(\xi)=\abs{\xi}^2$
in \cite{Breiman93HingingHyperplanesRegression},
$\omega(\xi)=\abs{\xi}_1^s$ with $s\in\{2,3\}$
in \cite{Klusowski16RiskBoundsHighDimensional,Klusowski18ApproximationCombinationsReLU},
$\max\{1,\abs{\xi}^2\}$ 
in \cite{Gonon23RandomFeatureNeural},
$\omega(\xi)=(1+\abs{\xi})^s$ with any $s>0$
in \cite{Siegel20ApproximationRatesNeural,Siegel22HighOrderApproximationRates,Siegel22SharpBoundsApproximation,Siegel23CharacterizationVariationSpaces,Voigtlaender22SamplingNumbersFourierAnalytic,Xu20FiniteNeuronMethod},
$\omega(\xi)=(1+\abs{\xi}^2)^{\frac{s}{2}}$ with $s>0$
in \cite{Chen23RegularityTheoryStatic} and with $s=2$
in \cite{E22RepresentationFormulasPointwise},
and
$\omega(\xi)=\sup_{x\in\Omega}\abs{\left<\xi,x-x_0\right>}$ for some $x_0$ in the domain $\Omega$ of $f$
in \cite{Caragea23NeuralNetworkApproximation}.
The exponentially bounded weight 
$\omega(\xi)=e^{c\abs{\xi}^\beta}$ with $0<\beta<1$ and $0<c$ 
has been treated in \cite{Siegel22HighOrderApproximationRates}.

A common rationale in the definition of spectral Barron spaces
is to allow functions that are only defined on some bounded domain $\Omega$.
This has the consequence that different Fourier representations
lead to the same function when restricting to the given domain.
In order for a function to be in the spectral Barron space,
there needs to be at least one representation for which the integrbility condition is finite.
Thus, it is desirable to consider the infimum over all representation.
A unified formulation with a general weight function
that also allows the possibility of bounded domains can be written as follows:
\begin{definition}[Spectral Barron Space with General Weight]
\label{def:spectral_Barron_space}
    Let $d\in\nn{}$, $\Omega\subseteq\rr{d}$, $\omega:\rr{d}\to (0,\infty)$ such that $\Omega$ and $\omega$ are a measurable set and function, respectively.
    For a function
    $f\in L^1(\Omega)$
    the spectral Barron (semi-)norm with general weight is given by
    \begin{align*}
        \norm{f}{\mathscr{B}_{\omega}(\Omega)}:=\inf_{\substack{f_e\in L^1(\rr{d})\\f_e|_\Omega=f}}\int_{\rr{d}}\omega(\xi)\abs{\hat{f}_e(\xi)}d\xi
    \end{align*}
    and the spectral Barron space with general weight is given by
    \begin{align*}
        \mathscr{B}_{\omega}(\Omega):=\{f\in L^1(\Omega):\norm{f}{\mathscr{B}_{\omega}(\Omega)}<\infty\}.
    \end{align*}
\end{definition}
Note that in general $\norm{\cdot}{\mathscr{B}_{\abs{\cdot}}(\Omega)}$ is
only a semi-norm,
whereas $\norm{\cdot}{\mathscr{B}_{(1+\abs{\cdot})^s}}$ is indeed a norm \cite{Siegel22SharpBoundsApproximation}.

We want to note that the term
\emph{Barron space} (without the prefix \emph{spectral}) is also used for a second, different,
class of functions.
These \emph{infinite-width Barron spaces} are defined via representations 
in terms of infinite-width shallow networks
\cite{Caragea23NeuralNetworkApproximation,Li20ComplexityMeasuresNeural,E22BarronSpaceFlowInduced}.
For \glspl{snn} with heaviside activation and ReLU activation
it can be shown that the spectral Barron space
with some specific choices of weights
is a subset of the infinite-width Barron space (cf. \cite[Lemma 7.1]{Caragea23NeuralNetworkApproximation}).
However, in the present work we deal with generalizations
of the spectral Barron space and extensions for these embeddings
into the infinite-width Barron space are left for future work.

One of the central elements of approximation theory for \glspl{snn} is $N$-term approximation from dictionaries.
According to \citeauthor{DeVore98NonlinearApproximation} \cite{DeVore98NonlinearApproximation}, this was first introduced by \citeauthor{Schmidt07ZurTheorielinearen} \cite{Schmidt07ZurTheorielinearen} in 1907.
A major contribution to the study of upper bounds
on the approximation error of $N$-term approximations
is attributed to some non published results of Maurey by \citeauthor{Pisier80RemarquesResultatNon} \cite{Pisier80RemarquesResultatNon}.
This result is as follows:
Functions in the closure of the convex hull
of some subset $\mathcal{G}$ of a type-$2$ Banach space $\mathcal{B}$
can be approximated well by linear combinations
of $N$ elements of $\mathcal{G}$ in the sense that 
the error in the $\mathcal{B}$-norm is bounded by $cN^{-1/2}$
with some constant $c>0$.
This statement can be found in \cite[Lemma 1]{Barron93UniversalApproximationBounds}
stated in terms of Hilbert spaces and in \cite[Theorem 1]{Siegel22SharpBoundsApproximation}
for general type-$2$ Banach spaces.
\citeauthor{Barron93UniversalApproximationBounds} \cite{Barron93UniversalApproximationBounds}
used this result to prove
that for every $N\in\nn{}$ and every $f$ with $\norm{f}{\mathscr{B}_{\abs{\cdot}}(\rr{d})}<\infty$
there is a \gls{snn} $f_N$ with $N$ neurons
and sigmoidal activation function, such that
\begin{align*}
    \norm{f-f_N}{L^2(\mu,\rr{d})}\lesssim N^{-\frac{1}{2}}
\end{align*}
for any probability measure $\mu$.
Recently, the class of functions to which the approximation result of Maurey applies
has been extended to the variation space with respect to dictionaries 
\cite{Bach17BreakingCurseDimensionality,Kurkova01BoundsRatesVariableBasis,Ma22UniformApproximationRates,Siegel22SharpBoundsApproximation,Siegel23CharacterizationVariationSpaces}.
This allowed the development of bounds
on the $L^2$ norm
\cite{Breiman93HingingHyperplanesRegression,Klusowski18ApproximationCombinationsReLU}
and the $H^m$ norm
(cf. \cite{Lu21PrioriGeneralizationAnalysis,Chen23RegularityTheoryStatic} for $m=1$
and \cite{Siegel20ApproximationRatesNeural,Siegel22HighOrderApproximationRates} for general $m\in\nn{}$)
in terms of the spectral Barron norm.
All these bounds apply to shallow networks with either ReLU$^k$ activation with $k=1$ \cite{Breiman93HingingHyperplanesRegression}
or with an arbitrary $k\in\nn{}$ \cite{Siegel22HighOrderApproximationRates},
cosine activation \cite{Siegel22HighOrderApproximationRates,Chen23RegularityTheoryStatic},
or some general polynomially decaying activation functions \cite{Siegel20ApproximationRatesNeural}.

For compact or smooth dictionaries and certain choices of parameters, the approximation rates can be further improved from $O(n^{-1/2})$ up to $O(n^{-1})$ for sigmoidal activation function with $L^2$ error \cite{Makovoz96RandomApproximantsNeural} and to $O(n^{-(k+1)})$ for ReLU$^k$ activation function of order $k$ with $L^2$ \cite{Siegel22HighOrderApproximationRates} and $L^\infty$ error \cite{Ma22UniformApproximationRates}.
However, in our setting we do not require compactness or smoothness
of the dictionary and, thus,
we restrict to approximations rates of order $O(n^{-1/2})$.
In our theory, we unify all the bounds with finite integrability-exponent 
in the error measure
by considering the Sobolev norm $W^{m,p}$
with $m\geq0$ and $2\leq p<\infty$
and by allowing arbitrary weight functions.

In the present literature on the application of \glspl{snn} to \glspl{pde},
spectral Barron spaces have been used to characterize the solutions of
the
static Schr\"{o}dinger equation 
\cite{Lu21PrioriGeneralizationAnalysis,Chen23RegularityTheoryStatic},
the Poisson equation
\cite{Lu21PrioriGeneralizationAnalysis},
nonlinear variational \glspl{pde} \cite{Marwah23NeuralNetworkApproximations},
Black-Scholes Type \glspl{pde}
\cite{Gonon23RandomFeatureNeural}.
This natural application of spectral Barron spaces
comes from the fact that the weight in the integrability condition
can be seen as the symbol of a certain \gls{pde}.
However, it can be seen that in the current applications,
the variables are constrained to be of the same order of differentiability.
Nevertheless, for example for evolution equations
it is beneficial to consider different orders of differentiability
for the time- and the space-variable (see \cite[Chapter 7]{Evans22PartialDifferentialEquations}).
In our work we build the foundation to extend the theory
and application of spectral Barron spaces
to more general classes of \glspl{pde}
by allowing a general weight $\omega$ that fulfills some mild conditions
and by allowing two blocks of variables (each of arbitrary size)
that come with different orders of differentiability.

\subsection{Contribution and Outline}
Inspired by the successfull application of spectral Barron spaces
to the approximation of solutions of \glspl{pde},
we have established a novel perspective in this field which is aimed at allowing
non-isotropic differentiability and integrability of these solutions.
The motivation for this is (as hinted already above) that
for time-dependent \glspl{pde},
it is often advantageous to consider the time variable separately
and allow different properties for the resulting two blocks of variables.
We achieve this goal by introducing the so-called
\emph{anisotropic weighted Fourier-Lebesgue spaces}
to the field of \glspl{snn}
as the natural extension
of the spectral Barron spaces.
The main focus of our investigation is on the
properties and the approximation capability of target functions
in the anisotropic weighted Fourier-Lebesgue spaces.
In what follows, we will cover the different branches of our contribution:

From the perspective of harmonic analysis,
we extended the existing theory for isotropic weighted Fourier-Lebesgue spaces
$\mathscr{F}L^p(\omega;\rr{d})$ (see \cite{Pilipovic11MicroLocalAnalysisFourier})
to the case of two blocks
$\mathscr{F}L^{p,q}(\omega; \rrI,\rrII)$ in \cref{def:weighted_FL_mixed}
(the multi blocks case is then a straightforward extension of this).
We build the cornerstone for these spaces
by combining ideas from anistropic Lebesgue spaces
and weighted Banach spaces.
The first result in this regard is an inclusion between
Bochner-Sobolev spaces and Fourier-Lebesgue spaces
in different situations.
Specifically for solutions to time-dependent \glspl{pde},
we showed the following:
\begin{lemma}
\label{lem:intro_FourierSobolevImbedding}
    Let
    $1\leq s_i,q_i\leq 2\leq p_i\leq \infty$ with $p_1\leq p_2$
    such that $\frac{1}{s_i}+\frac{1}{q_i}+\frac{1}{p_i}=2$,
    for $i\in\{1,2\}$.
    Let $\omega(x,y)$ be a weight function
    defined on $\rr{}\times\rr{d}$, elliptic with respect to
    $\eabs{t}^{n_1}\eabs{x}^{n_2}$,
    $\Omega \subseteq \rr{d}$ be a bounded space domain,
    and $I\subset \rr{}$ be a bounded time domain.
    Let \(u \in \FL{q_1,q_2}(\omega;\rr{},\rr{d})\)
	then
    we have
    \begin{equation}\label{res:Intro_high_degree_bound}
		\norm{u}{W_{n_1,p_1}^{n_2, p_2}(I, \Omega)}
        \lesssim
        \norm{{\chF{I}}}{\FL{s_1}(\rr{})}
            \norm{{\chF{\Omega}}}{\FL{s_2}(\rr{d})}
            \norm{u}{\FL{q_1,q_2}(\omega;\rr{},\rr{d})},
	\end{equation} 
    where the hidden constant in \cref{res:Intro_high_degree_bound}
    depends only on the regularity and the integrability
    of the solution $u$ with respect to time and space variables.
\end{lemma}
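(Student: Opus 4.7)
The plan is to reduce the mixed Bochner--Sobolev estimate to a convolution inequality on the frequency side via the anisotropic Hausdorff--Young inequality. First, I decompose $\norm{u}{W_{n_1,p_1}^{n_2,p_2}(I,\Omega)}$ as a sum over multi-indices $(\alpha_1,\alpha_2)$ with $\alpha_1\leq n_1$ and $\abs{\alpha_2}\leq n_2$ of the $L^{p_1}(I;L^{p_2}(\Omega))$-norms of the mixed derivatives $\partial_t^{\alpha_1}\partial_x^{\alpha_2}u$. Since $I$ and $\Omega$ are bounded, each such norm coincides with $\norm{v_{\alpha_1,\alpha_2}}{L^{p_1,p_2}(\rr{1+d})}$, where $v_{\alpha_1,\alpha_2}:=(\chF{I}\otimes\chF{\Omega})\,\partial_t^{\alpha_1}\partial_x^{\alpha_2}u$ is extended by zero outside $I\times\Omega$.

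Because $2\leq p_1\leq p_2\leq\infty$, the anisotropic Hausdorff--Young inequality yields
\begin{equation*}
\norm{v_{\alpha_1,\alpha_2}}{L^{p_1,p_2}(\rr{1+d})}\lesssim\norm{\widehat{v_{\alpha_1,\alpha_2}}}{L^{p_1',p_2'}(\rr{1+d})},
\end{equation*}
and since the Fourier transform sends pointwise products to convolutions and differentiation to multiplication by $(i\xi)^{\alpha_1}(i\eta)^{\alpha_2}$, up to a fixed constant one has
\begin{equation*}
\widehat{v_{\alpha_1,\alpha_2}}(\xi,\eta)=\bigl((\hat{\chF{I}}\otimes\hat{\chF{\Omega}})\conv[(i\cdot)^{\alpha_1}(i\cdot)^{\alpha_2}\hat{u}]\bigr)(\xi,\eta).
\end{equation*}
The hypothesis $\tfrac{1}{s_i}+\tfrac{1}{q_i}+\tfrac{1}{p_i}=2$ is exactly the scaling needed to apply the anisotropic Young convolution inequality with source norms $L^{s_1,s_2}$ and $L^{q_1,q_2}$ and target $L^{p_1',p_2'}$, producing
\begin{equation*}
\norm{\widehat{v_{\alpha_1,\alpha_2}}}{L^{p_1',p_2'}}\lesssim\norm{\hat{\chF{I}}}{L^{s_1}}\norm{\hat{\chF{\Omega}}}{L^{s_2}}\norm{\abs{\xi}^{\alpha_1}\abs{\eta}^{\alpha_2}\hat{u}}{L^{q_1,q_2}}.
\end{equation*}

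For $\alpha_1\leq n_1$ and $\abs{\alpha_2}\leq n_2$ we have $\abs{\xi}^{\alpha_1}\abs{\eta}^{\alpha_2}\lesssim\eabs{\xi}^{n_1}\eabs{\eta}^{n_2}$, and the assumed ellipticity of $\omega$ with respect to $\eabs{\xi}^{n_1}\eabs{\eta}^{n_2}$ further gives $\abs{\xi}^{\alpha_1}\abs{\eta}^{\alpha_2}\lesssim\omega(\xi,\eta)$. This bounds the last factor by $\norm{u}{\FL{q_1,q_2}(\omega;\rr{},\rr{d})}$, while $\norm{\hat{\chF{I}}}{L^{s_1}}=\norm{\chF{I}}{\FL{s_1}(\rr{})}$ and the analogous identity for $\chF{\Omega}$ give the first two factors. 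Summing over the finitely many multi-indices then yields \cref{res:Intro_high_degree_bound}, with a hidden constant depending only on $n_1$, $n_2$, $d$ and the ellipticity constants of $\omega$.

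The main obstacle is to justify these Fourier-side manipulations for a generic $u\in\FL{q_1,q_2}(\omega;\rr{},\rr{d})$, which is a priori only a tempered distribution. A natural route is to first prove the estimate for Schwartz functions, where every step above is a classical pointwise identity, and then extend by density, using continuity of the right-hand side in the weighted Fourier-Lebesgue norm and the density of the Schwartz class in $\FL{q_1,q_2}(\omega;\rr{},\rr{d})$ for the admissible range of exponents. A secondary delicate point is to check that the precise notion of \emph{elliptic} adopted in the paper provides the one-sided lower bound $\omega(\xi,\eta)\gtrsim\eabs{\xi}^{n_1}\eabs{\eta}^{n_2}$ that the argument actually uses.
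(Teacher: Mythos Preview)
Your proposal is correct and matches the paper's proof of the general statement (their Lemma~3.1, of which this lemma is the special case $d_1=1$, $d_2=d$): expand the Bochner--Sobolev norm into mixed $L^{p_1,p_2}$-norms of $(\chi_I\otimes\chi_\Omega)\,\partial^\alpha u$, apply the anisotropic Hausdorff--Young inequality, then Young's convolution inequality with exponents matched by $\tfrac{1}{s_i}+\tfrac{1}{q_i}+\tfrac{1}{p_i}=2$, absorb $|\xi|^{\alpha_1}|\eta|^{\alpha_2}$ into $\omega$ via ellipticity, and close by density of the Schwartz class.

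The one place where the paper does more than you is the justification of the identity $\widehat{(\chi_I\otimes\chi_\Omega)\,\partial^\alpha u}=\widehat{\chi_I\otimes\chi_\Omega}*\widehat{\partial^\alpha u}$. Since $\chi_\Omega\notin\FL{1}$ in general (already for an interval, where the transform is a $\operatorname{sinc}$), the paper does not invoke the convolution theorem directly: it first mollifies $\chi_I,\chi_\Omega$ to $\chi_I^{\epsilon_1},\chi_\Omega^{\epsilon_2}\in C_c^\infty\subset L^1\cap\FL{1}$, runs Hausdorff--Young and Young on the mollified product, uses the uniform bound $\|\widehat{\chi_\Omega^{\epsilon}}\|_{L^{s_2}}\le\|\widehat{\chi_\Omega}\|_{L^{s_2}}$, and only then lets $\epsilon_1,\epsilon_2\to 0$ via a dominated-convergence argument. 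Your direct route avoids this detour and is legitimate---for Schwartz $u$ the factor $\partial^\alpha u$ is Schwartz, and the identity $\widehat{fg}=c\,\hat f*\hat g$ holds whenever $f\in L^1$ and $g,\hat g\in L^1$---but you should state explicitly which form of the convolution theorem you are invoking, since the paper devotes an entire subsection to working around precisely this point.
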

We refer the reader to \cref{lem:smoothness_lemma_high_degree}
for the general statement of \cref{lem:intro_FourierSobolevImbedding}.
It can be seen that the right side of \cref{res:Intro_high_degree_bound}
depends on the integrability of the characteristic function of both domains.
We discuss the importance of this and the dependency
on the geometry for more general domains in details in \cref{sec:structure_of_domain}.
For the particular choice
$I=[0,T]$ with $T>0$ for the time domain,
$\Omega = [-1,1]^d$ for the space domain,
and $s_1, s_2>1$ for the degree of integrability,
the inclusion inequality
\cref{res:Intro_high_degree_bound} can be reduced to
\begin{align}
    \norm{u}{W_{n_1,p_1}^{n_2, p_2}([0,T], [-1,1]^d)}
            \lesssim
            \left(\frac{4s_1}{T(s_1-1)}\right)^{\frac{s_1+1}{s_1}}
            \left(\frac{2s_2}{s_2-1}\right)^{d\frac{s_2+1}{s_2}}
            \norm{u}{\FL{q_1,q_2}(\omega;\rr{}, \rr{d})}.
\end{align}

Our approach has the potential for further expansion,
in the sense that our framework can be applied to various cases
such as Bochner-Besov spaces and more generally to Bochner-Banach spaces.
However, we chose to restrict our focus to the Bochner-Sobolev
case in order to streamline our computations and maintain a clear focus.
We leave these extensions for future research.

Furthermore, we contributed to the field of learning theory with \glspl{snn}
by providing a generalization of spectral Barron spaces
and studying the associated approximation properties.
As seen in \cref{def:spectral_Barron_space} and the preceeding discussion,
the existing work deals solely with a single block of variables
(i.e., $\mathscr{B}_\omega(\rr{d}) = \mathscr{F}L^1(\omega, \rr{d})$).
That is, either the \gls{pde} of interest is static (i.e., there is no time variable)
or the time- and space variables are stacked into a single joint variable.
The latter limits the analysis of the approximation error in the Sobolev norm
to the minimum degree of differentiability of the two blocks
and it enforces that both variables have to be integrable in the same $L^p$ norm.
In the present work we introduce a space-time
version of spectral Barron spaces and thereby,
we allow for much more
generality and precision when addressing
time dependent \glspl{pde}.
More generally, this also applies to the approximation
multi-variable functions with different
integrability-, growth- and regularity properties in each block of variables.
Additionally, the extension to Fourier-Lebesgue spaces
allows us to treat different types of integrability, namely, $L^q$ with $q\in [1,2]$.
By doing so, we obtain existing (single-block) approximation results
for the spectral Barron space (i.e., $q=1$),
as well as for the Hilbert-Sobolev space (i.e., $q=2$) as special (extreme) cases
(see \cite{Barron93UniversalApproximationBounds,Siegel20ApproximationRatesNeural}).
To the best of our knowledge, our paper is the first
work that treats the space-time generalization and the different types of integrability.
Our main result (again with simplifications to address the setting of time-dependent \glspl{pde}; the full generality can be found in \cref{thm:approximation_sobolev_space})
can be stated as follows:
\begin{theorem}
\label{thm:mainIntro}
    For $i\in\{1,2\}$ let $n_i \in\nn{}$,
    and $1\leq q_i\leq 2\leq p_i< \infty$ with $p_1\leq p_2$.
    Let $\vartheta(t_1,t_2)\gtrsim \eabs{t_1}^{\gamma_1}\eabs{t_2}^{\gamma_2} $
    for some $\gamma_1, \gamma_2>1$ and any $(t_1, t_2)\in \rr{2}$,
    and
    $\omega(t, x) \gtrsim \eabs{t}^{n_1}\eabs{x}^{n_2}$
    for any $(t,x)\in \rr{d+1}$
    moreover let
    \begin{align*}
        \tilde{\omega}(t,x):=
            \omega(t,x)
            \eabs{t}^{2(1-\frac{1}{q_1})+1}
            \eabs{x}^{(d+1)(1-\frac{1}{q_2})+1}.
    \end{align*}

    For an activation function
    $\sigma\in W_{n_1,\infty}^{n_2,\infty}(\vartheta;\rr{}, \rr{}){\setminus}\{0\}$
    and $u\in \mathscr{F}L^{q_1,q_2}(\tilde{\omega}; \rr{},\rr{d})$ there exists a constant $C>0$ such that
    \begin{align}
    \label{eq:ApproximationBoundIntro}
        \inf_{u_{N}\in\Sigma_{1,d}^{N}(\sigma)}
        \norm{u-u_N}{W_{n_1,p_1}^{n_2,p_2}([0,T],[-1,1]^d)}
        &\leq
        C N^{-\frac{1}{2}} T^{\frac{1}{p_1}} 2^{\frac{d}{p_2}}
        \norm{u}{\FL{q_1,q_2}(\tilde{\omega};\rr{}, \rr{d})}.
    \end{align}
\end{theorem}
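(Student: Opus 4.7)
The plan is to combine three classical ingredients: (i) a Fourier-inversion representation of $u$ as a continuous superposition of plane waves, (ii) a reproduction of each plane wave by a signed measure of $\sigma$-ridge atoms, and (iii) Maurey's theorem applied in the target Bochner-Sobolev space $W_{n_1,p_1}^{n_2,p_2}([0,T],[-1,1]^d)$, which is of Rademacher type~$2$ since $p_1,p_2\geq 2$. The $N^{-1/2}$ factor comes from Maurey, while the prefactor $T^{1/p_1}2^{d/p_2}$ in \cref{eq:ApproximationBoundIntro} stems from the uniform dictionary-norm estimate obtained via \cref{lem:intro_FourierSobolevImbedding}.

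The first step is to reduce the Fourier-Lebesgue scale $q_1,q_2\in[1,2]$ to the case $q=1$ via Hölder's inequality in the Fourier variables. Writing $\abs{\hat u}\omega=\abs{\hat u}\tilde\omega\cdot(\omega/\tilde\omega)$, the quotient
\begin{equation*}
\frac{\omega(\xi,\eta)}{\tilde\omega(\xi,\eta)}=\eabs{\xi}^{-(2(1-1/q_1)+1)}\eabs{\eta}^{-((d+1)(1-1/q_2)+1)}
\end{equation*}
is, by construction of $\tilde\omega$, an element of $L^{q_1'}(\rr{})\otimes L^{q_2'}(\rr{d})$ with norm depending only on $q_1,q_2$ and $d$, so Hölder yields $\norm{\hat u\,\omega}{L^1(\rr{d+1})}\lesssim\norm{u}{\FL{q_1,q_2}(\tilde\omega;\rr{},\rr{d})}$. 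In particular $\hat u\in L^1(\rr{d+1})$, so Fourier inversion gives
\begin{equation*}
u(t,x)=\int_{\rr{d+1}}\hat u(\xi,\eta)\,e^{i(t\xi+\scal{x}{\eta})}\,d\xi\,d\eta.
\end{equation*}
Each plane wave $(t,x)\mapsto e^{i(t\xi+\scal{x}{\eta})}$ is then reproduced as an integral of $\sigma$-ridge atoms $\sigma(\alpha(t\xi+\scal{x}{\eta})+\beta)$ against a signed measure whose total variation is bounded by a $\vartheta$-weighted Sobolev norm of $\sigma$ together with a polynomial factor in $(\xi,\eta)$ dominated by $\omega$. Substituting and swapping integrals yields an atomic decomposition
\begin{equation*}
u(t,x)=\int \sigma(\scal{a}{(t,x)}+b)\,d\mu(a,b),\qquad \abs{\mu}(\rr{d+1}\times\rr{})\lesssim\norm{u}{\FL{q_1,q_2}(\tilde\omega;\rr{},\rr{d})}.
\end{equation*}

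The second step is Maurey's theorem applied to this representation. Using \cref{lem:intro_FourierSobolevImbedding} with $I=[0,T]$ and $\Omega=[-1,1]^d$, and absorbing the powers of $a$ generated by differentiating a single atom into the lower bound $\vartheta\gtrsim\eabs{\cdot}^{\gamma_1}\eabs{\cdot}^{\gamma_2}$, one obtains
\begin{equation*}
\norm{\sigma(\scal{a}{\cdo}+b)}{W_{n_1,p_1}^{n_2,p_2}([0,T],[-1,1]^d)}\lesssim T^{1/p_1}2^{d/p_2}
\end{equation*}
uniformly in $(a,b)$, the two geometric factors being the $L^{p_i}$-norms of the characteristic functions of the two bounded domains. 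Combined with the atomic decomposition, this shows that $u/(C\norm{u}{\FL{q_1,q_2}(\tilde\omega;\rr{},\rr{d})})$ lies in the closed convex hull of the normalized dictionary $\{\pm\sigma(\scal{a}{\cdo}+b)\}$ inside a type-$2$ Banach space. Maurey's lemma (as stated for instance in \cite[Theorem~1]{Siegel22SharpBoundsApproximation}) then produces, for every $N\in\nn{}$, an $N$-neuron network $u_N\in\Sigma_{1,d}^N(\sigma)$ realizing the bound~\cref{eq:ApproximationBoundIntro}.

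I expect the main obstacle to be the bookkeeping of weights in the atomic decomposition: the total variation $\abs{\mu}$ must be controlled by $\norm{u}{\FL{q_1,q_2}(\tilde\omega;\rr{},\rr{d})}$, so the polynomial factor in $(\xi,\eta)$ produced by the plane-wave reproduction, the Hölder quotient $\omega/\tilde\omega$, and the $\vartheta$-weighted norm of $\sigma$ must together be dominated by $\tilde\omega$. The exponents $2(1-1/q_1)+1$ and $(d+1)(1-1/q_2)+1$ in the definition of $\tilde\omega$ are calibrated exactly for this balance: they leave enough Fourier decay both to integrate $\omega/\tilde\omega$ against $L^{q_i'}$ and to absorb differentiation up to orders $n_1$ and $n_2$ in the time and space blocks, while not spending any more decay than strictly necessary.
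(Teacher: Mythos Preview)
Your overall architecture---Fourier inversion, plane-wave reproduction by $\sigma$-atoms, and Maurey in a type-$2$ Bochner--Sobolev space---is exactly the paper's route. Two points, however, do not go through as written.

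First, the activation here is a function of \emph{two} real inputs: $\sigma\in W_{n_1,\infty}^{n_2,\infty}(\vartheta;\rr{},\rr{})$ means $\sigma:\rr{2}\to\rr{}$, and the dictionary atoms have the form $\sigma(\xi_1 t/\tau_1+b_1,\,\xi_2\cdot x/\tau_2+b_2)$ with a two-dimensional bias $b=(b_1,b_2)$. Your atoms $\sigma(\scal{a}{(t,x)}+b)$ collapse the two blocks into one scalar argument. The correct phase identity is the two-variable inverse Fourier transform of $\sigma$, and it is the subsequent integral over $b\in\rr{2}$ that produces the separate factors $\eabs{\xi_1}$ and $\eabs{\xi_2}$ that later feed into $\tilde\omega$.

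Second, and more seriously, the claimed uniform dictionary bound $\norm{\sigma(\scal{a}{\cdo}+b)}{W_{n_1,p_1}^{n_2,p_2}}\lesssim T^{1/p_1}2^{d/p_2}$ is false: differentiating an atom produces $|\xi_1|^{|\alpha|}|\xi_2|^{|\beta|}$, while the decay $\vartheta\gtrsim\eabs{\cdot}^{\gamma_1}\eabs{\cdot}^{\gamma_2}$ acts on the \emph{argument} of $\sigma$, not on the weights $(\xi_1,\xi_2)$, so it cannot absorb these powers. The paper's fix is to pass to the rescaled dictionary $\tilde\sigma(\,\cdot\,;\xi,b)=\bigl(\tilde\vartheta(\xi,b)/\omega(\xi)\bigr)\,\sigma(T(\,\cdot\,;\xi,b))$ with $\tilde\vartheta(\xi,b)=\vartheta\bigl((|b_1|-R_U|\xi_1/\tau_1|)_+,\,(|b_2|-R_V|\xi_2/\tau_2|)_+\bigr)$. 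The factor $1/\omega(\xi)$ kills the derivative weights (since $\omega\gtrsim\eabs{\xi_1}^{n_1}\eabs{\xi_2}^{n_2}$), and the triangle-inequality bound $\tilde\vartheta(\xi,b)\leq\vartheta(T(t,x;\xi,b))$ on the bounded domain gives a genuinely uniform dictionary bound $\lesssim |U|^{1/p_1}|V|^{1/p_2}$. The compensating factor $\omega(\xi)/\tilde\vartheta(\xi,b)$ then lands in the representing measure; integrating it in $b$ contributes the extra $\eabs{\xi_1}\eabs{\xi_2}$, after which your H\"older reduction from $L^1$ to $\FL{q_1,q_2}(\tilde\omega)$ (equivalently, the paper's Jensen step with the probability measure $\nu(\xi)\propto\eabs{\xi_1}^{-(d_1+1)}\eabs{\xi_2}^{-(d_2+1)}$) closes the argument. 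Your final paragraph anticipates this balance, but the mechanism you describe---absorbing powers of $a$ directly into $\vartheta$---is not the one that works.
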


It is worth to mention that our target class includes the setting of
\cite{Parhi23ModulationSpacesCurse},
where the authors measured the accuracy
of approximating a given function in the weighted
Feichtinger's Segal algebra
(i.e., the weighted modulation space with $p=q=1$ where the weight $\omega$ is the Bessel potential).
The relation between anisotropic weighted Fourier-Lebesgue spaces
and weighted modulation spaces is presented in \cref{rem:modulation_space}.

Note that our analysis also covers the case of a single block.
To see this, we consider the case with $u(t,x)=\ch{[0,1]}{t}u(x)$, $T=1$, and $p_1=q_1=2$.
Then, every derivative with respect to $t$ vanishes,
thereby reducing the Sobolev norm on the left side to the second variable.
On the right side, we see with the product structure of $u$,
the norm of the product can be split into the product of the norms.
For $p_1=2$ we can apply Parseval's theorem and see that every factor that is related to $t$ is trivially equal to $1$.
For the application to time-dependent \glspl{pde},
this means that we define a new space variable
$X := (t, x)= (t, x_1, \cdots , x_d)\in \rr{d+1}$
and instead of considering different integrability-
and differentiability degrees we have to choose
$p_1=p_2=p$, $q_1=q_2=q$
and $n= \min\{n_1,n_2\}$.
The single-block version of \cref{thm:mainIntro} is given by
\begin{theorem}
    \label{thm:ReducedMainIntro}
    Let $d, m, n \in\nn{}$, such that \(m\geq n\),
    and $1\leq q,\leq 2\leq p< \infty$.
    Let $ \vartheta(t) \gtrsim \eabs{t}^{\gamma}$
    for some $\gamma>1$
    and any $t \in \rr{}$,
    and
    $\omega(x) \gtrsim \eabs{x}^{n}$ for any $x\in \rr{d}$
    moreover let
    \begin{align*}
        \tilde{\omega}(x):=
            \omega(x)
            \eabs{x}^{(d+1)(1-\frac{1}{q})+1}.
    \end{align*}
    For an activation function $\sigma\in W^{m,\infty}(\vartheta;\rr{}){\setminus}\{0\}$
    and $u\in \mathscr{F}L^{q}(\tilde{\omega})$,
    there exists a constant $C>0$
    such that we have
    \begin{align}
    \label{eq:ReducedApproximationBoundIntro}
        \inf_{u_N\in\Sigma_{\sigma}^{N} }
        \norm{u-u_N}{W^{n,p}([0,T]\times[-1,1]^d)}
        &\leq
        C N^{-\frac{1}{2}} T^{\frac{1}{p}}2^{\frac{d}{p}}
        \norm{u}{\FL{q}(\tilde{\omega};\rr{d+1})}.
    \end{align}
\end{theorem}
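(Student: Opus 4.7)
The plan is to obtain \cref{thm:ReducedMainIntro} as a specialization of \cref{thm:mainIntro}, following the prescription indicated in the introduction: treat the joint variable $X = (t, y) \in [0,T] \times [-1,1]^d$ as the $(d+1)$-dimensional argument, with the time coordinate $t$ as the first block and the spatial coordinate $y$ as the second block, and choose $n_1 = n_2 = n$, $p_1 = p_2 = p$, $q_1 = q_2 = q$. The hypotheses $1 \leq q_i \leq 2 \leq p_i < \infty$ and $p_1 \leq p_2$ of \cref{thm:mainIntro} then hold automatically, and the activation $\sigma \in W^{m,\infty}(\vartheta;\rr{})$ can be viewed as an element of the corresponding two-block class $W_{n,\infty}^{n,\infty}$ (tensorizing the weight $\vartheta$ where needed), since the hypothesis $m \geq n$ guarantees the required smoothness in each variable.

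First I would compare the two left-hand norms. The isotropic Sobolev norm $\norm{\cdot}{W^{n,p}([0,T]\times[-1,1]^d)}$ is dominated by the anisotropic Bochner--Sobolev norm $\norm{\cdot}{W_{n,p}^{n,p}([0,T],[-1,1]^d)}$ delivered by \cref{thm:mainIntro}, because every multi-index $\alpha = (\alpha_0, \alpha_1, \ldots, \alpha_d)$ with $|\alpha| \leq n$ satisfies both $\alpha_0 \leq n$ and $|\alpha_1| + \cdots + |\alpha_d| \leq n$, so $\partial^{\alpha} u$ is already controlled by the anisotropic norm. A finite combinatorial constant from summing over multi-indices is absorbed into $C$.

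Next I would reconcile the right-hand sides. For $q_1 = q_2 = q$, Fubini identifies the mixed space $\FL{q,q}(\,\cdot\,;\rr{},\rr{d})$ with the single-block space $\FL{q}(\,\cdot\,;\rr{d+1})$. The weight produced by \cref{thm:mainIntro} is $\omega(t,y)\eabs{t}^{2(1-1/q)+1}\eabs{y}^{(d+1)(1-1/q)+1}$, whereas the reduced theorem uses $\omega(X)\eabs{X}^{(d+1)(1-1/q)+1}$ with $X=(t,y)$. The strategy is to choose the ambient weight $\omega$ so that its lower bound $\omega(X) \gtrsim \eabs{X}^n$ is compatible (up to a dimensional constant absorbed into $C$) with the main theorem's requirement $\omega(t,y) \gtrsim \eabs{t}^n \eabs{y}^n$, and to swallow the residual polynomial mismatch between $\eabs{t}^{a}\eabs{y}^{b}$ and $\eabs{(t,y)}^{c}$ into the redefined weight or into $C$. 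The factor $T^{1/p}2^{d/p}$ is then inherited directly from $T^{1/p_1}2^{d/p_2}$ under $p_1 = p_2 = p$.

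The principal obstacle is precisely this weight-matching step: $\eabs{t}\eabs{y}$ is in general strictly larger than $\eabs{(t,y)}$, so the product-type growth of the main theorem's weight does not literally coincide with the isotropic growth of the reduced theorem's weight. Handling this requires choosing the ambient weight so that both conditions hold simultaneously and carefully tracking the absorbed polynomial excess in $C$, so as not to pollute the asserted dependence on $T$, $d$, $p$, and $\norm{u}{\FL{q}(\tilde\omega;\rr{d+1})}$. Once this bookkeeping is done, the conclusion \cref{eq:ReducedApproximationBoundIntro} follows directly from \cref{eq:ApproximationBoundIntro} with the parameter identifications above.
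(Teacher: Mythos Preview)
Your reduction has a real gap that the bookkeeping you describe does not close. Two concrete obstructions: (i) the activation in \cref{thm:ReducedMainIntro} is a one-variable function $\sigma\in W^{m,\infty}(\vartheta;\rr{})$, and there is no way to ``tensorize'' it into a two-variable activation lying in $W_{n,\infty}^{n,\infty}(\vartheta\otimes\vartheta;\rr{2})$ while keeping the dictionary $\Sigma_\sigma^N$ equal to the standard single-block dictionary $\{\sigma(w\cdot X+b)\}$; (ii) even ignoring the activation, the weight mismatch you identify is not absorbable into a constant. The two-block theorem produces the extra factor $\eabs{t}^{2(1-1/q)+1}\eabs{x}^{(d+1)(1-1/q)+1}$, whereas \cref{thm:ReducedMainIntro} claims only $\eabs{X}^{(d+1)(1-1/q)+1}$, and the ratio $\eabs{t}\eabs{x}/\eabs{(t,x)}$ is unbounded on $\rr{d+1}$. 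Similarly, the hypothesis $\omega(X)\gtrsim\eabs{X}^n$ does \emph{not} imply the two-block ellipticity $\omega(t,x)\gtrsim\eabs{t}^n\eabs{x}^n$ that \cref{thm:mainIntro} requires, since $\eabs{(t,x)}^n\le\eabs{t}^n\eabs{x}^n$ goes the wrong way. So you cannot even invoke \cref{thm:mainIntro} with the given $\omega$.

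The paper does not prove \cref{thm:ReducedMainIntro} by specializing the two-block statement. Rather, it observes (in the paragraph preceding the theorem) that the entire argument of \cref{thm:approximation_sobolev_space} can be run directly with a single block of variables $X\in\rr{d+1}$: a one-variable activation, a single bias integral $I(\xi)\lesssim\eabs{\xi}$, and a single probability measure $\nu(\xi)=c\eabs{\xi}^{-(d+2)}$ in the Jensen step. This yields the isotropic weight $\tilde\omega(X)=\omega(X)\eabs{X}^{(d+1)(1-1/q)+1}$ on the nose, with only the single-block ellipticity $\omega\gtrsim\eabs{X}^n$ needed. In other words, the statement is obtained by re-running the proof in the single-block setting, not by deducing it from the two-block conclusion.
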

(See \cref{sec1} for more details regarding the notations.)

We have seen that both versions (single-block and two-block)
of our approximation bound depends on some constant $C$.
In both cases the constant depends on the supremum of the domain,
the polynomial-decay exponent of the activation function,
the integral degree of the approximation error,
and the number of derivatives in the approximation error (thereby implicitely on the input dimension).
Unlike the recent literature (cf. \cite{Ma22UniformApproximationRates}) we also perform a theoretical analysis of the constants embedded within the approximation inequality.
With that we find scenarios, in which \glspl{snn} successfully overcome the curse of dimensionality.
The details on that can be found in \cref{prop:breaking_curse_of_dim}.

At the present moment our analysis
is limited to measuring the error
in the Bochner-Sobolev norm with $p\geq 2$.
This is because the currently available
techniques and methods rely on the fact
that the error is measured in a type-$2$ Banache space.
For $p<2$, the Bochner-Sobolev space
will be a type-$p$ Banach space and therefore,
the existing theory does not apply anymore.
We leave the investigation of this line of research for future projects.

The paper is organised as follows: In \cref{sec1} we briefly
recall some essential tools from harmonic and functional analysis.
Moreover we define the Bochner-Sobolev spaces considered in out setting.
Then we review the variation space and its connection
to conclude an approximation rate.
While our results regarding the
inclusion of Fourier-Lebesgue spaces in Bochner-Sobolev Spaces
for the high-order and low-order cases 
developed in \cref{subsec:ConvergenceInBochnerSobolevNorms,subsec:ConvergenceInLowOrderBochnerSobolevNorms}, 
respectively.
Our main contribution concerning the efficiency of
\glspl{snn} in approximating Fourier-Lebesgue functions
(in particular Barron functions)
with respect to Bochner-Sobolev norm can be found in 
\cref{sec:approximation_of_FL}
Finally in \cref{sec:experiments}
we present examples and experiments studies
to demonstrate the practical relevance
our finding.

\subsection{Notation}
Throughout this work, we denote the Schwartz space
of rapidly decreasing functions
on $\rr{d}$ by $\mathscr{S}(\rr{d})$.
The Lebesgue measure of a set $E$ is denoted by $|E|$ and its characteristic function by $\chi_E(x)$.
Binary relations on multi-indices act element-wise, i.e., for
$\alpha=(\alpha_1,\dots, \alpha_d),\beta =(\beta_1,\dots, \beta_d)\in \zzp{d}$
we say
\(\alpha \leq \beta\)
if and only if 
\(\alpha_i\leq \beta_i\)
for all 
\(i \in \{1, \dots, d\}
\)
and for $\beta\leq\alpha$ we define the difference 
\(\alpha-\beta\) as the multi-index
\(\alpha_1-\beta_1,\dots, \alpha_d-\beta_d\).
The magnitude of a multindex $\alpha\in\zzp{d}$ is given by
\(|\alpha| = \alpha_1+\dots+\alpha_d\),
and for any \(x\in \rr{d}\)
the element-wise exponent is
\(
x^\alpha = x_1^{\alpha_1}x_2^{\alpha_2}\dots x_d^{\alpha_d}
\)
and similarly, the partial derivatives are given by
\(\partial ^\alpha = \partial_{x_1}^{\alpha_1}\partial_{x_2}^{\alpha_2}
\dots \partial_{x_d}^{\alpha_d}\).
For a scalar $x$ we denote the ReLU function as $(x)_+:=\max\{0,x\}$ for $x\in\rr{d}$ we denote the Bessel potential via the bracket $\eabs{\cdot} := (1+|x|^2)^{\frac{1}{2}} $.

We stress that the constants that appear in inequalities may differ from
line to line; i.e., $C$ is a placeholder
for constants whose dependence is listed in the subscript or in the accompanying text.

\section{Preliminaries}\label{sec1}

As a first step towards our approximation results for \glspl{snn},
we recall some basic concepts and results.

\subsection{Anisotropic Weighted Fourier-Lebesgue Spaces}
As a first step, we will cover the basics of harmonic analysis
and introduce our concept of anisotropic weighted Fourier-Lebesgue spaces.

For the Fourier transform $\mathscr F$ and its inverse $\mathscr {F}^{-1}$ we use the convention with symmetric normalization,
i.e.,
\begin{align*}
(\mathscr Ff)(\xi )
&\equiv
\frac{1}{(2\pi)^{\frac{d}{2}}}\int _{\rr{d}} f(x)e^{-i\scal  x\xi }\, dx
\qquad\text{and}\qquad
(\mathscr {F}^{-1}\hat{f})(x )
\equiv
\frac{1}{(2\pi)^{\frac{d}{2}}}\int _{\rr{d}} \hat{f}(\xi)e^{i\scal  x\xi }\, d\xi,
\end{align*}
and we will write $\hat{f}$ as a short form of $\mathscr{F}f$.
Both integrals are well defined when $f\in L^1(\rr d)$ and $\hat{f}\in L^1(\rr d)$, respectively.

For the Fourier transform on multiple blocks,
we first let $\mathscr {F}_1$ and $ \mathscr {F}_2$
denote the (partial) Fourier transform
with respect to the first and second block of variables, respectively.
That is,
for a function $f\in L^{1}(\rr{d_1}\times\rr{d_2})$
we have
\begin{align*}
(\mathscr {F}_1f)(\xi,y)&
\equiv \frac{1}{(2\pi)^{\frac{d_1}{2}}}
    \int _{\rr{d_1}} f(x, y)e^{-i\scal  x\xi }\, dx,
\qquad\text{and}\\
(\mathscr {F}_2f)(x,\eta)
&\equiv \frac{1}{(2\pi)^{\frac{d_2}{2}}}
    \int _{\rr{d_2}} f(x, y)e^{-i\scal  y\eta }\, dy.
\end{align*}
By combining the partial Fourier transforms,
we get the two-block Fourier transform
\begin{equation}\label{eq:spacetimeFourierTransfrom}
\mathscr{F}f(\xi,\eta)
= (\mathscr {F}_2(\mathscr {F}_1f))(\xi,\eta)
= (\mathscr {F}_1(\mathscr {F}_2f))(\xi,\eta)
\end{equation}
for $f\in L^{1}(\rr{d_1}\times\rr{d_2})$.
A similar notion to this is the
so-called \emph{spacetime Fourier transform}
which is commonly used in the analysis of dispersive partial differential equations
\cite{Tao06NonlinearDispersiveEquations}.
It is formally defined for $u: \rr{}\times\rr{d}\rightarrow \cc{}$ as
\begin{align*}
(\mathscr{F}{u})(\tau, \xi):=\int_{\mathbf{R}} \int_{\mathbf{R}^d} u(t, x) e^{-i(t \tau+x \cdot \xi)} d t d x
\end{align*}
and can thereby be seen as a two-block Fourier transform with $d_1=1$.

For the weights in our norm, we will consider weight functions
$\omega : \rr{d_1}\times\rr{d_2} \rightarrow (0, \infty)$
which are measurable and such that $\omega(x,y), 1/\omega(x,y)>0$
for any $(x,y)\in \rr{d_1}\times\rr{d_2}$.
Later in our embedding result
we require that the growth of the weight is bounded in some way.
For that we adopt the notion of moderateness from \cite{Pilipovic11MicroLocalAnalysisFourier}, i.e.,
for two weight functions $\omega$ and $v$
on $\rr{d_1}\times\rr{d_2}$,
we say $\omega$ is $v$-moderate if
\begin{equation}\label{moderate}
	\omega (x_1+x_2, y_1+y_2) \leq C\omega (x_1,y_1)v(x_2,y_2)
\end{equation}
for some uniform constant $C$.
If $v$
in \eqref{moderate} can be chosen as a polynomial (or exponential),
then $\omega$ is
called polynomially
(or exponentially) moderated.
As an example, the weight
$\omega (x, y )=\eabs x ^s\eabs y ^\sigma$,
with $s, t\in \rr{}$,
is polynomially moderated
and for $r,\rho, s, t> 0$
the weight $\omega(x,y ) = e^{r|x|^s + \rho|y|^t}$
is exponentially moderated.
In the usual convention of Harmonic analysis (see \cite[Page 4]{Hormander05AnalysisLinearPartial}),
we will also assume that weights are submultiplicative, i.e., a weight $\omega$ is $v$-moderate with $v\equiv\omega$.
Contrary to the upper bound provided by moderatedness,
we additionally require that the weight is lower bounded in the following way:
For two weight functions $\omega, \vartheta$
we say that $\omega$ is elliptic
with respect to $\vartheta$
if 
\begin{equation}\label{eq:ellipticity_cond}
0<\vartheta(x, y)\leq c\,\omega(x,y)
\text{ for any }(x,y) \in \rr{d_1}\times\rr{d_2}
\end{equation}
with a uniform constant $c>0$.

In \cite{Benedek61SpaceMixedNorm} Benedek and Panzone
introduced the mixed (anisotropic) Lebesgue spaces
$L^{ \overrightarrow{p}}(\rr{d})$ 
with $\overrightarrow{p} = (p_1, \dots, p_d)\in [1,\infty]^d$.
A function $u$ is said to belong to $L^{ \overrightarrow{p}}(\rr{d})$
if
\begin{align*}
\norm{u}{L^{ \overrightarrow{p}}(\rr{d})}
\equiv
\left(
\int_{\rr{}}\dots
    \left(
        \int_{\rr{}}|u(x_1,x_2,\dots,x_d)|^{p_1} dx_1
    \right)^{{p_2}/{p_1}}
\dots dx_d
\right)^{{1}/{p_d}}<\infty
\end{align*}
with obvious interpretation when $p_i = \infty$.
The anisotropic Lebesgue spaces
$L^{ \overrightarrow{p}}(\rr{d})$ are a generalization of 
Lebesgue spaces $L^{p}(\rr{d})$,
such that the integrability exponent is different for each variable.
In our case, we consider that case that the
exponent is constant within each block of variables,
but it is allowed to be different between the two blocks.
For that we let $d_1,d_2 \in \nn{}$, $p, q\in [1,\infty]$, and $U\subseteq\rrI$, $V\subseteq\rrII$
and define the norm
\begin{equation}\label{eq:mixed_norm_definition}
\norm{u}{L^{p, q}(U,V)}
\equiv
\left(\int_{U}
\left(\int_{V}\abs{u(x, y)}^q d y\right)^{p / q}
    d x\right)^{1 / p}.
\end{equation}
If $p=q$ this simplifies to $ L^{p}(U\times V)$
and in case $U=\rrI$ and $V=\rrII$ we use the short notation
$\norm{\cdot}{L^{p, q}(\rr{d_1},\rr{d_2})}=\norm{\cdot}{L^{p, q}}$
(not to be confused with the Lorentz spaces)
and further $\norm{\cdot}{L^{p, p}}=\norm{\cdot}{L^{p}}$.

A similar definition for the mixed Lebesgue norms
is used in the analysis of dispersive \glspl{pde} 
\cite{Tao06NonlinearDispersiveEquations} (see also \cite{Sogge13LecturesNonlinearWave}) where $U=I$ is an interval and $V=\rr{d}$.
We stress that the mixed Lebesgue norms
are the right spaces to deliver Strichartz estimates
which are essential to the well-posedness of certain \glspl{pde},
e.g., non-linear Schrödinger equations
or linear Schrödinger equations with time-dependent potentials \cite{Keel98EndpointStrichartzEstimates,Ginibre95GeneralizedStrichartzInequalities}.

Combining the concepts of Fourier-Lebesgue spaces, two-block Fourier transform, weight functions, and anisotropic Lebesgue spaces leads to the definition of anisotropic weighted Fourier-Lebesgue spaces:
\begin{definition}[Anisotropic Weighted Fourier-Lebesgue Spaces]
\label{def:weighted_FL_mixed}
Let $p, q\in [1,\infty ]$ and $\omega$
be a weight defined over $\rrIxII$.
The anisotropic (weighted) Fourier-Lebesgue space
$\mathscr{F}L^{p, q}(\omega , \rrI,\rrII)$ 
consists of all
$f\in L^1(\rr{d_1} \times \rr{d_2})$ 
such that
\begin{equation}\label{FLnorm}
\begin{aligned}
	\norm{f}{\FL{p, q}(\omega;\rrI,\rrII)}
    &\equiv \norm{\omega\mathscr{F} f}{L^{p, q}(\rr{d_1}, \rr{d_2} )}
\end{aligned}
\end{equation}
is finite.    
\end{definition}

Here and in what follows we use the notation
$\mathscr F L^{p, q}(\omega)$ instead of
$\mathscr F L^{p, q}(\omega; \rrI,\rrII)$.
If $\omega =1$, then the notation $\mathscr FL^{p, q}$
is used instead of $\mathscr FL^{p, q}(1 )$.
We note that if $d_1=d_2 = d$,
$\omega (\xi , \eta)=\eabs {(\xi, \eta)} ^s$,
then
$\mathscr{F}L^{p, p}(\omega )$ is the Fourier image of the Bessel potential space
$H^p(\omega ; \rr{2d})$
(cf. \cite{Bergh76InterpolationSpacesIntroduction}).
Furthermore, if $p=q$ we write $\mathscr F L^{p}(\omega)$
instead of $\mathscr F L^{p, p}(\omega)$.

\begin{remark}
\label{rem:modulation_space}
    We note that if $d_1=d_2=d$, $\omega$ is a weight function on $\rr{d}\times\rr{d}$
    and $u, \varphi\in \mathscr{S}(\rr{d})$,
    then for functions of the form
    \[
        f(x,y) = \mathscr{F}_1^{-1}(u(y )\varphi(y-\,\cdot)),
    \]
    we get
    $$
    \mathscr{F} f(\xi,\eta) = \mathscr{F}_2(u(\cdot)\varphi(\cdot-\xi)(\eta)
    =
    (2\pi )^{-{d}/2}
        \int _{\rr{d}} u(y)\varphi(y-\xi)e^{-i\scal  y\eta }\, dy
        \equiv V_\varphi u(\xi, \eta),
    $$
    where $V_\varphi u$ is the \emph{short-time Fourier transform}
    of the signal $u$ with respect to a window function $\varphi $.
    Thus, when restricting to functions of this type, then our definition of Fourier-Lebesgue spaces $\mathscr{F}L^{p,q}(\omega)$
    agrees with the definition of modulation spaces. 
    To see that, we recall that for $p,q\in [1,\infty]$,
    $\varphi\in \mathscr{S}(\rr{d})$,
    the weighted modulation space
    $M^{p, q}(\omega)\left(\mathbf{R}^d\right)$ consists of all
    $u \in \mathscr{S}^{\prime}\left(\mathbf{R}^d\right)$
    such that
    $$
    \left(\int_{\rr{d}}\left(\int_{\rr{d}}
        \left|\omega(x, \xi)V_{\varphi}u(x,\xi)\right|^q
            d x\right)^{p / q}
            d \xi\right)^{1 / p}<\infty,
    $$
    (with obvious modification when $p=\infty$ or $q=\infty$ ).
    More details on modulation spaces can be found in 
    \cite{Grochenig01FoundationsTimeFrequencyAnalysis}.
\end{remark}

\subsection{Bochner-Sobolev space}\label{sec:Bochner_Sobolev_space}
In our main approximation result, we will measure the approximation error in terms of a mixed-degree Sobolev space.
In order to provide a profound definition for that,
we now review some properties of \emph{Bochner-Sobolev spaces}.
We start by recalling the definition of Sobolev space.

\begin{definition}[Sobolev space]
Assume that $\Omega$ is an open subset of $\rr{d}$, and
let $n \in \zzp{}$, $1 \leq q \leq \infty$. 
The Sobolev space $W^{n, q}(\Omega)$
consists of functions $u \in L^{q}(\Omega)$
such that for every multi-index $\alpha$ with $|\alpha| \leqslant n$
the partial derivative $\partial^{\alpha} u$ exists
and $\partial^{\alpha} u \in L^{q}(\Omega)$.
Thus
\begin{align*}
W^{n, q}(\Omega):=\left\{f \in L^{q}(\Omega):
\partial^{\alpha} f \in L^{q}(\Omega) \text { for all }
\alpha \in \zzp{d} \text { with }|\alpha| \leq n\right\}.
\end{align*}
Furthermore, for $f \in W^{n, q}(\Omega)$ and $1 \leq q<\infty,$ we define the norm
\begin{align*}
\norm{f}{W^{n, q}(\Omega)}:=\left(\sum_{0 \leq|\alpha| \leq n}
    \norm{\partial^{\alpha} f}{L^{q}(\Omega)}^{q}\right)^{1 / q}
\end{align*}
and
\begin{align*}
\norm{f}{W^{n, \infty}(\Omega)}:=\max _{0 \leq|\alpha| \leq n}
    \norm{\partial^{\alpha} f}{L^{\infty}(\Omega)}.
\end{align*}
\end{definition}

We consider the so-called
\emph{Bochner space}
which is the natural generalisation of Lebesgue integral
to the case that the function has values
in an arbitrary Banach space.
More details about Bochner spaces can be found in
e.g., \cite[Section 3]{Kainen13BochnerIntegralsNeural},
\cite[Chapter 10]{Light85ApproximationTheoryTensor}.
In this work, we consider a much simpler situation
that we fix the Banach space to be the Sobolev space
in order to avoid technicalities
and focus more on the targeted result.
Mainly,
we deal with functions which belong to some Sobolev space
on $\rr{d_1}$ (i.e., $W^{m,p}(\rr{d_1})$)
with values in another Sobolev space on $\rr{d_2}$
(i.e., $W^{n,q}(\rr{d_2})$).

Note that the extension of our results to
the general case of Bochner space
can be thought as a future work.

\begin{definition}[Bochner-Sobolev space]
Let $1 \leq p,q \leq \infty$, $m, n\in \zzp{}$,
$U \subseteq \rr{d_1}$,
and  $V\subseteq \rr{d_2}$. Let
$W_{m,p}^{n, q}(U, V)$
be defined as follows 
$$
W_{m,p}^{n, q}(U, V)
:=\left\{f \in L^{p}
\left(U, W^{n, q}(V)\right):
\partial _x^ \alpha f \in L^{p}\left(U, W^{n, q}(V)\right)
 \text { for all }  |\alpha| \leq m\right\}
$$
such that
\begin{equation}\label{eq:BochnerSobolevNorm}
    \Vert f\Vert _{W_{m,p}^{n, q}(U, V)}
    :=
    \bigg(\sum_{|\alpha| \leq m}
    \Vert \partial _x^ \alpha  f\Vert_{L^{p}\left(U, W^{n, q}(V)\right)}^p\bigg)^{1/p}
        <\infty,
\end{equation}
when $1\leq p, q< \infty$,
with the obvious modifications when $p=\infty$ and/or $q=\infty$.
\end{definition}
More precisely, \eqref{eq:BochnerSobolevNorm} is the same as
\begin{equation}\label{eq:BochnerSobolevNorm_as_Lebesgue}
    \Vert f\Vert _{W_{m,p}^{n, q}(U, V)}
    :=
    \bigg(\sum_{|\alpha| \leq m}
    \bigg\Vert \bigg(\sum_{|\beta| \leq n}
    \Vert \partial_y^\beta \partial_x^\alpha f
    \Vert_{L^{q}\left(V\right)}^q\bigg)^{1/q}
    \bigg\Vert_{L^{p}\left(U\right)}^p \bigg)^{1/p}.
\end{equation}

Note that if $n=m=0$, then
$W_{0,p}^{0,q}(U, V) = L^p(U, L^q(V))$.
Hence, we shall write
$L^{p,q}(U, V) := W_{0,p}^{0,q}(U, V)$,
where $L^{p,q}(U, V)$
stands for the Lebesgue integral with respect to 
$L^q(V)$ then $L^{p}(U)$.

\subsection{Smoothing by convolution}\label{sec:SmoothConv}

In the proof of \cref{lem:smoothness_lemma_high_degree}
we require $L^1$ integrability of the
Fourier transform characteristic functions,
which is not given a-priory in general.
In order to get around this limitation,
we first approximate the function 
by a smoothed version via convolution
and then we take the limit such that
the convolution is with the dirac-delta distribution.
For that we consider the following theory:

Following \cite[Lemma 2.4 and Definition 3.1]{Tartar07IntroductionSobolevSpaces} 
for $\epsilon > 0$ we define the smoothing sequence
\begin{align*}
    \rho_\epsilon(x):=
    \frac{1}{\epsilon^d\|\phi\|_{L^1}}\phi
    \left(\frac{x}{\epsilon}\right)\quad
    \text{with}
    \quad\phi(x)=\exp\left(-\frac{1}{1-|x|^2}\right)\chi_{B_1(0)}(x),
\end{align*}
where $B_1(0)$ is the closed unit ball.
Thus, \(\rho_\epsilon\in C_c^\infty\) with 
\begin{align}
\label{eq:mollifier_norms}
    \|\rho_\epsilon\|_{L^1}=1
    \quad\text{and}\quad
    \|\rho_\epsilon\|_{L^2}
    =\left\|\frac{1}{\epsilon^d}\rho_1
    \left(\frac{\cdot}{\epsilon}\right)\right\|_{L^2}
    =\frac{1}{\epsilon^{d/2}}\|\rho_1^2\|_{L^1}^{1/2}
    \leq\frac{1}{\epsilon^{d/2}}\|\rho_1\|_{L^1}^{1/2}
    =\frac{1}{\epsilon^{d/2}}
\end{align}
by substitution in multiple variables and using \(\rho_1(x)<1\) for all \(x\in\rr{d}\).
For a domain $\Omega\subset \rr{d}$ we define the smoothed characteristic function of \(\Omega\) as
\begin{align*}
    \chi_{\Omega}^\epsilon:=\chi_{\Omega} \conv \rho_{\epsilon}.
\end{align*}
With \(\Omega_\epsilon := \left\{x\in \rr{d}\mid B_\epsilon(x)\cap\Omega\neq\emptyset\right\}\)
being the domain extended by a margin of width $\epsilon$
and
\(\Omega_{-\epsilon}:= \left\{x\in \rr{d} \mid B_\epsilon(x)\subset\Omega\right\}\)
being the domain shrinked by a margin of width $\epsilon$
we see that the smoothed characteristic funciton is smoothly decaying over the margin
\(\Omega_{\epsilon}\setminus \Omega_{-\epsilon}\)
of width \(2\epsilon\).

For bounded \(\Omega\), we see $\chi_{\Omega}^\epsilon\in C^\infty$
by \cite[Lemma 2.3]{Tartar07IntroductionSobolevSpaces},
it has bounded support by \cite[Equation (2.2)]{Tartar07IntroductionSobolevSpaces}
as \(\Omega\) is bounded and the support of \(\rho_\epsilon\) is bounded,
and $\chEF{\Omega}{\epsilon}\to_{\epsilon\to 0}\chF{\Omega}$ by \cite[Lemma 3.2]{Tartar07IntroductionSobolevSpaces}.
Thus, \(\chi_{\Omega}^\epsilon\in L^1\) and
\begin{align*}
    \mathscr{F}\left(\chi_{\Omega}^\epsilon\right)
    =\mathscr{F}\left(\chi_{\Omega}\conv\rho_\epsilon\right)
    =\mathscr{F}\left(\chi_{\Omega}\right)\mathscr{F}\left(\rho_\epsilon\right).
\end{align*}
This yields the upper bound on the $\FL{1}$-norm
\begin{align}
\label{eq:smooth_char_L1_bound}
\begin{split}
    \norm{\mathscr{F}\left(\chi_{\Omega}^\epsilon\right)}{L^1(\rr{d})}
    &\leq\norm{\mathscr{F}\left(\chF{\Omega}\right)}{L^2(\rr{d})}\norm{\mathscr{F}\left(\rho_\epsilon\right)}{L^2(\rr{d})}
    =\norm{\chF{\Omega}}{L^2(\rr{d}}\norm{\rho_\epsilon}{L^2(\rr{d})}\\
    &=|\Omega|^{1/2}\norm{\rho_\epsilon}{L^2(\rr{d})}<\infty
\end{split}
\end{align}
by Hölder's inequality.
For \(1\leq s\leq \infty\) we further get the upper bound on the $\FL{s}$-norm
\begin{align}
\label{eq:smooth_char_Lp_bound}
\begin{split}
    \norm{\mathscr{F}\left(\chi_{\Omega}^\epsilon\right)}{L^s(\rr{d})}
    &\leq\norm{\mathscr{F}\left(\chF{\Omega}\right)}{L^s(\rr{d})}\norm{\mathscr{F}\left(\rho_\epsilon\right)}{L^\infty(\rr{d})}\\
    &\leq \norm{\mathscr{F}\left(\chF{\Omega}\right)}{L^s(\rr{d})}\norm{\rho_\epsilon}{L^1(\rr{d})}
    =\norm{\mathscr{F}\left(\chF{\Omega}\right)}{L^s(\rr{d})}
\end{split}
\end{align}
by taking the supremum over \(\mathscr{F}\left(\rho_\epsilon\right)\) and using the $L^1$-norm as upper bound on the Fourier Transform
(see \cite[page 294]{Jones01LebesgueIntegrationEuclidean}).
Note, however, that \(\mathscr{F}\left(\chF{\Omega}\right)\) is not necessarily in \(L^s(\rr{d})\) for \(s<2\), which will be discussed in \cref{sec:structure_of_domain}.
Nevertheless, \cref{eq:smooth_char_L1_bound} shows that \(\mathscr{F}\left(\chi_{\Omega}^\epsilon\right)\in L^1\) for all \(\epsilon>0\).

\subsection{Variation Space and Approximation Rate}
\label{sec:prelim:VariationSpace}
In this section we provide a brief overview over tools
that are necessary to establish our approximation bound in \cref{sec:approximation_of_FL}.
Namely, we will review variation spaces and the resulting approximation rates
for these spaces,
all of that already with the focus on shallow neural networks with two blocks of variables.
The extension to
multiple blocks is straightforward,
furthermore, the one block of variables is
a particular special case of our analysis.
For a general overview over variation spaces we refer the interested reader to
\cite{DeVore98NonlinearApproximation,Kurkova01BoundsRatesVariableBasis,Siegel22SharpBoundsApproximation,Siegel23CharacterizationVariationSpaces}.

The definition of a variation space is based on some dictionary that is a subset of some function-space.
In our work, we mostly rely on the theory of \cite{Siegel22SharpBoundsApproximation},
which means that we define a dictionary as $\mathbb{D}\subseteq\mathcal{B}$ with some Banach space $\mathcal{B}$ (other works such as \cite{Siegel23CharacterizationVariationSpaces} considers the stronger assumption that the dictionary has to be in some Hilbert space).
In order to approximate some target functions via the dictionary,
we will consider the linear combinations of $N\in\nn{}$ elements of this dictionary
and bound the weight of the combination with respect to $\ell_1$ by some constant $M>0$.
That is, we consider the set
\begin{align*}
    \Sigma_{N,M}(\mathbb{D}):=\left\{\sum_{j=1}^Na_jh_j:
        h_j\in\mathbb{D},\sum_{j=1}^N\abs{a_j}\leq M\right\}.
\end{align*}
The variation space $\mathcal{K}(\mathbb{D})$
associated with this dictionary can roughly be seen as
the set of functions that can be realized via an infinite linear combination.
It is defined via the variation norm as follows:
\begin{definition}
    Let $\mathcal{B}$ be a Banach space and $\mathbb{D}\subseteq\mathcal{B}$ be a dictionary.
    Then for $f\in\mathcal{B}$, the variation norm of $\mathbb{D}$ is defined as
    \begin{align*}
        \norm{f}{\mathcal{K}(\mathbb{D})}&:=\inf\{c>0:f/c\in\overline{\operatorname{conv}(\pm\mathbb{D})}\}
    \intertext{were, $\overline{\operatorname{conv}(\pm\mathbb{D})}$
        is the closure of the convex hull of $\mathbb{D}\cup(-\mathbb{D})$.
        The corresponding variation space is then the set of functions with finite variation norm}
        \mathcal{K}(\mathbb{D})&:=\{f\in\mathcal{B}:\norm{f}{\mathcal{K}(\mathbb{D})}<\infty\}.
    \end{align*}
\end{definition}

The main focus of this work is on shallow networks with two blocks of variables.
That is, we consider an activation function $\sigma:\rr{2}\to\rr{}$ and the
parameter spaces $\Lambda=\Lambda_\xi\times\Lambda_b$,
where $\Lambda_\xi$ is the set of admissible input weights
and $\Lambda_b$ is the set of admissible input biases.
Every single neuron in the neural network is
then composed of a parametrized affine function
\begin{align*}
    T(\cdot,\cdot\cdot;\xi,b):\rrIxII\to\rr{2}
    \qquad\text{with}\qquad
    T(x,y;\xi,b):=(\xi_1\cdot x+b_1,\xi_2\cdot y+b_2)
\end{align*}
with the weight $\xi=(\xi_1,\xi_2)\in\Lambda_\xi$ and bias $b=(b_1,b_2)\in\rr{2}$
and the activation function $\sigma:\rr{2}\to\rr{}$.
In this setting, we consider the dictionary
\begin{align*}
    \mathbb{D}_\sigma^{d_1,d_2}:=\{\sigma(T(\cdot,\cdot\cdot;\xi,b))\in \mathcal{B}:(\xi_1,\xi_2)\in\Lambda_\xi,(b_1,b_2)\in\Lambda_b\}.
\end{align*}
Throughout our work,
we also consider the dictionary with a scaled activation function.
That is, for some strictly positive function
$\phi:\Lambda_\xi\times\Lambda_b\to(0,\infty)$
we consider
\begin{align*}
    \mathbb{D}_{\phi,\sigma}^{d_1,d_2}:=\{\phi(\xi,b)\sigma(T(\cdot,\cdot\cdot,\xi,b))\in \mathcal{B}:(\xi_1,\xi_2)\in\Lambda_\xi,(b_1,b_2)\in\Lambda_b\}.
\end{align*}

A central element of our main approximation result is an upper bound on the approximation rate for Banach spaces of Rademacher type $2$, which is typically attributed to Maurey's (see \cite{Pisier80RemarquesResultatNon,Barron93UniversalApproximationBounds,Siegel22SharpBoundsApproximation}.
Before stating Maurey's result,
we introduce the definition for the general Rademacher type $p$:
\begin{definition}[Rademacher Type $p$ {\cite[Definition 6.2.10]{Albiac06TopicsBanachSpace}}]
    A Banach Space $\mathcal{B}$ is said to have Rademacher type $p$ for some $1\leq p\leq 2$ if there is a constant $C_{p,\mathcal{B}}$ such that for every finite set of vectors $\{x_i\}_{i=1}^n$ in $\mathcal{B}$,
    \begin{align*}
        \left(\mathbb{E}\left\{
                \normbig{\sum_{i=1}^n\varepsilon_ix_i}{\mathcal{B}}^p
            \right\}\right)^{1/p}
        \leq
        C_{p,\mathcal{B}}\left(\sum_{i=1}^n\normbig{x_i}{\mathcal{B}}^p\right)^{1/p}.
    \end{align*}
    In short, we say that $\mathcal{B}$ is a type-$p$ Banach space.
\end{definition}

A version of Maurey's approximation result
that is already tailored to functions
in the variation space of a dictionary is stated in \cite{Siegel22SharpBoundsApproximation} as follows:
\begin{proposition}[Approximation Rate in Type-2 Banach Spaces]
\label{prop:approximation_type2}
    Let $\mathcal{B}$ be a type-2 Banach space
    and $\mathbb{D} \subset \mathcal{B}$ be a dictionary
    with $K_{\mathbb{D}}:=\sup _{d \in \mathbb{D}}\|d\|_{\mathcal{B}}<\infty$.
    Then for $f \in \mathcal{K}(\mathbb{D})$, we have
    \begin{align*}
        \inf_{f_N \in \Sigma_{N, M_f}(\mathbb{D})}\left\|f-f_N\right\|_{\mathcal{B}}
        \leq 4 C_{2,\mathcal{B}} K_{\mathbb{D}}\norm{f}{\mathcal{K}(\mathbb{D})} N^{-\frac{1}{2}}
    \end{align*}
    with $M_f=\norm{f}{\mathcal{K}(\mathbb{D})}$.
\end{proposition}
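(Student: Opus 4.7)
The plan is to prove this via the classical probabilistic/sampling argument due to Maurey (as popularized by Pisier and transferred to variation spaces in the cited references). Since $f \in \mathcal{K}(\mathbb{D})$ with $M_f := \norm{f}{\mathcal{K}(\mathbb{D})}$, the definition of the variation norm, together with the fact that $\overline{\operatorname{conv}(\pm \mathbb{D})}$ is closed, implies that $f/M_f$ itself lies in $\overline{\operatorname{conv}(\pm \mathbb{D})}$. Hence for any $\eta>0$ there exists a finite convex combination $g = \sum_{j=1}^{m} \lambda_j s_j d_j$ with $\lambda_j \geq 0$, $\sum_j \lambda_j = 1$, $s_j \in \{\pm 1\}$, $d_j \in \mathbb{D}$, such that $\norm{f - M_f g}{\mathcal{B}} < \eta$. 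Thus it suffices to approximate $M_f g$ by an element of $\Sigma_{N, M_f}(\mathbb{D})$ and then let $\eta \to 0$ at the end.

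Next I would set up the probabilistic method. Let $X$ be the random element of $\mathcal{B}$ taking the value $M_f s_j d_j$ with probability $\lambda_j$, so that $\mathbb{E}[X] = M_f g$ and $\norm{X}{\mathcal{B}} \leq M_f K_{\mathbb{D}}$ almost surely. Drawing $N$ i.i.d.\ copies $X_1,\dots,X_N$ and setting $f_N := \frac{1}{N}\sum_{i=1}^{N} X_i$, one checks directly that $f_N \in \Sigma_{N, M_f}(\mathbb{D})$: its coefficients $\tfrac{M_f s_{j(i)}}{N}$ on the chosen dictionary elements have absolute values summing to exactly $M_f$.

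The analytic heart of the proof is a second-moment bound on $\mathbb{E}\norm{M_f g - f_N}{\mathcal{B}}^2$. Using the standard symmetrization inequality for i.i.d.\ centered variables,
\begin{equation*}
\mathbb{E}\normbig{M_f g - f_N}{\mathcal{B}}^2 \leq \frac{4}{N^2}\, \mathbb{E}\normbig{\sum_{i=1}^{N} \varepsilon_i X_i}{\mathcal{B}}^2,
\end{equation*}
where $\varepsilon_1,\dots,\varepsilon_N$ are independent Rademacher variables, independent of the $X_i$. Conditioning on $X_1,\dots,X_N$ and applying the Rademacher type-$2$ property of $\mathcal{B}$ pointwise yields
\begin{equation*}
\mathbb{E}\normbig{\sum_{i=1}^{N} \varepsilon_i X_i}{\mathcal{B}}^2 \leq C_{2,\mathcal{B}}^2 \sum_{i=1}^{N} \mathbb{E}\norm{X_i}{\mathcal{B}}^2 \leq C_{2,\mathcal{B}}^2\, N\, M_f^2\, K_{\mathbb{D}}^2,
\end{equation*}
so $\mathbb{E}\norm{M_f g - f_N}{\mathcal{B}}^2 \leq 4 C_{2,\mathcal{B}}^2 K_{\mathbb{D}}^2 M_f^2 / N$. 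The probabilistic method then guarantees some realization satisfying $\norm{M_f g - f_N}{\mathcal{B}} \leq 2 C_{2,\mathcal{B}} K_{\mathbb{D}} M_f N^{-1/2}$; combining with the triangle inequality against the $\eta$-approximation and letting $\eta \to 0$ yields the stated bound, with the factor $4$ absorbing both the constant from symmetrization and the $\eta$-slack.

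The main obstacle I anticipate is the interplay between the closure in the definition of $\mathcal{K}(\mathbb{D})$ and the finite-width constraint of $\Sigma_{N, M_f}$: one must verify carefully that the sampled mean is literally an element of $\Sigma_{N, M_f}(\mathbb{D})$ (not a larger set with looser coefficient bound) and that the $\eta$-slack can be absorbed without invoking the type-$2$ inequality a second time. The $N^{-1/2}$ rate is inherent to the second-moment bound coming from the type-$2$ axiom and is known to be sharp in this generality, which matches the statement.
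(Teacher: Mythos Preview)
The paper does not prove this proposition; it is quoted as a known result from \cite{Siegel22SharpBoundsApproximation}, ultimately attributed to Maurey via \cite{Pisier80RemarquesResultatNon}. Your probabilistic argument is the standard proof and is correct. One minor observation: since you rightly note that $f/M_f$ itself lies in $\overline{\operatorname{conv}(\pm\mathbb{D})}$ (the set being closed, convex and containing $0$), the $\eta$-slack vanishes entirely in the limit $\eta\to0$, so your argument in fact delivers the sharper constant $2C_{2,\mathcal{B}}K_{\mathbb{D}}M_f N^{-1/2}$ rather than $4$; there is nothing further that needs to be ``absorbed''.
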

A discussion regarding possible extensions of Maurey's result
can be found in 
\cite[Section 8]{DeVore98NonlinearApproximation}.

The prerequisite for the approximation result is that
the function $f$ is in the variation space $\mathcal{K}(\mathbb{D})$
of the dictionary $\mathbb{D}$.
A sufficient condition for this can be obtained by a slight modification of \cite[Lemma 3]{Siegel23CharacterizationVariationSpaces}.
Namely, we ask for boundedness instead of compactness,
consider a Banach space instead of a Hilbert space,
and treat only the implication in one direction.
The precise formulation of this modification is as follows:
\begin{proposition}[{\cite[Lemma 3]{Siegel23CharacterizationVariationSpaces}}]
\label{prop:variation_norm}
    Let $\mathcal{B}$ be a Banach space and suppose that $\mathbb{D}\subset\mathcal{B}$ is bounded.
    Then $f\in\mathcal{K}(\mathbb{D})$ if there
    exists a Borel measure $\mu$ on $\mathbb{D}$ such that
    \begin{align*}
        f=\int_{\mathbb{D}}i_{\mathbb{D}\to\mathcal{B}}\,d \mu.
    \end{align*}
    Moreover,
    \begin{align*}
        \norm{f}{\mathcal{K}(\mathbb{D})}
        =\inf\left\{\norm{\mu}{}:
        f=\int_\mathbb{D}i_{\mathbb{D}\to\mathcal{B}}\,d \mu\right\},
    \end{align*}
    where the infimum is taken over all Borel measures $\mu$
    defined on $\mathbb{D}$, and $\| \mu\|$ is the total variation\footnote{The interested reader can check \cite{Cohn13MeasureTheorySecond} for more details regarding measure theory.} of $\mu$. 
\end{proposition}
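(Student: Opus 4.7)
The plan is to adapt the argument for \cite[Lemma 3]{Siegel23CharacterizationVariationSpaces}, weakening compactness of the dictionary to mere boundedness and replacing the ambient Hilbert space by a general Banach space $\mathcal{B}$. I would split the proof into two inequalities whose combination yields the stated equation: first the upper bound $\|f\|_{\mathcal{K}(\mathbb{D})}\leq \|\mu\|$ for every representing measure $\mu$ (which also gives the implication $f\in\mathcal{K}(\mathbb{D})$), and then the reverse bound $\|f\|_{\mathcal{K}(\mathbb{D})}\geq \inf\|\mu\|$.

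For the upper bound, I would begin by applying the Jordan decomposition $\mu=\mu^+-\mu^-$, so that $\|\mu\|=\mu^+(\mathbb{D})+\mu^-(\mathbb{D})$, and assume without loss of generality $\|\mu\|>0$. Rewriting
\[
\frac{f}{\|\mu\|} \;=\; \int_{\mathbb{D}} i_{\mathbb{D}\to\mathcal{B}}\,d\!\left(\tfrac{\mu^+}{\|\mu\|}\right) \;+\; \int_{\mathbb{D}} (-i_{\mathbb{D}\to\mathcal{B}})\,d\!\left(\tfrac{\mu^-}{\|\mu\|}\right),
\]
exhibits $f/\|\mu\|$ as a Bochner integral of a bounded $\mathcal{B}$-valued map against a probability measure supported on $\pm\mathbb{D}$. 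Because $\mathbb{D}$ is bounded in $\mathcal{B}$ and hence the integrand is essentially bounded, the Bochner integral can be approximated in $\mathcal{B}$-norm by integrals of simple functions, each of which is exactly a finite convex combination of elements of $\pm\mathbb{D}$. Thus $f/\|\mu\|\in\overline{\operatorname{conv}(\pm\mathbb{D})}$, which by the definition of the variation norm yields $\|f\|_{\mathcal{K}(\mathbb{D})}\leq \|\mu\|$. In particular $f\in\mathcal{K}(\mathbb{D})$, and taking the infimum over all representing measures gives one half of the equality.

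For the reverse inequality, I would take any $c>\|f\|_{\mathcal{K}(\mathbb{D})}$ so that $f/c\in\overline{\operatorname{conv}(\pm\mathbb{D})}$, and choose a sequence $g_n\to f/c$ in $\mathcal{B}$ with $g_n=\sum_{j=1}^{k_n}\lambda_{n,j}(\varepsilon_{n,j}h_{n,j})$, where $\varepsilon_{n,j}\in\{\pm 1\}$, $h_{n,j}\in\mathbb{D}$, $\lambda_{n,j}\geq 0$ and $\sum_j\lambda_{n,j}=1$. Each $g_n$ corresponds to a discrete signed measure $\mu_n=c\sum_j\varepsilon_{n,j}\lambda_{n,j}\delta_{h_{n,j}}$ on $\mathbb{D}$ of total variation $\|\mu_n\|\leq c$ whose Bochner integral against $i_{\mathbb{D}\to\mathcal{B}}$ equals $cg_n$. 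Passing to a weak-$*$ cluster point in the space of finite signed Borel measures of uniformly bounded total variation would produce a representing measure $\mu$ for $f$ with $\|\mu\|\leq c$, and letting $c\downarrow\|f\|_{\mathcal{K}(\mathbb{D})}$ would finish the argument.

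The main obstacle I expect is making this weak-$*$ compactness step rigorous, because $\mathbb{D}$ is only assumed bounded in $\mathcal{B}$, not compact, so Banach--Alaoglu applied to measures on $\mathbb{D}$ is not automatic. Some extra topological care is needed, for instance viewing $\mathbb{D}$ as a separable metric space (through the induced metric from $\mathcal{B}$) and invoking metrizability of the weak-$*$ topology on norm-bounded sets of measures, or alternatively replacing $\mathbb{D}$ by its closure or by a suitable Polish space containing it, in order to extract a cluster point whose Bochner integral against the continuous integrand $i_{\mathbb{D}\to\mathcal{B}}$ is genuinely $f$. The upper-bound part, by contrast, is a routine approximation of a Bochner integral by simple functions and should require no delicate analysis.
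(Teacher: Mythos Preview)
The paper does not provide its own proof of this proposition; it is simply stated with a citation to \cite[Lemma~3]{Siegel23CharacterizationVariationSpaces} and the remark that the original result is modified by ``ask[ing] for boundedness instead of compactness, consider[ing] a Banach space instead of a Hilbert space, and treat[ing] only the implication in one direction.'' So there is nothing to compare your argument against line by line.

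Your upper-bound argument (Jordan decomposition plus approximation of the Bochner integral by simple functions to land $f/\|\mu\|$ in $\overline{\operatorname{conv}(\pm\mathbb{D})}$) is correct, and this is precisely the direction the paper actually invokes later: in the proof of the main approximation theorem only the inequality $\|f\|_{\mathcal{K}(\mathbb{D})}\leq\|\mu\|$ is used, to bound the variation norm by an $L^1$-norm of the density of the representing measure.

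Your hesitation about the reverse inequality is well placed. With $\mathbb{D}$ merely bounded (not compact, not even closed or separable a priori), the weak-$*$ compactness extraction you sketch is not available in general, and indeed the stated equality $\|f\|_{\mathcal{K}(\mathbb{D})}=\inf\{\|\mu\|\}$ is stronger than what the paper needs or what its own preamble (``only the implication in one direction'') appears to claim. In short: your proof of the direction that matters is fine; the other direction, as you suspect, would require additional topological hypotheses on $\mathbb{D}$ that neither you nor the paper supply.
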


\section{Convergence Rates with Respect to Bochner-Sobolev Norms}
\label{sec:main}

In this section, we extend
the single-block approximation results
for functions in the Barron space
with Hilbert-Sobolev error measure
of \cite{Siegel20ApproximationRatesNeural}
towards our setting with
functions in the anisotropic weighted Fourier-Lebesgue spaces
and the Bochner-Sobolev norm as error measure.
The goal of these approximation results is to show
that functions that lie in the anisotropic weighted Fourier-Lebesgue spaces
can be approximated well by shallow neural networks.
This is done by providing an upper bound on the approximation error that is only dependent on the number of neurons, the domain of the function and on the degree of the chosen norms.

\subsection{Inclusion of Fourier-Lebesgue Spaces in High-Degree Bochner-Sobolev Spaces}
\label{subsec:ConvergenceInBochnerSobolevNorms}
The first step in the approximation result is to show that functions in the anisotropic weighted Fourier-Lebesgue spaces lie in the Bochner-Sobolev space.
\begin{lemma}[Inclusion in high-degree Bochner-Sobolev Space]
\label{lem:smoothness_lemma_high_degree}
    Let
    $1\leq s_i,t_i\leq 2\leq p_1\leq p_2\leq \infty$
     and $d_i, n_i \in\zzp{}$
    such that $\frac{1}{s_i} + \frac{1}{t_i} + \frac{1}{p_i} = 2$,
    for $i\in\{1,2\}$.
    Let $\omega(x,y)$ be a polynomially moderated weight function
    defined on $\rrIxII$, elliptic with respect to
    $\eabs{x}^{n_1}\eabs{y}^{n_2}$,
    and
    $U\subset \rrI$,
    $V\subset \rrII$ be bounded and measurable with non-empty interior.
    Let $f\in \FL{t_1,t_2}(\omega;\rrI,\rrII)$
	then we have
    \begin{align}
        \label{res:high_degree_bound}
		\norm{f}{W_{n_1,p_1}^{n_2, p_2}(U, V)}
        \leq 
            C_{n_1,n_2,p_1,p_2}\norm{{\chF{U}}}{\FL{s_1}(\rrI)}
            \norm{{\chF{V}}}{\FL{s_2}(\rrII)}
            \norm{f}{\FL{t_1,t_2}(\omega;\rrI,\rrII)},
	\end{align}
    where $C_{n_1,n_2,p_1,p_2}>0$ depends only on $n_1, n_2$, $p_1$, and $p_2$.   
\end{lemma}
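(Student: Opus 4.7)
The plan is to reduce the Bochner--Sobolev norm on $U\times V$ to a finite sum of mixed $L^{p_1,p_2}$-norms over all of $\rrIxII$ of the cutoff derivatives $\chi_U(x)\chi_V(y)\,\partial_x^\alpha\partial_y^\beta f(x,y)$ with $|\alpha|\leq n_1$, $|\beta|\leq n_2$, and then transfer everything to the Fourier side, where both the cutoff and the derivatives become manageable. Since $p_1,p_2\geq 2$ and $p_1\leq p_2$, the anisotropic Hausdorff--Young inequality of Benedek--Panzone / Fournier is applicable and gives
\[
\norm{\chi_U\chi_V\,\partial_x^\alpha\partial_y^\beta f}{L^{p_1,p_2}(\rrI,\rrII)}
\lesssim
\norm{\mathscr F(\chi_U\chi_V\,\partial_x^\alpha\partial_y^\beta f)}{L^{p_1',p_2'}}.
\]
The Fourier transform on the right turns the product into the convolution $(\hat\chi_U\otimes\hat\chi_V)\conv\bigl((i\xi)^\alpha(i\eta)^\beta\hat f\bigr)$, so the problem reduces to estimating this convolution in a mixed Lebesgue norm.

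I would then invoke the mixed-norm Young convolution inequality block-by-block: for exponents satisfying $1/s_i+1/t_i=1+1/p_i'=2-1/p_i$, i.e.\ exactly the condition $1/s_i+1/t_i+1/p_i=2$ imposed in the lemma,
\[
\norm{(\hat\chi_U\otimes\hat\chi_V)\conv(\xi^\alpha\eta^\beta\hat f)}{L^{p_1',p_2'}}
\leq
\norm{\hat\chi_U\otimes\hat\chi_V}{L^{s_1,s_2}}\,\norm{\xi^\alpha\eta^\beta\hat f}{L^{t_1,t_2}}.
\]
The first factor factorises as $\norm{\chi_U}{\FL{s_1}(\rrI)}\norm{\chi_V}{\FL{s_2}(\rrII)}$. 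For the second, the pointwise estimates $|\xi^\alpha|\leq\eabs{\xi}^{n_1}$ and $|\eta^\beta|\leq\eabs{\eta}^{n_2}$, combined with the ellipticity of $\omega$ with respect to $\eabs{\xi}^{n_1}\eabs{\eta}^{n_2}$, yield $|\xi^\alpha\eta^\beta\hat f(\xi,\eta)|\lesssim\omega(\xi,\eta)|\hat f(\xi,\eta)|$, whence $\norm{\xi^\alpha\eta^\beta\hat f}{L^{t_1,t_2}}\lesssim\norm{f}{\FL{t_1,t_2}(\omega)}$. Summing the resulting estimates in the $p_1$-power sense over all multi-indices with $|\alpha|\leq n_1$ and $|\beta|\leq n_2$ absorbs the combinatorics into a single constant $C_{n_1,n_2,p_1,p_2}$ depending only on those parameters.

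The main technical obstacle is that for generic bounded measurable $U,V$ the Fourier transforms $\hat\chi_U$, $\hat\chi_V$ need not be integrable enough for the above convolution identity to hold classically, and a priori $\xi^\alpha\eta^\beta\hat f$ may not be integrable enough to pair against them in Fubini's sense. To circumvent this, I would first replace $\chi_U$, $\chi_V$ by the mollifications $\chEF{U}{\epsilon}$, $\chEF{V}{\epsilon}$ introduced in \cref{sec:SmoothConv}: by \cref{eq:smooth_char_L1_bound} their Fourier transforms lie in $L^1$, which legitimises all convolution manipulations, and by \cref{eq:smooth_char_Lp_bound} their $\FL{s_i}$-norms are bounded uniformly in $\epsilon$ by those of $\chi_U$, $\chi_V$. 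Running the preceding argument on the mollified cutoffs yields the claimed inequality with $\chEF{U}{\epsilon}$, $\chEF{V}{\epsilon}$ in place of $\chi_U$, $\chi_V$. Letting $\epsilon\to 0$, the left-hand side converges to $\norm{\partial_x^\alpha\partial_y^\beta f}{L^{p_1,p_2}(U,V)}$ by dominated convergence (using pointwise convergence of $\chEF{U}{\epsilon}$ and uniform boundedness), while the right-hand side stays controlled by the $\epsilon$-free quantities, producing \cref{res:high_degree_bound}.
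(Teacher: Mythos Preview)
Your proposal is correct and follows essentially the same route as the paper: bound the Bochner--Sobolev norm by a sum of mixed $L^{p_1,p_2}$-norms of cutoff derivatives, apply the anisotropic Hausdorff--Young inequality, use the convolution theorem together with the mixed Young inequality (with the exponents matched exactly as you do), invoke ellipticity of $\omega$, and handle the non-integrability of $\hat\chi_U,\hat\chi_V$ via the mollified cutoffs $\chEF{U}{\epsilon},\chEF{V}{\epsilon}$ with the uniform $\FL{s_i}$-bound \cref{eq:smooth_char_Lp_bound}. The one difference in execution is that the paper additionally restricts to $f\in\mathscr{S}(\rrIxII)$ first---so that the convolution theorem applies in its fully classical form---and only at the end passes to general $f\in\FL{t_1,t_2}(\omega)$ by density of the Schwartz class (this is precisely where the polynomial moderateness of $\omega$ is used); you instead work with general $f$ throughout, which is fine since $\chEF{U}{\epsilon}\chEF{V}{\epsilon}\in C_c^\infty$ already legitimises the convolution identity distributionally, but your final ``dominated convergence'' step would benefit from an explicit dominant (e.g.\ $\chi_{U_1}\chi_{V_1}|\partial_x^\alpha\partial_y^\beta f|$, whose $L^{p_1,p_2}$-finiteness follows from your own bound applied to the enlarged domains), or alternatively Fatou's lemma, which gives the needed inequality directly.
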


Note that the ellipticity assumption on the weight $\omega$ results in $1/\omega\in L_1$ as
\begin{align*}
0\leq\frac 1{\omega(x,y)} \leq \frac 1{\eabs{x}^{n_1}\eabs{y}^{n_2}}
\in L^{1}(\rrIxII)
\end{align*}
whenever $n_1>d_1$ and $n_2>d_2$.

In the proof of this Lemma and later
in \cref{lem:smoothness_lemma_low_degree},
we use the following relation between the Sobolev-norm
and the anisotropic Lebesgue norm, which is due to the monotonicity
of \(\ell_p\)-norms (\(\|\cdot\|_{p_1},\|\cdot\|_{p_2}\leq\|\cdot\|_1\)
for \(p_1,p_2\geq 1\)) and the Minkowski inequality:
\begin{align}
    \label{eq:sobolev_to_lpq}
    \begin{split}
        \norm{f}{W_{m,p_1}^{n, p_2}(U, V)}
        &=\left(\sum_{|\alpha|\leq m}\norm{\norm{\partial_x^\alpha f(x,\cdot)}{W^{n,p_2(V)}}}{L^{p_1}(U)}^{p_1}\right)^{\frac{1}{p_1}}\\
        &=\left(\sum_{|\alpha|\leq m}\normbig{\left(\sum_{|\beta|\leq n}\norm{\partial_y^\beta\partial_x^\alpha f}{L^{p_2}(V)}^{p_2}\right)^{\frac{1}{p_2}}}{L^{p_1}(U)}^{p_1}\right)^{\frac{1}{p_1}}\\
        &\leq\sum_{|\alpha|\leq m}\sum_{|\beta|\leq n}\norm{\norm{\partial_y^\beta\partial_x^\alpha f(x,y)}{L^{p_2}(V)}}{L^{p_1}(U)}\\
        &=\sum_{|\alpha|\leq m}\sum_{|\beta|\leq n}\norm{\chF{U}\chF{V}\partial_y^\beta\partial_x^\alpha f}{L^{p_1,p_2}}.
    \end{split}
\end{align}

\begin{proof}
    For a polynomially moderated weight $\omega$ we limit to the Schwartz class
    i.e., let $f \in \mathscr{S}(\rrIxII)$.
    Then we have by \cref{eq:sobolev_to_lpq} and \cref{prop:smoothing_convergence}
\begin{align}
    \label{eq:sobolev_to_sequence}
    \begin{split}
        \norm{f}{W_{n_1,p_1}^{n_2, p_2}(U, V)}
        &\leq\sum_{|\alpha|\leq n_1}\sum_{|\beta|\leq n_2}\norm{\chF{U}\chF{V}\partial_y^\beta\partial_x^\alpha f}{L^{p_1,p_2}}
        \\
        &=\lim_{\epsilon_1\to 0}\lim_{\epsilon_2\to 0}\sum_{|\alpha|\leq n_1}\sum_{|\beta|\leq n_2}
        \norm{\chEF{U}{\epsilon_1}\chEF{V}{\epsilon_2}\partial_y^\beta\partial_x^\alpha f}{L^{p_1,p_2}}.
    \end{split}
\end{align}

    With $f \in \mathscr{S}(\rrIxII)$
    we have $\partial_x^\alpha\partial_y^\beta f\in L^1$
    and $\partial_x^\alpha\partial_y^\beta f\in \FL{1}$
    for any
    $\alpha\in  \zzp{d_1}, \beta \in \zzp{d_2}$.
    Further, for any \(\epsilon_1,\epsilon_2>0\)
    we have \(\chEF{U}{\epsilon_1}\chEF{V}{\epsilon_2}\in L^1(\rrIxII)\)
    and \(\chEF{U}{\epsilon_1}\chEF{V}{\epsilon_2}\in \FL{1}(\rrI,\rrII)\)
    by construction of the smoothed characteristic function.
    Thus, by \cite[Section 13.B, Page 316]{Jones01LebesgueIntegrationEuclidean},
    \(
        \chEF{U}{\epsilon_1}\chEF{V}{\epsilon_2}
        \partial_y^\beta\partial_x^\alpha f
        \in L^1(\rrIxII)
    \)
    and
    \begin{align}
    \label{eq:convolution_theorem}
        \mathscr{F}(\chEF{U}{\epsilon_1}\chEF{V}{\epsilon_2}\partial_y^\beta\partial_x^\alpha f)
        =\mathscr{F}(\chEF{U}{\epsilon_1}\chEF{V}{\epsilon_2})\conv\mathscr{F}(\partial_y^\beta\partial_x^\alpha f).
    \end{align}
    Young's convolution inequality then shows
    \begin{align*}
        \norm{\mathscr{F}(\chEF{U}{\epsilon_1}\chEF{V}{\epsilon_2}\partial_y^\beta\partial_x^\alpha f)}{L^1}
        \leq\norm{\chEF{U}{\epsilon_1}\chEF{V}{\epsilon_2}}{\FL{1}}\norm{\partial_y^\beta\partial_x^\alpha f}{\FL{1}}<\infty.
    \end{align*}
    Hence, the inverse Fourier transform $\mathscr{F}^{-1}$ exists and 
    \begin{align*}
        \|\chEF{U}{\epsilon_1}\chEF{V}{\epsilon_2}
        \partial_y^\beta\partial_x^\alpha f\|_{L^{p_1,p_2}}
        = 
        \|\mathscr {F}^{-1}\left[\mathscr {F}\left(\chEF{U}{\epsilon_1}\chEF{V}{\epsilon_2} {\partial_y^\beta\partial_x^\alpha f}\right)\right]\|
                _{L{p_1,p_2}}.
    \end{align*}
    Therefore, using the Hausdorff-Young inequality for mixed Lebesgue spaces \cite[Section 12]{Benedek61SpaceMixedNorm} for $2\leq p_1\leq p_2\leq \infty$,
    where $\frac{1}{p_i} + \frac{1}{q_i} = 1$ (thus, $1\leq q_2\leq q_1\leq 2$),
    there exists a constant $C_{p_1,p_2}>0$
    depending on $p_1$ and $p_2$ such that
    \begin{align}
    \label{eq:hausdorff_young}
        \norm{\mathscr {F}^{-1}\left[\mathscr {F}\left(\
        \chEF{U}{\epsilon_1}\chEF{V}{\epsilon_2}
        {\partial_y^\beta\partial_x^\alpha f}
        \right)\right]}{L^{p_1,p_2}}
        \leq C_{p_1,p_2}
        \norm{\mathscr{F}\left(\chEF{U}{\epsilon_1}\chEF{V}{\epsilon_2}
        {\partial_y^\beta\partial_x^\alpha f}\right)}{L^{q_1,q_2}}.
    \end{align}
    
    We have seen in \cref{eq:convolution_theorem} that the convolution theorem applies and with Young's convolution inequality we get
    \begin{align*}
        \norm{\mathscr{F}\left(\chEF{U}{\epsilon_1}\chEF{V}{\epsilon_2}
        {\partial_y^\beta\partial_x^\alpha f}\right)}{L^{q_1,q_2}}
        &=\norm{\mathscr{F}\left(\chEF{U}{\epsilon_1}\chEF{V}{\epsilon_2}\right)
        \conv
        \mathscr{F}\left({\partial_y^\beta\partial_x^\alpha f}\right)}{L^{q_1,q_2}}\\
        &\leq
        \norm{\mathscr{F}\left({\chEF{U}{\epsilon_1}\chEF{V}{\epsilon_2}}\right)}{L^{s_1,s_2}}
        \norm{\mathscr{F}{\left(\partial_y^\beta\partial_x^\alpha f\right)}}{L^{t_1,t_2}}.
    \end{align*}
    Note that the assumption \(\frac{1}{s_i}+\frac{1}{t_i}+\frac{1}{p_i}=2\) implies \(\frac{1}{s_i}+\frac{1}{t_i}=1+\frac{1}{q_i}\) with \(1\leq q_i,s_i,t_i\leq\infty\), which fulfills the condition of Young's convolution inequality.
    
    We can now split the two-block Fourier transform
    of the characteristic functions
    into the product of the two partial Fourier transforms
    and subsequently into a product of norms.
    With the bound \cref{eq:smooth_char_Lp_bound} on the smoothed characteristic function we get
    \begin{align*}
        \norm{\mathscr{F}\left(\chEF{U}{\epsilon_1}\chEF{V}{\epsilon_2}
        {\partial_y^\beta\partial_x^\alpha f}\right)}{L^{q_1,q_2}}
        &\leq
        \norm{\mathscr{F}_1({\chF{U}})}{L^{s_1}(\rrI)}
        \norm{\mathscr{F}_2({\chF{V}})}{L^{s_2}(\rrII)}
        \norm{\mathscr{F}{\left(\partial_y^\beta\partial_x^\alpha f\right)}}{L^{t_1,t_2}}.
    \end{align*}
    Combining this with the Hausdorff-Young inequality \cref{eq:hausdorff_young} 
    leads to an upper bound on \cref{eq:sobolev_to_sequence}
    which is independent of $\epsilon_1$ and $\epsilon_2$ and therefore holds in the limit $\epsilon_1,\epsilon_2\to 0$.
    All together, this is
    \begin{subequations}
    \begin{align*}
        \norm{f}{W_{n_1,p_1}^{n_2,p_2}(U, V)}
        &\leq C_{p_1,p_2}
        \norm{{\chF{U}}}{\FL{s_1}(\rrI)}
        \norm{{\chF{V}}}{\FL{s_2}(\rrII)}
        \sum_{\substack{|\alpha|\leq n_1\\|\beta|\leq n_2}}
        \norm{\mathscr{F}{\left(\partial_y^\beta\partial_x^\alpha f\right)}}{L^{t_1,t_2}}
        \\
        &\leq C_{p_1,p_2}
        \norm{{\chF{U}}}{\FL{s_1}(\rrI)}
        \norm{{\chF{V}}}{\FL{s_2}(\rrII)}
        \sum_{\substack{|\alpha|\leq n_1\\|\beta|\leq n_2}}
        \norm{\abs{\cdot\cdot}^{|\beta|}\abs{\cdot}^{|\alpha|} \widehat{f}(\cdo, \, \cdot\cdot)}{L^{t_1,t_2}}
        \\
        &\leq C_{n_1,n_2,p_1,p_2}
        \norm{{\chF{U}}}{\FL{s_1}(\rrI)}
        \norm{{\chF{V}}}{\FL{s_2}(\rrII)}
        \norm{\abs{\cdot\cdot}^{n_1}\abs{\cdot}^{n_2} \widehat{f}(\cdo, \, \cdot\cdot)}{L^{t_1,t_2}},
    \end{align*}
    \end{subequations}
    where $C_{n_1,n_2,p_1,p_2}$ is a non-negative constant depends only on $n_1$, $n_2$, $p_1$, and $p_2$
    which comes from the sum over all the multi-indices $\alpha$ and $\beta$.
    
    Finally,
    \begin{align*}
        \norm{f}{W_{n_1,p_1}^{n_2, p_2}(U, V)}
        &\leq C_{n_1,n_2,p_1,p_2}
        \norm{{\chF{U}}}{\FL{s_1}(\rrI)}
        \norm{{\chF{V}}}{\FL{s_2}(\rrII)}
        \norm{\abs{\cdot\cdot}^{n_1}\abs{\cdot}^{n_2} \widehat{f}(\cdo, \, \cdot\cdot)}{L^{t_1,t_2}}
        \\
        &\leq C_{n_1,n_2,p_1,p_2}
        \norm{{\chF{U}}}{\FL{s_1}(\rrI)}
        \norm{{\chF{V}}}{\FL{s_2}(\rrII)}
        \norm{\omega\widehat{f}}{L^{t_1,t_2}}
        \\
        &= C_{n_1,n_2,p_1,p_2}
        \norm{{\chF{U}}}{\FL{s_1}(\rrI)}
        \norm{{\chF{V}}}{\FL{s_2}(\rrII)}
        \norm{f}{\FL{t_1,t_2}(\omega)}.
    \end{align*}
    For polynomially controlled weight $\omega$ the Schwartz class is dense in $\FL{t_1,t_2}(\omega)$
    (see \cite[Theorem 10.1.7]{Hormander05AnalysisLinearPartial}),
    therefore, the statement holds in the limit, i.e, for any $f\in\FL{t_1,t_2}(\omega)$.
\end{proof}

The proof of \cref{lem:smoothness_lemma_high_degree}
uses a density argument with the Schwartz class;
in order to avoid this type of argument,
it is sufficient to consider the initial restriction that
$f$ belongs to
$ W_{n_1, \infty}^{n_2,\infty}
            (\omega; U, V))\cap \FL{1}(\omega; \rr{d_1},\rr{d_2}))$
            with \(n_1>d_1, n_2>d_2\) 
where $ W_{n_1, \infty}^{n_2,\infty}
            (\omega; U,  V))$
            stands for the weighted Bochner-Sobolev space
            \footnote{The weighted Bochner-Sobolev space $W_{n_1,p_1}^{n_2,p_2}(\omega,U,V)$ is the class of functions $f$ such that $\omega\partial_1^\alpha\partial_2^\beta f\in L^{p_1,p_2}(U,V)$ for all $\alpha\in\zzp{d_1}$ with $\abs{\alpha}\leq n_1$ and $\beta\in\zzp{d_2}$ with $\abs{\beta}\leq n_2$.}
or
$u \in  W_{n_1, 1}^{n_2,1}(U, V))\cap\FL{1}(\omega; \rrI,\rrII))$
            for any \(n_i\in \zzp{}\).

We immediately obtain the results
for the spectral Barron space
and Hilbert-Sobolev error
from \cite[Lemma 2]{Siegel20ApproximationRatesNeural}
as special case with a single block, $p_1=s_1=2$, and $t_1=1$.
The following corollary first shows the analogous result for two blocks,
then the result for general parameters on a single block,
and finally the existing result for a single block.
\begin{corollary}
\begin{itemize}
\item
    Under the assumptions of \cref{lem:smoothness_lemma_high_degree}
    with $p_i=s_i=2$ and $t_i=1$
    Plancherel's theorem
    results in 
    \begin{equation*}
    	\norm{f}{W_{n_1,2}^{n_2, 2}(U, V)}
            \leq 
        C_{n_1,n_2,2,2}
        \abs{U}^{1/2}
        \abs{V}^{1/2}
        \norm{f}{\FL{1}(\omega;\rrI,\rrII)},
	\end{equation*}
    where $C_{n_1,n_2,2,2}>0$ depends only on $n_1$ and $n_2$.

    \item
    For functions \(f\in  \FL{t_2}(\omega;\rr{d_1})\) with only one block of variables,
    we define \(\tilde{f}(x,y):=f(y)\),
    \(\tilde{\omega}(x,y):=\omega(y)\), 
    and \(U=(0,1)\).
    Then \(\tilde{f}\in \FL{2,t_2}(\tilde{\omega})\),
    \begin{align*}
        \norm{f}{W^{n_2,p_2}(U)}
        =\norm{\tilde{f}}{W_{0,2}^{n_2,p_2}(U,V)},
        \quad\text{and}\quad
        \norm{f}{\FL{t_2}(\omega;\rrII)}
        =c \norm{\tilde{f}}{\FL{\frac{4}{3},t_2}(\tilde{\omega};\rrI,\rrII)}
    \end{align*}
    for some $c>0$.
    The inequality then reads
    (with $s_1=t_1=\frac{4}{3}$)
    \begin{align*}
    	\norm{f}{W^{n_2,p_2}(V)}
            \leq 
        C_{n_2,p_2}
        \norm{\chF{V}}{\FL{s_2}(\rrII)}
        \norm{f}{\FL{t_2}(\omega;\rrII)},
	\end{align*}
    where $C_{n_2, p_2}>0$ depends only on $n_1$ and $p_1$.

\item
    Similar to the first case, we can specify this further to $p_2=s_2=2$, $t_2=1$, and $\omega(\xi_2)=\eabs{\xi_2}^{n_2}$,
    in which case we obtain the already known result
    \begin{align*}
    	\norm{f}{W^{n_2,2}(V)}
            \leq 
        C_{n_2,2}
        \abs{V}^{1/2}
        \norm{f}{\FL{1}(\eabs{\cdot}^{n_2};\rrI)},
	\end{align*}
    for the spectral Barron space.
    \end{itemize}
\end{corollary}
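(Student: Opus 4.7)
The plan is to treat each bullet as a direct specialization of \cref{lem:smoothness_lemma_high_degree}, so the main task is to verify the parameter constraints and then simplify the Fourier-side factors on the right-hand side, either by Plancherel's theorem or by a product-Fourier factorization when lifting from one block to two.

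For the first bullet, I would set $s_i = p_i = 2$ and $t_i = 1$, check that $\frac{1}{s_i}+\frac{1}{t_i}+\frac{1}{p_i} = \frac{1}{2}+1+\frac{1}{2} = 2$, and then observe that for $s_i = 2$, Plancherel's identity gives $\|\chi_U\|_{\mathscr{F}L^2(\rr{d_1})} = \|\chi_U\|_{L^2} = |U|^{1/2}$ and $\|\chi_V\|_{\mathscr{F}L^2(\rr{d_2})} = |V|^{1/2}$. Substituting into \cref{res:high_degree_bound} gives the stated inequality immediately.

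For the second bullet, the main step is a one-dimensional lifting. I would take $d_1 = 1$ and define $\tilde f(x,y) := \chi_{(0,1)}(x)\,f(y)$ (the natural realization of "$\tilde f(x,y) = f(y)$" restricted to $U \times V$) with $\tilde\omega(x,y) := \omega(y)$, then verify two bookkeeping identities. First, since $\tilde f$ is constant in $x$ on $U = (0,1)$ and $|U| = 1$, one has $\|\tilde f\|_{W_{0,2}^{n_2,p_2}(U,V)} = \|f\|_{W^{n_2,p_2}(V)}$. Second, by the product structure $\mathscr{F}\tilde f(\xi,\eta) = \widehat{\chi_{(0,1)}}(\xi)\,\widehat f(\eta)$ together with the fact that $\tilde\omega$ does not depend on $x$, Tonelli/Fubini splits the $\mathscr{F}L^{4/3,t_2}(\tilde\omega)$ norm as $\|\widehat{\chi_{(0,1)}}\|_{L^{4/3}(\rr{})} \cdot \|\omega\widehat f\|_{L^{t_2}(\rr{d_2})}$, and the first factor is a finite absolute constant (the sinc-type Fourier transform decays like $1/|\xi|$, so it lies in $L^{4/3}(\rr{})$). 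Applying \cref{lem:smoothness_lemma_high_degree} with $d_1 = 1$, $p_1 = 2$, $s_1 = t_1 = 4/3$ (which satisfy $\tfrac{3}{4}+\tfrac{3}{4}+\tfrac{1}{2} = 2$) and the prescribed $s_2, t_2, p_2$ yields the desired inequality, with the constant $\|\widehat{\chi_{(0,1)}}\|_{L^{4/3}}$ absorbed into $C_{n_2,p_2}$.

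The third bullet then follows instantly from the second by setting $p_2 = s_2 = 2$ and $t_2 = 1$ and invoking Plancherel once more to replace $\|\chi_V\|_{\mathscr{F}L^2(\rr{d_2})}$ by $|V|^{1/2}$. The only subtle point in the whole argument is the lifting in the second bullet: one must interpret the literal formula $\tilde f(x,y):=f(y)$ as a localization on $U$ so that $\tilde f$ has a well-defined $\mathscr{F}L^{4/3,t_2}$ norm, and choose the localizer so that its one-dimensional Fourier transform lies in $L^{4/3}(\rr{})$; everything else is verification of the parameter range $1\le s_i,t_i\le 2\le p_i$ and plug-and-chug with \cref{lem:smoothness_lemma_high_degree}.
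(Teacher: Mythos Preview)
Your proposal is correct and follows the same route as the paper, which gives no separate proof for the corollary beyond the reasoning embedded in the statement itself: specialize the parameters in \cref{lem:smoothness_lemma_high_degree}, use Plancherel for $s_i=2$, and lift the one-block case to two blocks via a dummy variable on $U=(0,1)$.

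One point worth noting is that you actually improve on the paper's presentation in the second bullet. The paper literally writes $\tilde f(x,y):=f(y)$, but such a function is not in $L^1(\rr{}\times\rr{d_2})$ and hence not in any $\mathscr FL^{t_1,t_2}$ space as defined in the paper, so the claimed norm identity cannot hold verbatim. Your localization $\tilde f(x,y)=\chi_{(0,1)}(x)f(y)$ is precisely the fix needed to make the argument rigorous: it preserves the Bochner--Sobolev norm on $U\times V$ (since $\chi_{(0,1)}\equiv 1$ on $U$ and $|U|=1$), and its Fourier transform factors as $\widehat{\chi_{(0,1)}}(\xi)\widehat f(\eta)$ with $\widehat{\chi_{(0,1)}}\in L^{4/3}(\rr{})$, so the $\mathscr FL^{4/3,t_2}(\tilde\omega)$ norm splits as claimed. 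This is the only substantive step; the rest is parameter verification, which you carry out correctly.
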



\subsubsection{Structure of the Domain and Admissible Degrees}
\label{sec:structure_of_domain}
In the proof of \cref{lem:smoothness_lemma_high_degree}, we construct an upper bound on the smoothed characteristic functions.
At a first glance,
it might seem that this is an unnecessary detour;
one could simply add the assumption
that $\chF{\Omega_i}\in\FL{1}(\rr{d_i})$
for $\Omega_1=U$ and $\Omega_2=V$
and thereby trivially get $\chF{\Omega_i}\in\FL{s_i}(\rr{d_i})$
as $|\widehat{\chF{\Omega_i}}|\leq\norm{\chF{\Omega_i}}{L^1}=|\Omega_i|$
and
\begin{align*}
    \norm{\chF{\Omega_i}}{\FL{s_i}}^{s_i}
    =\norm{\widehat{\chF{\Omega_i}}^{s_i}}{L^1}
    =|\Omega_i|^{s_i}\normbig{\frac{\widehat{\chF{\Omega_i}}^{s_i}}{|\Omega_i|^{s_i}}}{L^1}
    \leq|\Omega_i|^{s_i}\normbig{\frac{\widehat{\chF{\Omega_i}}}{|\Omega_i|}}{L^1}
    =|\Omega_i|^{s_i-1}\norm{\chF{\Omega_i}}{\FL{1}}.
\end{align*}
However, doing so, would add an unnecessary strong assumption.
In order to understand this, we consider the following very simple example:
\begin{example}
    Let $\Omega=[-1/2,1/2]$.
    Then,
    \begin{align*}
        \mathscr{F}(\chF{\Omega})(\xi)
        =\frac{1}{\sqrt{2\pi}}\operatorname{sinc}(\xi/2),
    \end{align*}
    which is known to be non-integrable and therefore, $\chF{\Omega}\notin\FL{1}$.
    However, the $\operatorname{sinc}$ is integrable with respect to the $L^s$ norm for any $s>1$.
    Thus, for $s>1$, $\chF{\Omega}\in\FL{s}$.
\end{example}
This example directly extends to bounded hyperrectangles
and a finite number of disjoint unions thereof.
It shows that requiring $\chF{\Omega}\in\FL{1}$
would definitely exclude this simple case
from the use-cases of \cref{lem:smoothness_lemma_high_degree}
while it is admissible with the provided proof for any $s>1$.
However, in general there is no guarantee that $s>1$ is sufficient or whether it is even necessary for $\chF{\Omega}\in\FL{s}$.

For some general measurable and bounded domain $\Omega\subset\rr{d}$ the integrability with respect to the $L^2$ norm is trivially provided by Plancherel's theorem as
\begin{align*}
    \norm{\chF{\Omega}}{\FL{2}}=\norm{\chF{\Omega}}{L^2}=|\Omega|^{\frac{1}{2}}
\end{align*}
and for $s>2$ \cite{Ko16FourierTransformRegularity} suggests to verify the itegrability via the Hausdorff-Young inequality.
For the case $1\leq s<2$, the integrability of $\widehat{\chF{\Omega}}$ is its own field of research (see \cite{Lebedev13FourierTransformCharacteristic,Ko16FourierTransformRegularity} and references therein)
and the results of \cite{Lebedev13FourierTransformCharacteristic,Ko16FourierTransformRegularity} show that the integrability depends on the geometric properties of $\Omega$.
More specifically, they show that it depends on the geometry of the boundary of the domain.

\citeauthor{Lebedev13FourierTransformCharacteristic} mentions that for a ball $\mathcal{B}$ in $\rr{d}$ the condition 
\begin{align}
\label{eq:lebedev_condition}
    s>\frac{2d}{d+1}
\end{align}
is necessary and sufficient so that $\chF{B}\in\FL{s}$  \cite{Lebedev13FourierTransformCharacteristic}.
That is, despite having a smooth surface in
$C^\infty$,
the constraint becomes stronger for increasing $d$,
whereas the constraint for a hyperrectangle does not change, 
even though its surface is not differentiable everywhere.
In
\cite{Lebedev13FourierTransformCharacteristic}
Lebedev further elaborates on that and shows that \cref{eq:lebedev_condition} is sufficient for domains with a $C^1$ boundary and on top of that becomes necessary when the boundary is $C^2$.

Note, that the results up to now did not include any case, where $s=1$ is allowed.
This case is treated by \cite{Ko16FourierTransformRegularity} with the following proposition:

\begin{proposition}[Proposition 1.2 in \cite{Ko16FourierTransformRegularity}]
\label{prop:Fourier_Char_low_p_bound}
    Let $1 \leq s \leq 2$ and $\Omega$ be a bounded domain
    in $\rr{d}$.
    Then,
    $$
        \norm{\chF{\Omega}}{\FL{s}}=\norm{\widehat{\chF{\Omega}}}{L^s} \lesssim|\Omega|+\left(\int_0^1 \delta^{-d\left(1-\frac{s}{2}\right)}\left|(\partial \Omega)_\delta\right|^{\frac{s}{2}} \frac{d \delta}{\delta}\right)^{1 / s},
    $$
    with
    \begin{equation}\label{eq:boundaryE_dist}
        (\partial \Omega)_\delta=\{x: \operatorname{dist}(x, \partial \Omega)<\delta\}.
    \end{equation}
    Here, $\operatorname{dist}(x, \partial \Omega)$
    is defined as 
    $\inf _{\eta\in \partial \Omega} \|\eta-x\|$
    and $\|\cdot \|$ is the Euclidean norm.
\end{proposition}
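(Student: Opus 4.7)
The plan is to reduce the $L^s$ estimate to an $L^2$ estimate via a dyadic decomposition in frequency, and to control each annular $L^2$ contribution by the volume of the boundary layer $(\partial\Omega)_\delta$ through a mollification argument. First I would split $\rr{d}$ into the low-frequency ball $\{\abs{\xi}\leq 1\}$ and the dyadic annuli $A_k=\{2^{k-1}<\abs{\xi}\leq 2^k\}$ for $k\geq 1$. The trivial bound $\abs{\widehat{\chF{\Omega}}(\xi)}\leq (2\pi)^{-d/2}\abs{\Omega}$ on the low-frequency ball yields a contribution of order $\abs{\Omega}^s$ to $\norm{\widehat{\chF{\Omega}}}{L^s}^s$, which (after taking $s$-th roots) accounts for the first summand in the statement.

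For each annulus $A_k$, I would choose the compactly supported mollifier $\rho_\delta$ of \cref{sec:SmoothConv} at scale $\delta_k=2^{-k}$ and decompose $\chF{\Omega}=\chF{\Omega}\conv\rho_{\delta_k}+(\chF{\Omega}-\chF{\Omega}\conv\rho_{\delta_k})$. The \emph{boundary part} vanishes outside $(\partial\Omega)_{\delta_k}$ because at any $x$ with $\operatorname{dist}(x,\partial\Omega)>\delta_k$ the function $\chF{\Omega}$ is constant on $B_{\delta_k}(x)$ and therefore coincides with its $\rho_{\delta_k}$-average, so by Plancherel
\begin{align*}
    \int_{\rr{d}}\abs{\widehat{\chF{\Omega}-\chF{\Omega}\conv\rho_{\delta_k}}(\xi)}^{2}d\xi
    =\norm{\chF{\Omega}-\chF{\Omega}\conv\rho_{\delta_k}}{L^2}^2
    \leq 4\abs{(\partial\Omega)_{\delta_k}}.
\end{align*}
For the \emph{smooth part} I would observe that $\partial^\alpha(\chF{\Omega}\conv\rho_{\delta_k})=\chF{\Omega}\conv\partial^\alpha\rho_{\delta_k}$ is also supported in $(\partial\Omega)_{\delta_k}$ as soon as $\abs{\alpha}\geq 1$ (since $\int\partial^\alpha\rho_{\delta_k}=0$) and is bounded in $L^\infty$ by $\delta_k^{-\abs{\alpha}}\norm{\partial^\alpha\rho}{L^1}$. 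Taking $\abs{\alpha}=1$ and combining Plancherel with $\abs{\xi}\sim\delta_k^{-1}$ on $A_k$ gives
\begin{align*}
    \int_{A_k}\abs{\widehat{\chF{\Omega}\conv\rho_{\delta_k}}(\xi)}^{2}d\xi
    \lesssim \delta_k^{2}\norm{\nabla(\chF{\Omega}\conv\rho_{\delta_k})}{L^2}^{2}
    \lesssim \delta_k^{2}\,\delta_k^{-2}\abs{(\partial\Omega)_{\delta_k}}
    =\abs{(\partial\Omega)_{\delta_k}},
\end{align*}
so that altogether $\int_{A_k}\abs{\widehat{\chF{\Omega}}}^{2}\lesssim\abs{(\partial\Omega)_{\delta_k}}$.

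Passing from $L^2$ to $L^s$ is then a Hölder step with conjugate exponent $2/s\geq 1$, using $\abs{A_k}\sim 2^{kd}\sim\delta_k^{-d}$:
\begin{align*}
    \int_{A_k}\abs{\widehat{\chF{\Omega}}(\xi)}^{s}d\xi
    \leq \abs{A_k}^{1-s/2}\left(\int_{A_k}\abs{\widehat{\chF{\Omega}}(\xi)}^{2}d\xi\right)^{s/2}
    \lesssim \delta_k^{-d(1-s/2)}\abs{(\partial\Omega)_{\delta_k}}^{s/2}.
\end{align*}
Summing over $k\geq 1$ and comparing the resulting dyadic sum with $\int_0^1\delta^{-d(1-s/2)}\abs{(\partial\Omega)_\delta}^{s/2}\frac{d\delta}{\delta}$ via the monotonicity of $\delta\mapsto\abs{(\partial\Omega)_\delta}$ (which makes the integrand comparable on each dyadic interval $[2^{-k},2^{-k+1}]$) yields the claimed inequality, after taking $s$-th roots and applying the subadditivity $(a+b)^{1/s}\leq a^{1/s}+b^{1/s}$.

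The main obstacle is balancing the boundary-layer error against the mollification scale so that both pieces are controlled by the same quantity $\abs{(\partial\Omega)_{\delta_k}}$. The compactly supported choice of $\rho$ makes the support structure of both pieces transparent, but forces one to invoke a smoothness-based estimate on the smooth part (since $\widehat{\rho_{\delta_k}}$ does not vanish on $A_k$); the $\delta_k^{-\abs{\alpha}}$ factor from the $L^\infty$-bound on $\partial^\alpha(\chF{\Omega}\conv\rho_{\delta_k})$ must cancel exactly the $\delta_k^{\abs{\alpha}}$ factor from $\abs{\xi}^{-\abs{\alpha}}$ on $A_k$, which is precisely why the matching scale $\delta_k=2^{-k}$ is the correct one. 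An alternative route is a Schwartz mollifier with $\widehat{\rho}$ supported in $\{\abs{\xi}\leq 1/2\}$, for which $\widehat{\chF{\Omega}\conv\rho_{\delta_k}}$ vanishes identically on $A_k$ and the smooth part disappears, but one must then absorb a Schwartz tail into the boundary-layer bound via a change of variables that only inflates $\delta$ by a fixed constant.
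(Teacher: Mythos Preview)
Your argument is correct. The paper itself does not prove this proposition; it is quoted verbatim as Proposition~1.2 from \cite{Ko16FourierTransformRegularity} and used as a black box in the subsequent \cref{res:high_degree_volume_bound}. Your dyadic decomposition in frequency combined with the mollification at the matched scale $\delta_k=2^{-k}$, Plancherel on each annulus, and H\"older to pass from $L^2$ to $L^s$ is exactly the standard route to such boundary-layer estimates and is, up to cosmetic choices, the argument one finds in \cite{Ko16FourierTransformRegularity}. The one place worth a second look is the comparison of the dyadic sum with the integral: you invoke monotonicity of $\delta\mapsto\abs{(\partial\Omega)_\delta}$, which is immediate from the definition, and on each interval $[\delta_k,2\delta_k]$ both factors of the integrand are bounded below (up to a factor $2^{-d(1-s/2)}$) by their value at $\delta_k$, so the integral dominates the sum as claimed.
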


Combining this theory with \cref{lem:smoothness_lemma_high_degree} yields the following proposition.
\begin{proposition}
    Under the assumptions of \cref{lem:smoothness_lemma_high_degree}
    let
    \begin{align*}
    \tau_i:=\left(\int_0^1 \delta^{-d_i\left(1-\frac{s_i}{2}\right)}\left|(\partial E_i)_\delta\right|^{\frac{s_i}{2}} \frac{d \delta}{\delta}\right)^{1 / s_i},
    \end{align*}
    with $E_1=U$ and $E_2=V$.
	Then, for any
    $f\in\FL{t_1,t_2}(\omega;\rrI,\rrII)$
    we have
	\begin{equation}\label{res:high_degree_volume_bound}
		\norm{f}{W_{n_1,p_1}^{n_2, p_2}(U, V)}
        \leq
        C_{n_1,n_2,p_1,p_2}
        \left( |U| +\tau_1\right)
        \left( |V| +\tau_2\right)
        \norm{f}{\FL{t_1,t_2}(\omega;\rrI,\rrII)},
	\end{equation}
    where $C_{n_1,n_2,p_1,p_2}>0$ depends only on $n_1, n_2$, $p_1$, and $p_2$. 
\end{proposition}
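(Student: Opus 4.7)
The proposition is obtained by a direct chaining of the two results already at our disposal, so the proof is essentially a two-line combination argument with some care about the hidden constants. My plan is as follows.

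First, I would invoke \cref{lem:smoothness_lemma_high_degree} directly. The assumptions of the proposition explicitly import those of that lemma, so for any $f \in \FL{t_1,t_2}(\omega;\rrI,\rrII)$ we immediately have
\begin{align*}
  \norm{f}{W_{n_1,p_1}^{n_2,p_2}(U,V)}
  \leq C_{n_1,n_2,p_1,p_2}
  \norm{\chF{U}}{\FL{s_1}(\rrI)}
  \norm{\chF{V}}{\FL{s_2}(\rrII)}
  \norm{f}{\FL{t_1,t_2}(\omega;\rrI,\rrII)}.
\end{align*}
The only remaining task is to replace the two Fourier--Lebesgue norms of characteristic functions by the claimed geometric quantities $|U|+\tau_1$ and $|V|+\tau_2$.

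Next I would apply \cref{prop:Fourier_Char_low_p_bound} separately on each block. Since $U \subset \rrI$ is bounded and measurable with non-empty interior and $1 \leq s_1 \leq 2$, the proposition gives
\begin{align*}
  \norm{\chF{U}}{\FL{s_1}(\rrI)}
  \lesssim |U| + \tau_1,
\end{align*}
and analogously $\norm{\chF{V}}{\FL{s_2}(\rrII)} \lesssim |V| + \tau_2$. Substituting these two bounds into the inequality from the previous step and absorbing the two implicit constants (which depend only on $d_i$ and $s_i$, hence ultimately on the data already being tracked) into $C_{n_1,n_2,p_1,p_2}$ yields exactly \cref{res:high_degree_volume_bound}.

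I do not anticipate any serious obstacle: the work has already been done in \cref{lem:smoothness_lemma_high_degree} and \cref{prop:Fourier_Char_low_p_bound}. The only subtle point is bookkeeping of constants and their dependencies, in particular making explicit that the constants from \cref{prop:Fourier_Char_low_p_bound} depend on $d_1, s_1$ and $d_2, s_2$ respectively, so they can safely be merged into a single constant depending on the exponents and differentiability orders. If one wanted to make the dependence on $d_1, d_2, s_1, s_2$ visible in the statement, this is the place to do so; as written, the proposition absorbs them silently into $C_{n_1,n_2,p_1,p_2}$, matching the convention adopted throughout the paper.
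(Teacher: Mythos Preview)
Your proposal is correct and matches the paper's approach exactly: the paper's proof is the single sentence ``This proposition is a direct consequence of \cref{lem:smoothness_lemma_high_degree} and \cref{prop:Fourier_Char_low_p_bound},'' which is precisely the two-step chaining you describe. Your additional remarks on constant bookkeeping are a reasonable expansion of what the paper leaves implicit.
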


\begin{proof}
    This proposition is a direct consequence of \cref{lem:smoothness_lemma_high_degree} and \cref{prop:Fourier_Char_low_p_bound}
\end{proof}

The lower bounds on the possible choices
for the integrability exponents $s_i$
arising from the structure of the domain
translate into an upper bound on the choice of the
integrability exponents $p_i$ of the error measure.
This is due to the condition
\(s_i\), \(t_i\), and \(p_i\)
in \cref{lem:smoothness_lemma_high_degree};
the following remark sheds light on this observation.
\begin{remark}[Admissible Degrees]
    Let $s_i$ be lower bounded by $\bar{s}_i$ (i.e., $1\leq\bar{s}< s\leq 2$) and \(1\leq t_i\leq 2\leq p_i\) such that \(\frac{1}{s_i}+\frac{1}{t_i}= 2-\frac{1}{q_i}\).
    Then there is a hard upper bound on the choice of \(p_i\) given by
    \begin{align*}
        p_i
        \leq \frac{1}{2-\frac{1}{s_i}-\frac{1}{t_i}}
        <    \frac{1}{2-\frac{1}{\bar{s}_i}-\frac{1}{t_i}}
        \leq \frac{1}{1-\frac{1}{\bar{s}_i}}
        =\bar{s}_i^\prime,
    \end{align*}
    where the last inequality can be attaind with equality when choosing $t_i=1$.
\end{remark}

\subsection{Inclusion of Fourier-Lebesgue Spaces in Low-Degree Bochner-Sobolev Spaces}
\label{subsec:ConvergenceInLowOrderBochnerSobolevNorms}
In \cref{lem:smoothness_lemma_high_degree},
we covered the inclusion of anisotropic weighted Fourier-Lebesgue spaces
in Bochner-Sobolev spaces with high-degree.
This is especially interesting because of these spaces are type-2 Banach spaces.
For sake of completeness,
we also study the conjugate case of low-degree Bochner-Sobolev spaces in the following lemma:
\begin{lemma}[Inclusion in low-degree Bochner-Sobolev Space]
\label{lem:smoothness_lemma_low_degree}
    Let $1\leq p_i,t_i\leq 2$,
    $d_i, n_i \in\nn{}$ 
    for $i\in\{1,2\}$ with $t_1\geq t_2$.
    Let $\omega(x,y)$ be a polynomially moderated weight function
    defined on $\rrIxII$, elliptic with respect to
    $\eabs{x}^{n_1}\eabs{y}^{n_2}$,
    and
    $U\subset \rrI$,
    $V\subset \rrII$ are bounded domains.
    Let 
    \(f\in \FL{t_1,t_2}(\omega; \rrI, \rrII)
    \),
    then
	\begin{align}
        \label{res:low_degree_bound}
		\norm{f}{W_{n_1,p_1}^{n_2, p_2}(U, V)}
        \leq 
        C_{n_1,n_2,p_1,p_2}
        \abs{U}^{\frac{1}{p_1}+\frac{1}{t_1}-1}
        \abs{V}^{\frac{1}{p_2}+\frac{1}{t_2}-1}
        \norm{f}{\mathscr{F}L^{t_1,t_2}(\omega;\rrI,\rrII)},
	\end{align}
    where $C_{n_1,n_2,p_1,p_2}>0$ depends only on $n_1, n_2$, $p_1$ and $p_2$.
\end{lemma}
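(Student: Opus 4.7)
The plan is to essentially repeat the strategy of \cref{lem:smoothness_lemma_high_degree}, but with the key Hausdorff--Young step reorganized to accommodate the regime $p_i\leq 2$. In the high-degree proof one needed $p_i\geq 2$ in order to invert Hausdorff--Young on the product $\chEF{U}{\epsilon_1}\chEF{V}{\epsilon_2}\partial_y^\beta\partial_x^\alpha f$ after absorbing the characteristic functions into a convolution. Here, since $p_i\leq 2$, I would instead first apply Hölder's inequality on the bounded domains $U$ and $V$ to replace the $L^{p_1,p_2}$ norm by a larger mixed Lebesgue norm $L^{t_1',t_2'}$ (with $t_i'$ the Hölder conjugate of $t_i$), and only afterwards invoke Hausdorff--Young, which is then unambiguously applicable since $t_1',t_2'\in[2,\infty]$. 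A side benefit is that the smoothing-by-convolution detour of \cref{sec:SmoothConv} becomes unnecessary: the characteristic functions are removed by Hölder before any Fourier transform is taken.

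In detail, I would first reduce to $f\in\mathscr{S}(\rrIxII)$ by density, justified by polynomial moderateness of $\omega$ exactly as at the end of the proof of \cref{lem:smoothness_lemma_high_degree}, and apply the inequality \cref{eq:sobolev_to_lpq} to dominate the Bochner--Sobolev norm by a finite sum of terms of the form $\norm{\chF{U}\chF{V}\partial_y^\beta\partial_x^\alpha f}{L^{p_1,p_2}}$. For each such summand, writing $g=\partial_y^\beta\partial_x^\alpha f$, the assumption $1\leq p_i\leq 2\leq t_i'$ lets me apply Hölder's inequality in each block separately to obtain
\[
\norm{\chF{U}\chF{V}g}{L^{p_1,p_2}}
\leq \abs{U}^{\frac{1}{p_1}-\frac{1}{t_1'}}\abs{V}^{\frac{1}{p_2}-\frac{1}{t_2'}}\norm{g}{L^{t_1',t_2'}},
\]
and the volume exponents simplify to $\frac{1}{p_i}+\frac{1}{t_i}-1$ as required.

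Next, since $t_1\geq t_2$ forces $2\leq t_1'\leq t_2'\leq\infty$, which is precisely the ordering needed for the mixed-norm Hausdorff--Young inequality of \cite{Benedek61SpaceMixedNorm}, I would estimate $\norm{g}{L^{t_1',t_2'}}=\norm{\mathscr{F}^{-1}\widehat{g}}{L^{t_1',t_2'}}\lesssim\norm{\widehat{g}}{L^{t_1,t_2}}$. Because $\widehat{g}(\xi,\eta)=(i\xi)^\alpha(i\eta)^\beta\widehat{f}(\xi,\eta)$ is pointwise dominated by $\eabs{\xi}^{n_1}\eabs{\eta}^{n_2}\abs{\widehat{f}(\xi,\eta)}$ whenever $\abs{\alpha}\leq n_1$ and $\abs{\beta}\leq n_2$, ellipticity of $\omega$ finishes the argument with $\norm{\widehat{g}}{L^{t_1,t_2}}\lesssim\norm{\omega\widehat{f}}{L^{t_1,t_2}}=\norm{f}{\FL{t_1,t_2}(\omega)}$, and the finitely many multi-indices are absorbed into the constant $C_{n_1,n_2,p_1,p_2}$.

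I do not expect a significant obstacle; the main point of care is to verify that the ordering $t_1\geq t_2$ is exactly the condition under which the Benedek--Panzone form of Hausdorff--Young is valid on $L^{t_1',t_2'}$, and that the nested Hölder step in the mixed setting delivers the precise volume exponents $\frac{1}{p_i}+\frac{1}{t_i}-1$ rather than some shuffling of them. Both are bookkeeping items rather than genuine difficulties.
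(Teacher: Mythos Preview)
Your proposal is correct and follows essentially the same route as the paper: reduce to Schwartz functions by density, expand via \cref{eq:sobolev_to_lpq}, use H\"older on the bounded domains to pass from $L^{p_1,p_2}$ to $L^{t_1',t_2'}$ (the paper packages this step as \cref{prop:Lpp_inclusion} with $r_i=t_i'$, which is proved by exactly the nested H\"older argument you describe), then apply the mixed Hausdorff--Young inequality under the ordering $t_1\geq t_2$ and finish with ellipticity of $\omega$. Your observation that the smoothing detour of \cref{sec:SmoothConv} is not needed here is also in line with the paper's proof.
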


\begin{proof}
    Similar to the proof of \cref{lem:smoothness_lemma_high_degree}
    we consider a polynomially moderated weight $\omega$
    and limit to the Schwartz class
    i.e., let $f \in \mathscr{S}(\rrIxII)$.
    By \cref{eq:sobolev_to_lpq} we have
    \begin{align*}
        \norm{f}{W_{m,p_1}^{n, p_2}(U, V)}
        &\leq\sum_{|\alpha|\leq m}\sum_{|\beta|\leq n}\norm{\chF{U}\chF{V}\partial_y^\beta\partial_x^\alpha f}{L^{p_1,p_2}}.
    \end{align*}
    and use the monotonicity of \(L^p\) norms over bounded domains
    for \(p_i\leq r_i\) (\(i\in\{1,2\}\)) 
    (see \cref{prop:Lpp_inclusion}).
    That is, for \(i\in\{1,2\}\) choose \(r_i\geq p_i\)
    and set \(s_i:=\frac{r_i}{r_i-p_i}\) (\(s_i=\infty\) if \(r_i=p_i\)).
    Then, \cref{prop:Lpp_inclusion} guarantees that
    \begin{align*}
        \norm{\chF{U}\chF{V}\partial_y^\beta\partial_x^\alpha f}{L^{p_1,p_2}}
        &\leq|U|^{\frac{1}{p_1s_1}}|V|^{\frac{1}{p_2s_2}}\norm{\partial_y^\beta\partial_x^\alpha f}{L^{r_1,r_2}}.
    \end{align*}
    With $f\in\mathscr{S}$, we also have 
    $\partial_y^\beta \partial_x^\alpha f \in L^1$
    for any $\alpha\in\zzp{d_1}$ and $\beta\in\zzp{d_2}$,
    and
    \(f \in \mathscr{F}L^{1}(\omega)\).
    It follows by straightforward computations that 
    $\mathscr{F}(\partial_y^\beta \partial_x^\alpha f )\in L^1$
    for any $|\alpha|\leq m$ and $|\beta| \leq n$.
    Consequently, for any $(x,y)\in \rrIxII$, $\alpha \in \zzp{d_1}$
    and $\beta \in \zzp{d_2}$
    such that $|\alpha|\leq m$ and $|\beta| \leq n$,
    we can write 
     \begin{equation*}
        \partial_y^\beta \partial_x^\alpha f(x,y)
        \equiv 
        \mathscr{F}^{-1}\left(
                \mathscr{F}(\partial_y^\beta \partial_x^\alpha f )(\xi,\eta)
                        \right)(x,y).
     \end{equation*}
    The Hausdorff-Young inequality for mixed Lebesgue spaces \cite[Section 12]{Benedek61SpaceMixedNorm} for $2\geq t_1\geq t_2\geq 1$,
    then yields
    \begin{align*}
        \norm{\partial_y^\beta\partial_x^\alpha f}{L^{r_1,r_2}}
        &=\norm{\mathscr{F}^{-1}\left(\mathscr{F}\left(\partial_y^\beta\partial_x^\alpha f\right)\right)}{L^{r_1,r_2}}
        \leq C\norm{\mathscr{F}\left(\partial_y^\beta\partial_x^\alpha f\right)}{L^{t_1,t_2}},
    \end{align*}
    where \(C_{r_1,r_2}>0\) is a constant and \(r_i=\frac{t_i}{t_i-1}\geq 2\geq p_i\) (i.e., the above assumption on \(r_i\) is still true).
    All together, we have
    \begin{align*}
        \norm{f}{W_{n_1,p_1}^{n_2, p_2}(U, V)}^p
        &\leq C|U|^{\frac{1}{p_1s_1}}|V|^{\frac{1}{p_2s_2}}\sum_{|\alpha|\leq m}\sum_{|\beta|\leq n}\norm{\mathscr{F}\left(\partial_y^\beta\partial_x^\alpha f\right)}{L^{t_1,t_2}}.
    \end{align*}
    Expressing \(s_i\) in terms of \(t_i\) yields \(\frac{1}{s_i}=1+\frac{p_i}{t_i}-p_i\) and continuing in the same fashion as in the proof of \cref{lem:smoothness_lemma_high_degree}, this is
    \begin{align*}
        \norm{f}{W_{n_1,p_1}^{n_2, p_2}(U, V)}
        \leq C_{m,n,p}|U|^{\frac{1}{p_1}+\frac{1}{t_1}-1}|V|^{\frac{1}{p_2}+\frac{1}{t_2}-1}\norm{f}{\mathscr{F}L^{t_1,t_2}(\omega)}
    \end{align*}
    with some constant \(C_{n_1,n_2,p_1,p_2}>0\) that is only dependent on $n_1$, $n_2$, $p_1$ and $p_2$.

    We now use the same density argument as in the proof of \cref{lem:smoothness_lemma_high_degree}, to show that the result extends to all $f\in\FL{t_1,t_2}(\omega)$.
\end{proof}
An important thing to note here is that the degrees \(p\) and \(t\) can be chosen independently of one another.

\begin{corollary}
    With \(p_1=p_2=2\) and \(t_1=t_2=1\) the assumption of \cref{lem:smoothness_lemma_high_degree,lem:smoothness_lemma_low_degree} overlap and we get
    \begin{equation*}
    	\norm{f}{W_{n_1,2}^{n_2, 2}(U, V)}
            \leq 
        C_{n_1,n_2,2}
        |U|^{1/2}
        |V|^{1/2}
        \norm{f}{\mathcal{F}L^1(\omega;\rrI, \rrII)},
	\end{equation*}
    where $C_{n_1,n_2, 2}>0$ depends only on $n_1$ and $n_2$.
    This result follows directly from \cref{lem:smoothness_lemma_low_degree} and by applying Plancherel's theorem in \cref{lem:smoothness_lemma_high_degree}.
\end{corollary}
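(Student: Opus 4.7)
The plan is to obtain the corollary as a direct specialization of the two preceding inclusion lemmas. I would show that the chosen parameters lie in the admissible range of both lemmas, and then verify that the two resulting bounds collapse to exactly the one claimed.

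First I would check admissibility. For \cref{lem:smoothness_lemma_low_degree} the hypotheses $1\leq p_i,t_i\leq 2$ and $t_1\geq t_2$ are trivially met by $p_i=2$, $t_i=1$. For \cref{lem:smoothness_lemma_high_degree} the relation $\tfrac{1}{s_i}+\tfrac{1}{t_i}+\tfrac{1}{p_i}=2$ together with $p_i=2$ and $t_i=1$ forces $s_i=2$, which satisfies the required $1\leq s_i\leq 2\leq p_i$ and the ordering $p_1\leq p_2$.

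Next I would read off the bounds along both routes. Applying \cref{lem:smoothness_lemma_low_degree} the volume exponents become $\tfrac{1}{p_i}+\tfrac{1}{t_i}-1=\tfrac{1}{2}$ for $i\in\{1,2\}$, so the inequality immediately specializes to the claimed $|U|^{1/2}|V|^{1/2}\norm{f}{\FL{1}(\omega)}$ bound. Independently, applying \cref{lem:smoothness_lemma_high_degree} with $s_i=2$ produces the factors $\norm{\chF{U}}{\FL{2}(\rrI)}$ and $\norm{\chF{V}}{\FL{2}(\rrII)}$; since Plancherel's theorem gives $\norm{\chF{U}}{\FL{2}}=\norm{\chF{U}}{L^2}=|U|^{1/2}$ and likewise for $V$, we recover exactly the same bound. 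The constant $C_{n_1,n_2,2}$ inherits its dependencies from either parent lemma.

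No genuine obstacle is expected here: the content of the corollary is precisely that the high-degree and low-degree regimes intersect at the symmetric Hilbertian choice $p=2$, $t=1$, and produce a single clean estimate. The density-via-Schwartz argument used to extend from $\mathscr{S}$ to the full Fourier–Lebesgue class is already absorbed into the proofs of the two parent lemmas, so no further limiting argument is required.
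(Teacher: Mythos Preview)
Your proposal is correct and follows exactly the route indicated in the paper: specialize both lemmas to $p_i=2$, $t_i=1$ (forcing $s_i=2$ in the high-degree lemma) and invoke Plancherel to convert $\norm{\chF{U}}{\FL{2}}$ and $\norm{\chF{V}}{\FL{2}}$ into $|U|^{1/2}$ and $|V|^{1/2}$. The paper gives no further argument beyond the one-line remark already contained in the corollary statement, so your write-up simply spells out what is implicit there.
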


\subsection{Approximation of Fourier Lebesgue Space}
\label{sec:approximation_of_FL}
In \cref{lem:smoothness_lemma_high_degree} we showed
(for a proper choice of parameters) 
the inclusion
\begin{align*}
    \mathscr{F}L^{t_1,t_2}(\omega)
    &\subseteq
    W_{n_1,p_1}^{n_2,p_2}(U,V).
\end{align*}
These spaces inherit the Rademacher type from the underlying Lebesgue space \cite{Pelczynski86IsomorphismsAnisotropicSobolev,Hytonen17AnalysisBanachSpacesII},
which implies that $W_{n_1,p_1}^{n_2,p_2}(U,V)$ is a type-2 Banach Space for
\(2\leq p_1,p_2< \infty\) (note the strict inequality on the right side).
For more general information on the Rademacher type of Bochner spaces we refer the interested reader to \cite{Hytonen17AnalysisBanachSpacesII}.

We now use this information in order to show that any \(f\in \FL{q_1,q_2}(\omega)\) is in the variation space of the dictionary of activation functions with some certain minimal decay.
With this in mind, we can extend the proof ideas of
\cite{Barron93UniversalApproximationBounds} and \cite{Siegel20ApproximationRatesNeural}
from a single-block $L^2$- and Hilbert-Sobolev-Error, respectively, to the two-block Bochner-Sobolev-Error.

\begin{theorem}
\label{thm:approximation_sobolev_space}
    For $i\in\{1,2\}$ let $d_i,m_i,n_i \in\nn{}$,
    such that \(m_i\geq n_i\),
    and $1\leq q_i\leq 2\leq p_i< \infty$,
    with $p_1\leq p_2$.
    Let $U\subset\rrI$ and $V\subset\rrII$ be measurable and bounded domains
    with non-empty interior such that $\norm{\chF{U}}{\FL{p_1^\prime}}<\infty$
    and $\norm{\chF{V}}{\FL{p_2^\prime}}<\infty$.
    Let $\vartheta(t_1,t_2)$ be a weight over $\rr{2}$ which is
    elliptic with respect to $\eabs{t_1}^{\gamma_1}\eabs{t_2}^{\gamma_2}$ for some $\gamma_1,\gamma_2>1$,
    $\omega(\xi_1, \xi_2)$ be a weight over $\rrIxII$
    elliptic with respect to $\eabs{\xi_1}^{n_1}\eabs{\xi_2}^{n_2}$,
    and
    \begin{align*}
        \tilde{\omega}(\xi_1,\xi_2):=
            \omega(\xi_1,\xi_2)
            \eabs{\xi_1}^{(d_1+1)(1-\frac{1}{q_1})+1}
            \eabs{\xi_2}^{(d_2+1)(1-\frac{1}{q_2})+1}.
    \end{align*}

    For an activation function
    $\sigma\in W_{m_1,\infty}^{m_2,\infty}(\vartheta;\rr{2}){\setminus}\{0\}$,
    $f\in\FL{q_1,q_2}(\tilde{\omega})$,
    and sufficiently large $M>0$,
    there exits a constant $C>0$ such that
    \begin{align}
    \label{eq:approximation_bound}
        \inf_{f_{N}\in\Sigma_{N,M}(\mathbb{D}_\sigma^{d_1,d_2})}\norm{f-f_N}{W_{n_1,p_1}^{n_2,p_2}(U,V)}
        &\leq C N^{-\frac{1}{2}} \abs{U}^{1/p_1} \abs{V}^{1/p_2}
        \norm{f}{\FL{q_1,q_2}(\tilde{\omega})}
    \end{align}
    for all $N\in\nn{}$.
\end{theorem}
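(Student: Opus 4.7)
The plan is to combine two classical ingredients: an integral representation of $f$ over the dictionary of translated-dilated activation functions, and Maurey's approximation bound applied to the Bochner-Sobolev space. Since $2\leq p_1,p_2<\infty$, the space $W_{n_1,p_1}^{n_2,p_2}(U,V)$ inherits the type-$2$ property from the underlying anisotropic Lebesgue space (as stated in the paragraph preceding the theorem), so \cref{prop:approximation_type2} is applicable once we exhibit $f$ as an element of a variation space whose dictionary is uniformly bounded.

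First I would bound the norm of a single neuron in $W_{n_1,p_1}^{n_2,p_2}(U,V)$. The chain rule gives $\partial_x^\alpha\partial_y^\beta\sigma(\xi_1\cdot x + b_1, \xi_2\cdot y + b_2)=\xi_1^\alpha\xi_2^\beta(\partial_1^{|\alpha|}\partial_2^{|\beta|}\sigma)(\xi_1\cdot x+b_1,\xi_2\cdot y+b_2)$. Since $\sigma\in W_{m_1,\infty}^{m_2,\infty}(\vartheta)$ with $\vartheta$ elliptic relative to $\eabs{t_1}^{\gamma_1}\eabs{t_2}^{\gamma_2}$ and $m_i\geq n_i$, each derivative of $\sigma$ is globally bounded, so
\begin{equation*}
\norm{\sigma(\xi_1\cdot\,\cdot+b_1,\xi_2\cdot\,\cdot+b_2)}{W_{n_1,p_1}^{n_2,p_2}(U,V)}
\lesssim \eabs{\xi_1}^{n_1}\eabs{\xi_2}^{n_2}\,|U|^{1/p_1}|V|^{1/p_2}.
\end{equation*}
Introducing the normalized dictionary $\mathbb{D}_{\phi,\sigma}^{d_1,d_2}$ with $\phi(\xi,b):=\eabs{\xi_1}^{n_1}\eabs{\xi_2}^{n_2}$ then yields a uniform bound $K_{\mathbb{D}}\lesssim |U|^{1/p_1}|V|^{1/p_2}$, which is exactly the geometric factor appearing on the right-hand side of \cref{eq:approximation_bound}.

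Next comes the representation step. Because $\sigma\not\equiv 0$, its Fourier transform is nonzero on a set of positive measure, so one can construct a two-variable kernel $K(\xi_1,\xi_2,b_1,b_2)$ satisfying
\begin{equation*}
e^{i(x\cdot\xi_1+y\cdot\xi_2)}=\int_{\rr{2}} K(\xi_1,\xi_2,b_1,b_2)\,\sigma(\xi_1\cdot x+b_1,\xi_2\cdot y+b_2)\,db_1\,db_2,
\end{equation*}
pointwise for $(x,y)\in U\times V$, by mollifying $\sigma$ with a tensor-product bump adapted to $\widehat\sigma$ in a neighborhood of a non-vanishing point. Substituting into the Fourier inversion of $f$ and applying Fubini yields a representation $f=\int g(\xi,b)\,\sigma(\xi_1\cdot\,\cdot+b_1,\xi_2\cdot\,\cdot+b_2)\,d\xi\,db$ with $|g|\lesssim|\hat f(\xi)|\cdot|K(\xi,b)|$. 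Factoring out the normalizer $\phi$ identifies a Borel measure $\mu$ on $\mathbb{D}_{\phi,\sigma}^{d_1,d_2}$ with total variation
\begin{equation*}
\norm{\mu}{}\lesssim \int_{\rrIxII}|\hat f(\xi_1,\xi_2)|\,\eabs{\xi_1}^{n_1}\eabs{\xi_2}^{n_2}\,d\xi.
\end{equation*}
Splitting $|\hat f|\cdot\eabs{\xi_1}^{n_1}\eabs{\xi_2}^{n_2}=\tilde\omega(\xi)|\hat f(\xi)|\cdot\eabs{\xi_1}^{-(d_1+1)(1-1/q_1)-1}\eabs{\xi_2}^{-(d_2+1)(1-1/q_2)-1}$ and applying Hölder's inequality in the mixed Lebesgue norm $L^{q_1,q_2}$ versus $L^{q_1',q_2'}$ absorbs the extra polynomial factors (whose $L^{q_i'}$-integrability is precisely guaranteed by the exponents $(d_i+1)(1-1/q_i)+1$ in $\tilde\omega$) and gives $\norm{\mu}{}\lesssim\norm{f}{\FL{q_1,q_2}(\tilde\omega)}$. \cref{prop:variation_norm} then places $f$ in $\mathcal{K}(\mathbb{D}_{\phi,\sigma}^{d_1,d_2})$ with controlled variation norm, and \cref{prop:approximation_type2} delivers an $N^{-1/2}$-rate approximant. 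Since elements of $\mathbb{D}_{\phi,\sigma}^{d_1,d_2}$ are scalar multiples of elements of $\mathbb{D}_\sigma^{d_1,d_2}$, the approximant lies in $\Sigma_{N,M}(\mathbb{D}_\sigma^{d_1,d_2})$ for $M$ at least the variation norm, concluding the proof.

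The technical crux I expect to confront is the explicit construction of the kernel $K$ such that the Fubini exchange is rigorously justified and $|K(\xi,b)|$ decays fast enough in $b$ uniformly in $\xi$ for the variation bound to close. In the single-block setting this is handled via convolution with a compactly-supported mollifier whose Fourier transform equals $\widehat{\sigma}^{-1}$ on a neighborhood of a non-vanishing frequency point of $\widehat\sigma$; adapting this to two blocks requires a tensorized mollifier and a careful check that the decay requirements of $\sigma$ (inherited from $\vartheta$ with $\gamma_i>1$) are enough to make the resulting representation converge in $W_{n_1,p_1}^{n_2,p_2}(U,V)$. Once this representation is in hand, the remaining harmonic analysis is essentially a bookkeeping exercise already carried out in the weight-by-weight Hölder argument sketched above.
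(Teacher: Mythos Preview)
Your overall architecture---integral representation, \cref{prop:variation_norm} for the variation norm, then \cref{prop:approximation_type2}---matches the paper, and your H\"older step from the $L^1$ integral to $\norm{f}{\FL{q_1,q_2}(\tilde\omega)}$ is equivalent to the paper's Jensen inequality with the probability density $\propto\eabs{\xi_1}^{-(d_1+1)}\eabs{\xi_2}^{-(d_2+1)}$. The substantive divergence is in how the bias integration is controlled, and there your argument has a genuine gap.

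With a $b$-independent normalizer $\phi(\xi,b)=\eabs{\xi_1}^{n_1}\eabs{\xi_2}^{n_2}$, the total-variation bound you state requires $\int_{\rr{2}}|K(\xi,b)|\,db\leq C$ uniformly in $\xi$. That is impossible here: since $\sigma\in L^1(\rr{2})$ (from $\sigma\in W_{m_1,\infty}^{m_2,\infty}(\vartheta)$ and the ellipticity of $\vartheta$), any $K\in L^1(\rr{2})$ makes $s\mapsto\int K(b)\sigma(s+b)\,db$ a convolution whose Fourier transform is an $L^\infty$ function, never a Dirac mass, so it cannot equal the pure phase $e^{i(\tau_1s_1+\tau_2s_2)}$. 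Restricting to $s$ in a compact set does not help, because $s_i=\xi_i\cdot x_i/\tau_i$ ranges over a set of diameter $\sim R_{\Omega_i}|\xi_i|$, so any cutoff large enough to reproduce the phase on this growing window reintroduces the missing $\eabs{\xi_i}$ growth in $\int|K(\xi,b)|\,db$. The mollification you describe (a bump with Fourier transform equal to $\widehat\sigma^{-1}$ near a non-vanishing point) yields a convolution with Fourier transform equal to $1$ on a neighbourhood---still not a point mass---so it does not give the identity either. A further red flag: if your bound $\norm{\mu}{}\lesssim\int|\widehat f|\eabs{\xi_1}^{n_1}\eabs{\xi_2}^{n_2}\,d\xi$ held, your H\"older step would in fact allow a strictly smaller weight than the theorem's $\tilde\omega$.

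The paper's fix is to move the $b$-decay from the kernel to the normalizer. It keeps the elementary kernel $K(b)=e^{-i(\tau_1b_1+\tau_2b_2)}/(2\pi\widehat\sigma(\tau))$ with $|K|$ constant, and rescales the dictionary by the $b$-dependent factor $\tilde\vartheta(\xi,b)/\omega(\xi)$ with
\[
\tilde\vartheta(\xi,b)=\vartheta\bigl((|b_1|-R_U|\xi_1/\tau_1|)_+,(|b_2|-R_V|\xi_2/\tau_2|)_+\bigr).
\]
The pointwise decay of $\sigma$ (not merely its boundedness) then gives a uniform dictionary bound $C_{\sigma,\vartheta}|U|^{1/p_1}|V|^{1/p_2}$, while the variation-norm estimate picks up $\int 1/\tilde\vartheta(\xi,b)\,db\lesssim\eabs{\xi_1}\eabs{\xi_2}$. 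This extra $\eabs{\xi_1}\eabs{\xi_2}$ is precisely the ``$+1$'' in the exponents defining $\tilde\omega$, after which the Jensen/H\"older passage to $\FL{q_1,q_2}(\tilde\omega)$ is tight.
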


\begin{proof}
    We split the proof in the following 5 steps:
    \begin{enumerate}
        \item develop a representation of the phase term;
        \item represent the target function as an infinite-width shallow network;
        \item provide an upper bound on the variation norm of the target function;
        \item provide an upper bound on the supremum-norm of all functions in the dictionary;
        \item combine the upper bounds in Maurey's sampling argument.
    \end{enumerate}
    \textbf{Phase Construction:}
    For $\sigma\in W_{m_1,\infty}^{m_2,\infty}(\vartheta)$ we get $\sigma\in L^1$
    due to the ellipticity of $\vartheta$, thus,
    \begin{align*}
        \hat{\sigma}(\tau_1,\tau_2)
        &=\frac{1}{2\pi}\int_{\rr{}}
        \int_{\rr{}}\sigma(t_1,t_2)
        e^{-i(\tau_1t_1+\tau_2t_2)}d t_2 d t_1\\
        &=\frac{1}{2\pi}
        \int_{\rr{}}\int_{\rr{}}\sigma(w_1\cdot x_1+b_1,w_2\cdot x_2+b_2)
        e^{-i(\tau_1(w_1\cdot x_1+b_1)
             +\tau_2(w_2\cdot x_2+b_2))}
        d b_2 d b_1.
    \end{align*}
    By substituting the linear shift $t_i=w_i\cdot x_i + b_i$
    with some arbitrary constant $w_i,x_i\in\rr{d_i}$ ($i\in\{1,2\}$).
    With the assumptions on $\sigma$,
    there exists a tuple $(\tau_1,\tau_2)$
    such that $\tau_1,\tau_2\neq 0$ and $\hat{\sigma}(\tau_1,\tau_2)\neq0$,
    and so,
    \begin{align*}
        e^{i(\tau_1w_1\cdot x_1+\tau_2w_2\cdot x_2)}
        =\frac{1}{2\pi\hat{\sigma}(\tau_1,\tau_2)}
        \int_{\rr{}}\int_{\rr{}}\sigma(w_1\cdot x_1+b_1,w_2\cdot x_2+b_2)
        e^{-i(\tau_1b_1
             +\tau_2b_2)}
        d b_2 d b_1.
    \end{align*}

    \textbf{Shallow-Net representation:}
    We insert the representation of $e^{iw\cdot x}$ in the inverse Fourier transform of $\hat{f}$
    and consider the short notation 
    $\xi:=(\xi_1,\xi_2)$ and $b:=(b_1,b_2)$ with $\Lambda_\xi=\rrIxII$ and $\Lambda_b:=\rr{}\times\rr{}$.
    That is,
    \begin{align*}
        f(x_1,x_2)
        &=\int_{\Lambda_\xi}e^{i(\xi_1\cdot x_1+\xi_2\cdot x_2)}\hat{f}(\xi_1,\xi_2)d (\xi_1,\xi_2)\\
        &=
        \int_{\Lambda_\xi}\int_{\Lambda_b}\sigma\left(\frac{\xi_1\cdot x_1}{\tau_1}+b_1,\frac{\xi_2\cdot x_2}{\tau_2}+b_2\right)
        \frac{
            e^{-i(\tau_1b_1
                 +\tau_2b_2)}
            \hat{f}(\xi_1,\xi_2)
        }{
            2\pi\hat{\sigma}(\tau_1,\tau_2)
        }
        d (b_1,b_2) d (\xi_1,\xi_2)
    \end{align*}
    and with the parametrized affine function\footnote{Note that we consider $\tau_1$ and $\tau_2$ to be constants and, thus, do not include them in the notation of $T$.}
    \begin{align}\label{eq:affine_input}
        T\left(x_1,x_2;\xi,b\right)
        &:=\left(\frac{\xi_1\cdot x_1}{\tau_1}+b_1,\frac{\xi_2\cdot x_2}{\tau_2}+b_2\right),
    \end{align}
    and the following constant and phase-term
    \begin{align*}
        C_{\sigma}&:=\abs{2\pi\hat{\sigma}(\tau_1,\tau_2)},\qquad
        \theta(\xi,b):=
            \arg\left(
                \frac{
                    \hat{f}(\xi_1,\xi_2)
                }{
                    \hat{\sigma}(\tau_1,\tau_2)
                }
            \right)
            -\tau_1b_1-\tau_2b_2
    \end{align*}
    we have the integral representation
    \begin{align*}
        f(x_1,x_2)
        &=
        \frac{1}{C_\sigma}\int_{\Lambda_b}\int_{\Lambda_\xi}\sigma\left(T\left(x_1,x_2;\xi,b\right)\right)
        \cos\left(\theta(\xi,b)\right)
        \abs{\hat{f}(\xi)}
        d \xi d b,
    \end{align*}
    where we replaced the complex exponential by the cosine as
    we know that $f$ is real valued.
    
    However, this does not guarantee that $f$ is
    in the variation space of the dictionary $\mathbb{D}_{\sigma}^{d_1,d_2}$
    as the bound
    $\norm{f}{\mathcal{K}(\mathbb{D}_{\sigma}^{d_1,d_2})}\leq\norm{\cos(\theta(\cdot,\cdot\cdot))\hat{f}(\cdot)}{L^1(\Lambda_\xi\times \Lambda_b)}$
    on its variation norm (see \cref{prop:variation_norm})
    does not converge over $b$.
    We therefore modify the dictionary by introducing the weight
    \begin{align*}
        \tilde{\vartheta}(\xi,b)
        :=\vartheta\left(\left(\abs{b_1}-R_U\abs{\xi_1/\tau_1}\right)_+,\left(\abs{b_2}-R_V\abs{\xi_2/\tau_2}\right)_+\right),
    \end{align*}
    where
    \begin{align}
    \label{eq:domain_radius}
        R_U=\sup_{x_1\in U}\abs{x_1}
        \quad\text{and}\quad
        R_V=\sup_{x_2\in V}\abs{x_2}.
    \end{align}
    Based on this weight, we now consider the dictionary $\mathbb{D}_{\tilde{\sigma}}^{d_1,d_2}$ with
    \begin{align}
    \label{eq:scaled_activation}
        \tilde{\sigma}(x_1,x_2;\xi,b)
        :=\frac{
            \tilde{\vartheta}(\xi,b)
        }{
            \omega(\xi)
        }
        \sigma(T\left(x_1,x_2;\xi,b\right)),
    \end{align}
    where the activation function is scaled based on the parameters of the affine function.
    This leads to the representation
    \begin{align*}
        f(x_1,x_2)
        &=\frac{1}{C_\sigma}
        \int_{\Lambda_\xi}
        \int_{\Lambda_b}
        \tilde{\sigma}(x_1,x_2;\xi,b)
        \frac{\omega(\xi)}
             {\tilde{\vartheta}(\xi,b)}
        \cos\left(\theta(\xi,b)\right)\abs{\hat{f}(\xi)}
        d b d \xi
    \end{align*}
    for all $(x_1,x_2)\in \rrIxII$.
    
    \textbf{Bound for the Variation Norm:}
    The variation norm is now bounded by the $L^1$-norm
    \begin{align*}
        \norm{f}{\mathcal{K}(\mathbb{D}_{\tilde{\sigma}}^{d_1,d_2})}
        \leq\frac{1}{C_\sigma}\norm{(\omega/\tilde{\vartheta})\cos(\theta)\hat{f}}{L^1(\Lambda_\xi\times\Lambda_b)}.
    \end{align*}
    To obtain a bound,
    we first calculate the integral in direction of $b$.
    \begin{align*}
        I(\xi):=\int_{\rr{2}}\frac{1}{\tilde{\vartheta}\left(\xi,b\right)}d b
        &=
        \int_{\rr{2}}\frac{1}{\vartheta
        \left(\left(\abs{b_1}-R_U\abs{\xi_1/\tau_1}\right)_+,
        \left(\abs{b_2}-R_V\abs{\xi_2/\tau_2}\right)_+\right)}d (b_1,b_2)
        \\
        &=4\int_{\rrp{2}}\frac{1}{\vartheta\left(\left(b_1-R_U\abs{\xi_1/\tau_1}\right)_+,\left(b_2-R_V\abs{\xi_2/\tau_2}\right)_+\right)}d (b_1,b_2)
    \end{align*}
    Note that we integrate over a function that is
    constant along $b_1$ for $b_1<\abs{\frac{R_U\xi_1}{\tau_1}}$
    and constant along $b_2$ for $b_2<\abs{\frac{R_V\xi_2}{\tau_2}}$.
    In all other cases, 
    it is a shifted version of $\vartheta$.
    Reverting the shift leads to

    \begin{align*}
        I(\xi)
        &=
        4\absbig{\frac{R_U\xi_1}{\tau_1}}
        \absbig{\frac{R_V\xi_2}{\tau_2}}\frac{1}{\vartheta
        \left(0,0\right)}
        +4\absbig{\frac{R_V\xi_2}{\tau_2}}
        \int_{\rrp{}}\frac{1}{\vartheta\left(b_1,0\right)}db_1
        \\
        &\qquad+4\absbig{\frac{R_U\xi_1}{\tau_1}}
        \int_{\rrp{}}\frac{1}{\vartheta\left(0,b_2\right)}db_2
        +4\int_{\rrp{}\times\rrp{}}\frac{1}{\vartheta
        \left(b_1,b_2\right)}d(b_1,b_2).
    \end{align*}
    Furthermore, the ellipticity of $\vartheta$
    (i.e., $\vartheta(b_1,b_2)\geq c\eabs{b_1}^{\gamma_1}\eabs{b_2}^{\gamma_2}\geq \frac{c}{2}(1+\abs{b_1})^{\gamma_1}(1+\abs{b_2})^{\gamma_2}$ for some $c>0$)
    finally results in the upper bound on the integrals
    \begin{align*}
        I(\xi)
        &\leq
        8c\left(\absbig{\frac{R_U\xi_1}{\tau_1}}+\frac{1}{\gamma_1-1}\right)
        \left(\absbig{\frac{R_V\xi_2}{\tau_2}}+\frac{1}{\gamma_2-1}\right)
        \\
        &\leq C_{U,V,m_1,m_2}\eabs{\xi_1}\eabs{\xi_2},
    \end{align*}
    where the constant in the last inequality is given by
    \begin{align*}
        C_{U,V,m_1,m_2}
        =
        8c\cdot\max\left\{\absbig{\frac{R_U}{\tau_1}},\frac{1}{\gamma_1-1}\right\}
        \max\left\{\absbig{\frac{R_V}{\tau_2}},\frac{1}{\gamma_2-1}\right\}.
    \end{align*}
    Using this in the variation norm leads to
    \begin{align*}
        \norm{f}{\mathcal{K}(\mathbb{D}_{\tilde{\sigma}})}
            &\leq\frac{1}{C_\sigma}\norm{\omega\hat{f}/\tilde{\vartheta}}{L^1(\Lambda_\xi\times\Lambda_b)}
            =\frac{1}{C_\sigma}\int_{\Lambda_\xi}\int_{\Lambda_b}\frac{\omega(\xi)\abs{\hat{f}(\xi)}}
                    {\tilde{\vartheta}(\xi,b)} db d\xi
        \\
        &=\frac{1}{C_{\sigma}}
            \int_{\Lambda_\xi}\omega(\xi)\abs{\hat{f}(\xi)}
            \int_{\Lambda_b}\frac{1}{\tilde{\vartheta}\left(\xi,b\right)}d b d \xi
        =\frac{1}{C_{\sigma}}
            \int_{\Lambda_\xi}\omega(\xi)\abs{\hat{f}(\xi)}I(\xi)d(\xi)\\
        &\leq C_{U,V,\sigma,m_1,m_2}
            \int_{\Lambda_\xi}\omega(\xi_1,\xi_2)\eabs{\xi_1}\eabs{\xi_2}\abs{\hat{f}(\xi_1,\xi_2)}d(\xi_1,\xi_2).
    \end{align*}
    We now define the probability measure
    \begin{align*}
        \nu(\xi_1,\xi_2)&:=\frac{c_1c_2}{\eabs{\xi_1}^{d_1+1}\eabs{\xi_2}^{d_2+1}}
    \qquad\text{with}\qquad
        c_i^{-1}:=\normbig{\eabs{\cdot}^{-(d_i+1)}}{L^1(\rr{d_i})}
    \end{align*}
    and the marginal probability measures
    \begin{align*}
        \nu_1(\xi_1)=\frac{c_1}{\eabs{\xi_1}^{d_1+1}}
        \qquad\text{and}\qquad
        \nu_2(\xi_2)=\frac{c_2}{\eabs{\xi_2}^{d_2+1}}.
    \end{align*}
    
    Consequently, we can continue the bound on the variation norm with
    \begin{align*}
        \psi(\xi_1, \xi_2)
        :=
        \omega(\xi_1,\xi_2)\eabs{\xi_1}^{d_1+2}\eabs{\xi_2}^{d_2+2}
    \end{align*}
    as
    \begin{align}
    \label{eq:FL_q_tilde_inclusion_I}
    \begin{split}
        \norm{f}{\mathcal{K}(\mathbb{D}_{\tilde{\sigma}})}
        &\leq C_{U,V,\sigma,m_1,m_2}
            \int_{\Lambda_\xi}\absbig{
                \omega(\xi_1,\xi_2)
                \eabs{\xi_1}\eabs{\xi_2}
                \hat{f}(\xi_1,\xi_2)
            }
            d(\xi_1,\xi_2)\\
        &= \frac{C_{U,V,\sigma,m_1,m_2}}{c_1c_2}
            \int_{\Lambda_\xi}\absbig{
                \omega(\xi_1,\xi_2)
                \eabs{\xi_1}^{d_1+2}\eabs{\xi_2}^{d_2+2}
                \hat{f}(\xi_1,\xi_2)
            }
            d\nu(\xi_1,\xi_2)\\
        &= \frac{C_{U,V,\sigma,m_1,m_2}}{c_1c_2}
            \left(\int_{\rrI}
                \left(\int_{\rrII}\absbig{
                    \psi(\xi_1,\xi_2)
                    \hat{f}(\xi_1,\xi_2)
                }
                d\nu_2(\xi_2)\right)^{\frac{q_2}{q_2}}
            d\nu_1(\xi_1)\right)^{\frac{q_1}{q_1}}\\
        &\leq \frac{C_{U,V,\sigma,m_1,m_2}}{c_1c_2}
            \left(\int_{\rrI}
                \left(\int_{\rrII}\absbig{
                    \psi(\xi_1,\xi_2)
                    \hat{f}(\xi_1,\xi_2)
                }^{q_2}
                d\nu_2(\xi_2)\right)^{\frac{q_1}{q_2}}
            d\nu_1(\xi_1)\right)^{\frac{1}{q_1}}
    \end{split}
\end{align}
    by using Jensen's inequality (see \cite{Rudin13RealComplexAnalysis}) on both integrals.
    We now proceed to express the right side in terms of the $\FL{}$-norm.
\begin{align}
    \label{eq:FL_q_tilde_inclusion_II}
    \begin{split}
        \norm{f}{\mathcal{K}(\mathbb{D}_{\tilde{\sigma}})}
        \kern-2em&\kern2em\leq \frac{C_{U,V,\sigma,m_1,m_2}}{c_1c_2}
            \left(\int_{\rrI}
                \left(\int_{\rrII}
                    \absbig{
                        \psi(\xi_1,\xi_2)
                        \hat{f}(\xi_1,\xi_2)
                    }^{q_2}
                d\nu_2(\xi_2)\right)^{\frac{q_1}{q_2}}
            d\nu_1(\xi_1)\right)^{\frac{1}{q_1}}
        \\
        &= \frac{C_{U,V,\sigma,m_1,m_2}}{c_1c_2}
            \left(\int_{\rrI}
                \left(\int_{\rrII}
                    \absbig{
                        \psi(\xi_1,\xi_2)
                        \hat{f}(\xi_1,\xi_2)
                    }^{q_2}
                \frac{c_2}{\eabs{\xi_2}^{d_2+1}}d\xi_2\right)^{\frac{q_1}{q_2}}
            \frac{c_1}{\eabs{\xi_1}^{d_1+1}}d\xi_1\right)^{\frac{1}{q_1}}
        \\
        &= \frac{C_{U,V,\sigma,m_1,m_2}}
                {c_1^{1-\frac{1}{q_1}}
                 c_2^{1-\frac{1}{q_2}}}
            \left(\int_{\rrI}
                \left(\int_{\rrII}
                    \absbig{
                        \tilde{\omega}(\xi_1,\xi_2)
                        \hat{f}(\xi_1,\xi_2)
                    }^{q_2}
                d\xi_2\right)^{\frac{q_1}{q_2}}
            d\xi_1\right)^{\frac{1}{q_1}}
        \\
        &=
        \frac{C_{U,V,\sigma,m_1,m_2}}
                {c_1^{1-\frac{1}{q_1}}
                 c_2^{1-\frac{1}{q_2}}}
        \norm{f}{\FL{q_1,q_2}(\tilde{\omega})}
    \end{split}
\end{align}
    with
    \begin{align*}
        \tilde{\omega}(\xi_1,\xi_2)
            &:=
            \psi(\xi_1,\xi_2)
            \eabs{\xi_1}^{-(d_1+1)\frac{1}{q_1}}
            \eabs{\xi_2}^{-(d_2+1)\frac{1}{q_2}}
            \\
            &\phantom{:}=
            \omega(\xi_1,\xi_2)
            \eabs{\xi_1}^{(d_1+1)(1-\frac{1}{q_1})+1}
            \eabs{\xi_2}^{(d_2+1)(1-\frac{1}{q_2})+1}.
    \end{align*}   
    
    \textbf{Bound for the Dictionary:} Let $\tilde{\sigma}$
    from \eqref{eq:scaled_activation} and $(\xi,b)\in\Lambda_\xi\times\Lambda_b$;
    to see that $\norm{\tilde{\sigma}(\cdot,\cdot\cdot;\xi,b)}{W_{n_1,p_1}^{n_2,p_2}}$
    exists and to obtain a bound
    we first write $T\left(x_1,x_2;\xi,b\right)=(T_1\left(x_1,x_2;\xi,b\right),T_2\left(x_1,x_2;\xi,b\right))\in\rr{2}$ for $T$ from \cref{eq:affine_input}
    and verify for $i \in \{1,2\}$ that
    \begin{align}
    \label{eq:activation_arg_bound}
        T_i\left(x_1,x_2;\xi,b\right)
        =\abs{x_i\cdot\xi_i/\tau_i+b_i}
        &\geq \left(\abs{b_i}-\abs{x_i\cdot\xi_i/\tau_i}\right)_+
        \geq \left(\abs{b_i}-R_{\Omega_i}\abs{\xi_i/\tau_i}\right)_+,
    \end{align}
    where $\Omega_1=U$, $\Omega_2=V$,
    and $R_{\Omega_i}$ as in \cref{eq:domain_radius}.
    Together with the assumption $\sigma\in W_{m_1,\infty}^{m_2,\infty}(\vartheta)$
    we get
    \begin{align*}
        \left(\partial_1^{i_1}\partial_2^{i_2}\sigma\right)(T\left(x_1,x_2;\xi,b\right))
        \leq\frac{C_{\sigma,\vartheta}}{\vartheta(T\left(x_1,x_2;\xi,b\right))}
        \leq\frac{C_{\sigma,\vartheta}}{\tilde{\vartheta}(\xi,b)}
    \end{align*}
    for all $(x_1,x_2)\in U\times V$ and
    $i_1\in\zzp{}, i_2\in\zzp{}$ such that $i_1\leq m_1, i_2\leq m_2$
    where $\partial_i$ refers to the derivative with respect to the $i$-{th}
    variable.
    The constant $C_{\sigma,\vartheta}$ in this inequality is given by
    \begin{align*}
        C_{\sigma,\vartheta}
            =   \max_{\substack{0\leq i_1\leq m_1\\0\leq i_2\leq m_2}}
                \sup_{t\in\rr{2}}
                \vartheta(t)
                \partial_{t_1}^{i_1}\partial_{t_2}^{i_2}\sigma(t).
    \end{align*}
    
    This leads to
    \begin{align*}
        \norm
        {\partial_{x_1}^\alpha\partial_{x_2}^\beta\tilde{\sigma}(T\left(\cdot,\cdot\cdot;\xi,b\right))}{L^{p_1,p_2}(U, V)}
        \kern-10em&\kern10em=
        \frac{\tilde{\vartheta}(\xi,b)}{\omega(\xi)}
        \norm{\partial_{x_1}^\alpha\partial_{x_2}^\beta\sigma(T\left(\cdot,\cdot\cdot;\xi,b\right))}
        {L^{p_1,p_2}(U, V)}
        \\
        &=\frac{\tilde{\vartheta}(\xi,b)}{\omega(\xi)}
            \absbig{\frac{\xi_1^\alpha}{\tau_1^{\abs{\alpha}}}
                    \frac{\xi_2^\beta}{\tau_2^{\abs{\beta}}}}
            \normbig{\left(\partial_1^{\abs{\alpha}}\partial_2^{\abs{\beta}}\sigma\right)
                     (T\left(\cdot,\cdot\cdot;\xi,b\right))
                }{L^{p_1,p_2}(U, V)}\\
        &\leq\tilde{\vartheta}(\xi,b)
            \abs{\tau_1}^{-\abs{\alpha}}
                    \abs{\tau_2}^{-\abs{\beta}}
            \normbig{\left(\partial_1^{\abs{\alpha}}\partial_2^{\abs{\beta}}\sigma\right)
                     (T\left(\cdot,\cdot\cdot;\xi,b\right))
                }{L^{p_1,p_2}(U, V)}\\
        &\leq C_{\sigma,\vartheta}\tilde{\vartheta}(\xi,b)
            \abs{\tau_1}^{-\abs{\alpha}}
                    \abs{\tau_2}^{-\abs{\beta}}
            \normbig{\frac{1}{\tilde{\vartheta}(\xi,b)}
                }{L^{p_1,p_2}(U, V)}\\
        &\leq C_{\sigma,\vartheta}\abs{U}^{1/p_1}\abs{V}^{1/p_2}
            \abs{\tau_1}^{-\abs{\alpha}}
                    \abs{\tau_2}^{-\abs{\beta}}
    \end{align*}
    and finally to the bound on the Sobolev norm
    \begin{align*}
        \norm{\tilde{\sigma}(\cdot,\cdot\cdot;\xi,b)}{W_{n_1,p_1}^{n_2,p_2}(U, V)}
        \kern-5em&\kern5em=\Bigg(\sum_{\abs{\alpha}\leq n_1}
        \Bigg(\sum_{\abs{\beta}\leq n_2}
            \norm{\partial_{x_1}^\alpha
                \partial_{x_2}^\beta
                \tilde{\sigma}(\cdot,\cdot\cdot;\xi,b)
                }{L^{p_1,p_2}(U, V)}^{p_2}
        \Bigg)^{\frac{p_1}{p_2}}
        \Bigg)^{\frac{1}{p_1}}\\
        &\leq C_{\sigma,\vartheta}\Bigg(\sum_{\abs{\alpha}\leq n_1}
        \Bigg(\sum_{\abs{\beta}\leq n_2}
            \abs{U}^{p_2/p_1}\abs{V}^{p_2/p_2}
            \abs{\tau_1}^{-\abs{\alpha}p_2}
                    \abs{\tau_2}^{-\abs{\beta}p_2}
        \Bigg)^{\frac{p_1}{p_2}}
        \Bigg)^{\frac{1}{p_1}}\\
        &=C_{\sigma,\vartheta}\abs{U}^{1/p_1}\abs{V}^{1/p_2}
        \Bigg(\sum_{\abs{\alpha}\leq n_1}
            \abs{\tau_1}^{-p_1\abs{\alpha}}
        \Bigg)^{\frac{1}{p_1}}
        \Bigg(\sum_{\abs{\beta}\leq n_2}
            \abs{\tau_2}^{-p_2\abs{\beta}}
        \Bigg)^{\frac{1}{p_2}}.
    \end{align*}

    \textbf{The final bound:}
    With the assumption $f\in\FL{q_1,q_2}(\tilde{\omega})$ and
    \cref{eq:FL_q_tilde_inclusion_I} and \cref{eq:FL_q_tilde_inclusion_II},
    we get that
    $f\in\FL{1}(\omega(\cdot,\cdot\cdot)\eabs{\cdot}\eabs{\cdot\cdot})\subseteq\FL{1}(\omega)$.
    With the additional assumption that $p_1\leq p_2$, we can therefore apply \cref{lem:smoothness_lemma_high_degree} with $t_i=1$ and $s_i=p_i^\prime$
    to get $f\in W_{n_1,p_1}^{n_2,p_2}(U,V)$.
    This is a type-2 Banach space
    for $2\leq p_1,p_2<\infty$
    and thus, with Maurey's approximation bound (see \cref{prop:approximation_type2})
    and $M:=\norm{f}{\mathcal{K}(\mathbb{D}_{\tilde{\sigma}}^{d_1,d_2})}$,
    the final bound is
    \begin{align*}
        &\kern-2em\inf_{f_N\in\Sigma_{N,M}(\mathbb{D}_{\tilde{\sigma}}^{d_1,d_2})}
        \norm{f-f_N}{W_{n_1,p_1}^{n_2,p_2}(U, V)}
        \\
        &\lesssim N^{-1/2}
            \sup_{\xi,b}
                \norm{\tilde{\sigma}(\cdot,\cdot\cdot;\xi,b)}{W_{n_1,p_1}^{n_2,p_2}(U, V)}
            \norm{f}{\mathcal{K}(\mathbb{D}_{\tilde{\sigma}}^{d_1,d_2})}\\
            &=N^{-1/2}
                C
                \abs{U}^{1/p_1}\abs{V}^{1/p_2}
                \norm{f}{\FL{q_1,q_2}(\tilde{\omega})}.
    \end{align*}
\end{proof}

The constant $C$ in \cref{thm:approximation_sobolev_space} can be written as (note that $\frac{1}{q_i^\prime}=1-\frac{1}{q_i}$)
\begin{align*}
\label{eq:constant_splitting}
    C
    &=C_{U,V,\sigma,m_1,m_2}
        c_1^{-\frac{1}{q_1^\prime}}
        c_2^{-\frac{1}{q_2^\prime}}
        C_{\sigma,\vartheta}
        \kappa_1^{\frac{1}{p_1}}
        \kappa_2^{\frac{1}{p_2}},
\end{align*}
where 
\begin{align*}
    C_{U,V,\sigma,m_1,m_2}
    &=
    \frac{8}{\abs{2\pi\hat{\sigma}(\tau_1,\tau_2)}}\max\left\{\absbig{\frac{R_U}{\tau_1}},\frac{1}{\gamma_1-1}\right\}
    \max\left\{\absbig{\frac{R_V}{\tau_2}},\frac{1}{\gamma_2-1}\right\},\\
    C_{\sigma,\vartheta}
        &=  \max_{\substack{0\leq i_1\leq m_1\\0\leq i_2\leq m_2}}
            \sup_{t\in\rr{2}}
            \vartheta(t)
            \partial_{t_1}^{i_1}\partial_{t_2}^{i_2}\sigma(t),
\end{align*}
\begin{align*}
    c_i&=\normbig{\eabs{\cdot}^{-(d_i+1)}}{L^1(\rr{d_i})}^{-1},
    \qquad\text{and}\qquad
    \kappa_i=
    \sum_{\substack{\abs{\alpha}\leq n_i
    \\
    \alpha\in \zzp{d_i}}}
            \abs{\tau_i}^{-p_i\abs{\alpha}},\qquad i\in \{1, 2\}.
\end{align*}

We will now show that $c_i^{-\frac{1}{q_i^\prime}}\kappa_i^{\frac{1}{p_1}}$
can be bounded independent of the dimension,
which leaves only the dependence on $R_U$ and $R_V$.

\begin{proposition}
\label{prop:breaking_curse_of_dim}
    If $n_i$ grows at most linearly with $d_i$,
    then the constant $C$ in \cref{thm:approximation_sobolev_space}
    is dimension-independent apart from the structure of the domains $U$ and $V$.
\end{proposition}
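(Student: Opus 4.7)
The plan is to identify which factors in $C$ carry dimension dependence, show the others are manifestly independent of $d_1,d_2$, and then balance the super-exponential decay of $c_i^{-1/q_i'}$ against the only-exponential growth of $\kappa_i^{1/p_i}$ under the linear-growth hypothesis $n_i\lesssim d_i$. From the explicit factorisation written just before the proposition, $C_{U,V,\sigma,m_1,m_2}$ and $C_{\sigma,\vartheta}$ are already free of $d_i$ (they depend only on the activation, the weight $\vartheta$, and the domain radii $R_U,R_V$), so the whole question reduces to a dimension-free bound for $c_i^{-1/q_i'}\kappa_i^{1/p_i}$, $i\in\{1,2\}$.

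The first step is a closed-form evaluation of $c_i$. Passing to polar coordinates gives
\begin{equation*}
c_i^{-1}=\bigl|S^{d_i-1}\bigr|\int_0^\infty (1+r^2)^{-(d_i+1)/2}r^{d_i-1}\,dr
=\frac{2\pi^{d_i/2}}{\Gamma(d_i/2)}\cdot\frac{1}{2}B\!\left(\tfrac{d_i}{2},\tfrac{1}{2}\right)
=\frac{\pi^{(d_i+1)/2}}{\Gamma((d_i+1)/2)},
\end{equation*}
after the substitution $u=r^2$ and recognition of a Beta integral. Inverting and applying Stirling's formula yields $c_i\sim\Gamma((d_i+1)/2)\pi^{-(d_i+1)/2}$, so that $c_i^{-1/q_i'}$ decays \emph{super-exponentially} in $d_i$: concretely, $c_i^{-1/q_i'}\leq K_0\,(d_i)^{-d_i/(2q_i')}\,A^{d_i}$ for absolute constants $K_0,A>0$ and all $d_i$ large.

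The second step is a combinatorial bound on $\kappa_i$. Grouping terms by degree,
\begin{equation*}
\kappa_i=\sum_{k=0}^{n_i}\binom{k+d_i-1}{d_i-1}|\tau_i|^{-p_ik}
\leq (n_i+1)\binom{n_i+d_i-1}{d_i-1}\max\bigl\{1,|\tau_i|^{-p_in_i}\bigr\}.
\end{equation*}
The linear-growth hypothesis $n_i\leq \gamma d_i$ gives, via Stirling, the binomial estimate $\binom{n_i+d_i-1}{d_i-1}\leq\bigl((1+\gamma)^{1+\gamma}\gamma^{-\gamma}\bigr)^{d_i}$, hence $\kappa_i^{1/p_i}\leq B_\gamma^{d_i/p_i}$ for some constant $B_\gamma$ depending on $\gamma$, $|\tau_i|$, $p_i$ only; the growth is at most \emph{exponential} in $d_i$.

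The third step compares the two rates. The product $c_i^{-1/q_i'}\kappa_i^{1/p_i}$ is dominated by $K_0(AB_\gamma^{1/p_i})^{d_i}(d_i)^{-d_i/(2q_i')}$; since the factor $(d_i)^{-d_i/(2q_i')}$ eventually beats any exponential, this product is bounded in $d_i$ by a constant depending only on $\gamma$, $p_i$, $q_i$, $|\tau_i|$. Substituting into the formula for $C$ gives the claim. The main obstacle is the Stirling bookkeeping in step three: one has to verify the super-exponential factor really dominates uniformly across the admissible range $1\leq q_i\leq 2\leq p_i<\infty$, and to keep the constants tracked cleanly enough to confirm that the only remaining dimension-like dependence is through $R_U,R_V$, i.e., through the geometry of the domains.
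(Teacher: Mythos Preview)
Your proposal is correct and follows essentially the same strategy as the paper: isolate the dimension-dependent factors as $c_i^{-1/q_i'}\kappa_i^{1/p_i}$, compute $c_i^{-1}=\pi^{(d_i+1)/2}/\Gamma((d_i+1)/2)$ via polar coordinates and the Beta integral, and then use Stirling to show that the super-exponential decay from the Gamma function dominates the at-most-exponential growth of $\kappa_i$ under $n_i\lesssim d_i$. The only notable difference is in how you bound $\kappa_i$: the paper uses $\binom{k+d_i-1}{k}\leq (k+d_i-1)^k/k!$ and sums to get $\kappa_i\leq e^{(n_i+d_i-1)|\tau_i|^{-p_i}}$, whereas you take a cruder max-times-count estimate $(n_i+1)\binom{n_i+d_i-1}{d_i-1}\max\{1,|\tau_i|^{-p_in_i}\}$ and then apply Stirling to the binomial; both routes give an exponential-in-$d_i$ bound and lead to the same conclusion.
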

\begin{proof}
To show this independence,
we now consider the individual components of the constant as listed in \cref{eq:constant_splitting}.
Note that $C_{U,V,\sigma,m_1,m_2}$ 
is independent of the dimensions except for the
implicit dependence on the supremum $R_U$ and $R_V$ of $U$ and $V$.
For $\kappa_i$ we develop an upper bound by counting the
possible combinations for $\abs{\alpha}=k$ for $k\in\{0,\ldots,n_i\}$
and $i\in \{1, 2\}$
\begin{align*}
    \kappa_i&=
    \sum_{\substack{\abs{\alpha}\leq n_i
    \\
    \alpha\in \zzp{d_i}}}
            \abs{\tau_i}^{-p_i\abs{\alpha}}
    =
    \sum_{k=0}^{n_i}
        \abs{\tau_i}^{-p_ik}
    \sum_{\substack{\abs{\alpha}=k
    \\
    \alpha\in \zzp{d_i}}}
        1
    =
    \sum_{k=0}^{n_i}
        \binom{k+d_i-1}{k}
        \abs{\tau_i}^{-p_ik}.
\end{align*}
By taking the upper bound $\frac{(k+d_i-1))!}{(d_i-1)!}\leq (k+d_i-1)^k$
on the binomial coefficient, bounding $k\leq n_i$,
and taking the limit of the summation we get
\begin{align*}
    \kappa_i
    &\leq
    \sum_{k=0}^{n_i}
        \frac{(k+d_i-1)^k}{k!}
        \abs{\tau_i}^{-p_ik}
    \leq
    \sum_{k=0}^{n_i}
        \frac{((n_i+d_i-1)\abs{\tau_i}^{-p_i})^k}{k!}
        \\
    &\leq
    \sum_{k=0}^{\infty}
        \frac{((n_i+d_i-1)\abs{\tau_i}^{-p_i})^k}{k!}
    =e^{(n_i+d_i-1)\abs{\tau_i}^{-p_i}}.
\end{align*}
Further, via a transformation to spherical coordinates
(see \cite[(6.131) Examples (c)]{Stromberg15IntroductionClassicalReal}, \cite[Section 3]{Smith89HowSmallUnit})
we can calculate
\begin{align*}
    c_i^{-1}&=\normbig{\eabs{\cdot}^{-(d_i+1)}}{L^1(\rr{d_i})}
    =\int_{\rr{d_i}}(1+\abs{x}^2)^{-\frac{d_i+1}{2}}dx\\
    &=\frac{2\pi^\frac{d_i}{2}}{\Gamma(\frac{d_i}{2})}
        \int_{\rr{}_+}\frac{r^{d_1-1}}{(1+r^2)^{\frac{d_i+1}{2}}}dr
    =\frac{\pi^\frac{d_i}{2}}{\Gamma(\frac{d_i}{2})}
        \int_{\rr{}_+}\frac{t^{\frac{d_1}{2}-1}}{(1+t)^{\frac{d_i+1}{2}}}dt
    \\
    &=\frac{\pi^\frac{d_i}{2}}{\Gamma(\frac{d_i}{2})}B\left(\frac{d_i}{2},\frac{1}{2}\right)
    =\frac{\pi^\frac{d_i+1}{2}}{\Gamma(\frac{d_i+1}{2})}
\end{align*}
where $B(\cdot,\cdot\cdot)$ is the Euler beta-function.
Combining $c_i$ and $\kappa_i$ leads to
\begin{align*}
    &\kern-1em\left(c_i^{-\frac{1}{q_i^\prime}}\kappa_i^{\frac{1}{p_1}}\right)^{q_i^\prime}
    \leq
    \frac{2\pi^\frac{d_i+1}{2}}{\Gamma(\frac{d_i+1}{2})}
    \exp\left({\frac{q_i^\prime}{p_i}(n_i+d_i-1)\abs{\tau_i}^{-p_i}}\right)
    ,
\end{align*}
for which we can find a uniform bound, independend of $d_i$.
To do so, we first assume that 
the order of differentiability
grows at most linearly with the number of dimensions
(i.e., $n_i\leq\delta_i d_i$ for some $\delta_i>0$), thereby
\begin{align}
    \frac{q_i^\prime}{p_i}(n_i+d_i-1)\abs{\tau_i}^{-p_i}
    &\leq \frac{q_i^\prime}{p_i}(\delta_id_i+d_i-1)\abs{\tau_i}^{-p_i}
    \leq \frac{q_i^\prime}{p_i}(\delta_i+1)d_i\abs{\tau_i}^{-p_i}\\
    &\leq 2\frac{q_i^\prime}{p_i}(\delta_i+1)\frac{d_i+1}{2}\abs{\tau_i}^{-p_i}
    =:\gamma_i\frac{d_i+1}{2}.
\end{align}
Second, we observe from
the lower bound in the Stirling formula
that $x\Gamma(x)\geq \left(\frac{x}{e}\right)^{x}$.
This means that
\begin{align*}
    \frac{2\pi^\frac{d_i+1}{2}}{\Gamma(\frac{d_i+1}{2})}
    &\exp\left({\frac{q_i^\prime}{p_i}(n_i+d_i-1)\abs{\tau_i}^{-p_i}}\right)
    \leq
    \frac{2\pi^\frac{d_i+1}{2}}
         {\Gamma(\frac{d_i+1}{2})}
         \exp\left(\gamma_i\frac{d_i+1}{2}\right)
    \\
    &\leq
    \frac{(d_i+1)\pi^\frac{d_i+1}{2}}
         {\left(\frac{d_i+1}{2}\right)^{\frac{d_i+1}{2}}}
        \exp\left(\gamma_i\frac{d_i+1}{2}+\frac{d_i+1}{2}\right)
    \\
    &=
    \left(\frac{2\pi
          \exp\left(\gamma_i+1\right)}{d_i+1}\right)^{\frac{d_i+1}{2}}(d_i+1).
\end{align*}
Note that the numerator in the fraction is constant with respect to $d_i$ for any choice of parameters.
Therefore, for sufficiently large number $d_i$,
it can be upper bounded by a geometric decay,
which decays faster than the remaining linear term.
Thus, the constant $C$ can be uniformly bounded
for all $d_i\in\nn{}$ by taking the maximum over $d_i$.
\end{proof}

A very interesting observation for the
the spectral Barron space is that it has
a similar structure to the (fractional) Hilbert-Sobolev space.
To see this, consider the domain $\Omega=\rr{d}$,
then the infimum is over a set with one element, i.e.,
\begin{align*}
    \norm{f}{\mathscr{B}_{\omega}(\rr{d})}
    =
    \int_{\rr{d}}\omega(\xi)\abs{\hat{f}(\xi)}d\xi.
\end{align*}
For the special case $\omega(\xi)=(1+\abs{\xi}^{2})^{\frac{s}{2}}$
this is then the $L^1$-equivalent of the Sobolev space $H^s$ \cite[Definition 7.9.1]{Hormander98AnalysisLinearPartial}.
With this in mind,
we can view the weighted Fourier-Lebesgue spaces
as an interpolatn between the spectral Barron space
and the fractional Sobolev space.
For a bounded domain $\Omega\subseteq \rr{d}$, \citeauthor{Barron93UniversalApproximationBounds} showed that
$H^{\lfloor\frac{d}{2}\rfloor+2}(\rr{d})
\subseteq
\mathscr{B}_{\abs{\cdot}}(\rr{d})$
cf., \cite[Property 15]{Barron93UniversalApproximationBounds},
which allows to measure the approximation error in $L^2$.
Based on \cref{thm:approximation_sobolev_space}
we can extend this result
such that we allow the error measure
to an arbitrary Hilbert-Sobolev norm,
while simply asking for a linear increase in the regularity of the target function.

\begin{corollary}[Approximation of Hilbert-Sobolev Spaces]
    Let
    $f\in 
    W_{m_1,2}^{m_2,2}(\rrI, \rrII)$
    with $m_i=n_i+\lfloor\frac{d_i}{2}\rfloor+2$
    $U\subset\rrI$, $V\subset\rrII$ be bounded domains,
    and $M>0$ be sufficiently large. Let $\sigma$ be the activation function
    from  \cref{thm:approximation_sobolev_space}.
    Then
    there exits a constant $C>0$ such that
    \begin{align}
    \label{eq:cor_approximation_bound}
        \inf_{f_{N}\in \Sigma_{N,M}(\mathbb{D}_\sigma^{d_1,d_2})}\norm{f-f_N}{W_{n_1,2}^{n_2,2}(U,V)}
        &\leq C N^{-\frac{1}{2}} \abs{U}^{1/2} \abs{V}^{1/2}
        \norm{f}{W_{m_1,2}^{m_2,2}(\rrI, \rrII)}
    \end{align}
    for all $N\in\nn{}$.
\end{corollary}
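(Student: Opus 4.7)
The plan is to derive this corollary as a direct specialisation of Theorem \ref{thm:approximation_sobolev_space} at the parameters $p_1=p_2=q_1=q_2=2$, combined with Plancherel's theorem to translate the $\FL{2,2}$-norm on the right-hand side of \eqref{eq:approximation_bound} into the Bochner-Sobolev norm advertised in \eqref{eq:cor_approximation_bound}. As the base weight required by the theorem I would pick $\omega(\xi_1,\xi_2):=\eabs{\xi_1}^{n_1}\eabs{\xi_2}^{n_2}$, which is trivially polynomially moderated and elliptic with respect to itself, so that the composite weight becomes
\begin{align*}
    \tilde{\omega}(\xi_1,\xi_2) = \eabs{\xi_1}^{n_1+(d_1+1)/2+1}\eabs{\xi_2}^{n_2+(d_2+1)/2+1}.
\end{align*}
The domain hypothesis $\norm{\chF{U}}{\FL{p_i'}},\norm{\chF{V}}{\FL{p_i'}}<\infty$ reduces for $p_i'=2$ to the trivial fact $\chF{U},\chF{V}\in L^2$ for bounded $U,V$, via Plancherel.

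Next I would verify pointwise that $\tilde{\omega}(\xi_1,\xi_2)\leq \eabs{\xi_1}^{m_1}\eabs{\xi_2}^{m_2}$ for the choice $m_i=n_i+\lfloor d_i/2\rfloor+2$. This reduces to the elementary inequality $(d_i+1)/2\leq\lfloor d_i/2\rfloor+1$, which is an equality for odd $d_i$ and leaves a slack of $1/2$ for even $d_i$. This parity case analysis is the only nontrivial bookkeeping step and precisely explains the $\lfloor d_i/2\rfloor+2$ appearing in the statement.

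The core estimate is then an application of Plancherel. Since $\FL{2,2}(\tilde{\omega})$ is just the weighted $L^2$ space of the Fourier transform,
\begin{align*}
    \norm{f}{\FL{2,2}(\tilde{\omega})}^2
    =\int_{\rrIxII}\tilde{\omega}(\xi)^2|\hat f(\xi)|^2\,d\xi
    \leq \int_{\rrIxII}\eabs{\xi_1}^{2m_1}\eabs{\xi_2}^{2m_2}|\hat f(\xi)|^2\,d\xi.
\end{align*}
Using the multinomial expansion
$\eabs{\xi_i}^{2m_i}=(1+|\xi_i|^2)^{m_i}=\sum_{|\alpha|\leq m_i}\binom{m_i}{\alpha}\xi_i^{2\alpha}$,
the right-hand integral unfolds into $\sum_{|\alpha|\leq m_1,|\beta|\leq m_2}C_{\alpha,\beta}\norm{\xi_1^\alpha\xi_2^\beta\hat f}{L^2}^2$, and a second application of Plancherel identifies this with $\norm{f}{W_{m_1,2}^{m_2,2}(\rrI,\rrII)}^2$ up to a combinatorial constant depending only on $m_1,m_2$. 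In particular, $f\in W_{m_1,2}^{m_2,2}(\rrI,\rrII)$ implies $f\in\FL{2,2}(\tilde{\omega})$ with $\norm{f}{\FL{2,2}(\tilde{\omega})}\lesssim\norm{f}{W_{m_1,2}^{m_2,2}(\rrI,\rrII)}$.

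Finally, with all hypotheses of Theorem \ref{thm:approximation_sobolev_space} met, I would directly plug these into \eqref{eq:approximation_bound} to obtain
\begin{align*}
    \inf_{f_N\in\Sigma_{N,M}(\mathbb{D}_\sigma^{d_1,d_2})}\norm{f-f_N}{W_{n_1,2}^{n_2,2}(U,V)}
    \lesssim N^{-1/2}|U|^{1/2}|V|^{1/2}\norm{f}{\FL{2,2}(\tilde{\omega})}
    \lesssim N^{-1/2}|U|^{1/2}|V|^{1/2}\norm{f}{W_{m_1,2}^{m_2,2}(\rrI,\rrII)}.
\end{align*}
No genuine obstacle arises: the proof is really a packaging step. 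The subtlest ingredient is the sharp choice of $m_i$ via the parity case distinction; the norm equivalence between $\int\eabs{\xi_1}^{2m_1}\eabs{\xi_2}^{2m_2}|\hat f|^2$ and the anisotropic Sobolev norm is entirely standard.
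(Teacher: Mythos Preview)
Your proposal is correct and follows essentially the same route as the paper: choose $\omega(\xi_1,\xi_2)=\eabs{\xi_1}^{n_1}\eabs{\xi_2}^{n_2}$, compute $\tilde\omega$ for $q_1=q_2=2$, verify the exponent inequality $(d_i+1)/2+1\le\lfloor d_i/2\rfloor+2$, and invoke the Plancherel identification of $\FL{2}(\eabs{\cdot}^{m_1}\eabs{\cdot\cdot}^{m_2})$ with $W_{m_1,2}^{m_2,2}$ before applying Theorem~\ref{thm:approximation_sobolev_space}. The only difference is cosmetic: the paper asserts the last identification as an equality of norms whereas you (more carefully) treat it as an equivalence via the multinomial expansion, and you additionally spell out why the domain hypothesis $\chF{U},\chF{V}\in\FL{2}$ is automatic.
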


\begin{proof}
    With $\omega(\xi)=\eabs{\xi_1}^{n_1}\eabs{\xi_2}^{n_2}$ and $\tilde{\omega}$ defined as in \cref{thm:approximation_sobolev_space} we observe
    \begin{align*}
        \tilde{\omega}(\xi_1,\xi_1)
        &=\eabs{\xi_1}^{n_1+d_1/2+3/2}\eabs{\xi_2}^{n_2+d_2/2+3/2}\\
        &\leq\eabs{\xi_1}^{n_1+\lfloor d_1/2\rfloor+2}\eabs{\xi_2}^{n_2+\lfloor d_2/2\rfloor+2}\\
        &=\eabs{\xi_1}^{m_1}\eabs{\xi_2}^{m_2}
    \end{align*}
    and therefore
    \begin{align*}
        \norm{f}{\FL{2}(\tilde{\omega})}
        \leq \norm{f}{\FL{2}(\eabs{\cdot}^{m_1}\eabs{\cdot\cdot}^{m_2})}
        =\norm{f}{W_{m_1,2}^{m_2,2}(\rrI, \rrII)}.
    \end{align*}
    For
    $f\in
    W_{m_1,2}^{m_2,2}(\rrI, \rrII)
    $
    we thus get
    $f\in
    \FL{2}(\tilde{\omega})
    $
    and with \cref{thm:approximation_sobolev_space} there is a $C>0$ such that
    \begin{align*}
        \inf_{f_{N}\in\Sigma_{N,M}(\mathbb{D}_\sigma^{d_1,d_2})}\norm{f-f_N}{W_{n_1,2}^{n_2,2}(U,V)}
        &\leq C N^{-\frac{1}{2}} \abs{U}^{\frac{1}{2}} \abs{V}^{\frac{1}{2}}
        \norm{f}{\FL{2}(\tilde{\omega})}\\
        &\leq C N^{-\frac{1}{2}} \abs{U}^{\frac{1}{2}} \abs{V}^{\frac{1}{2}}
        \norm{f}{W_{m_1,2}^{m_2,2}(\rrI, \rrII)}.
    \end{align*}
\end{proof}

\section{Experiments: Approximating Functions with Anisotropic Differentiability}
\label{sec:experiments}

In our initial assumption on the activation function (see \cref{sec:prelim:VariationSpace}),
we assumed that it is a function of two variables.
This assumption deviates from the typical convention
in the machine learning community,
where the activation function acts on a one-dimensional input space.
To illustrate the gain in approximation accuracy
that is obtained by our two-block structure,
we consider the following simple case:
Let $f$ be a function with two input variables
and assume that for the training we use
the Hilbert-Bochner-Sobolev norm with $n_1<n_2$ as loss function.

The example that we consider will be the function
\begin{align}
    f(t,x)=e^{-\abs{t}-\abs{x}^3}
\end{align}
which is a solution of the two-phase heat equation
\begin{align}
    (\partial_t-\partial_x^2+\operatorname{sign}(t)+9\abs{x}^4-6\abs{x})f(t,x)=0
    \quad\text{with}\quad
    f(0,x)=e^{-|x|^3}
\end{align}
over the domain $U=V=[-1,1]$.
In order to approximate this function, we consider two models for the activation function,
namely, a conventional single-block model and a novel two-block model.
More precisely, for width $N\in\nn{}$, the single-block model is given by the ReLU$^{m}$-network
\begin{align}
    \Phi_1(t,x)=w\cdot (w_t t + w_x x + b)_+^m + c,
\end{align}
with the trainable parameters $w,w_t,w_x,b\in\rr{N}$ and $c\in\rr{}$
and the two-block model is given by the mixed network 
\begin{align}
    \Phi_2(t,x)=w\cdot \left[(w_t t + b_t)_+^{m_1}(w_x x + b_x)_+^{m_2}\right] + c,
\end{align}
with the trainable parameters $w,w_t,w_x,b_t,b_x\in\rr{N}$ and $c\in\rr{}$.
We see that $\Phi_1$ and $\Phi_2$ have $4N+1$ and $5N+1$ trainable parameters, respectively.
For a fair comparison between the two models, we thus fix the number of trainable parameters instead of the width of the network.

For the loss function we consider $n_1=0$ derivatives for $t$ and $n_2=1$ derivatives for $x$.
In order to allow optimization via gradient decent while having the first derivative inside the objective, we fix $m=m_2=2$ and $m_1=1$.

With this setting,
we train $10$ random initializations
of both models for 201 and 401 parameters, respectively.
The resulting average temporal evolution
of the of the logarithmic loss function
is displayed in \cref{fig:loss}.
For each set of hyperparameters,
we plot the mean value along with an error band of one standard deviation in both directions.
The temporal evolution of the loss was smoothed (i.e., remove ripples which are due to the optimization algorithms)
by applying a cumulative minimum before calculating the average and the standard deviation.
From this evolution it is clear,
that the two-block model outperforms the single block model
for each of the two settings for the number of parameters.
\begin{figure}[H]
    \centering
    \includegraphics[page=1]{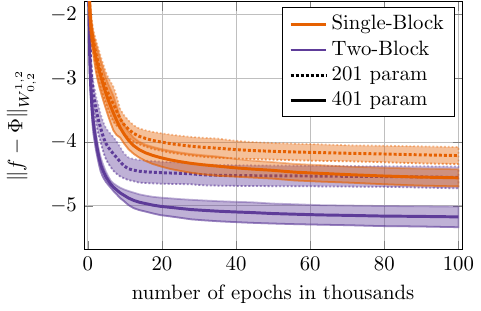}
    \caption{Temporal evolution of the logarithmic loss for the single-block model and the two-block model with 201 and 401 parameters.}
    \label{fig:loss}
\end{figure}

To further illustrate this difference,
we present a contour plot of the resulting approximation
with 401 parameters in \cref{fig:prediction}.
In this figure we see that the contour lines of the target function show sharp edges which can be approximated very well by the ReLU-part of the two-block activation function.
Contrary to that,
the ReLU$^2$-structure of the single block network
is not capable of fitting
to this structure of the target function.
\begin{figure}[H]
    \centering
    \includegraphics[page=2]{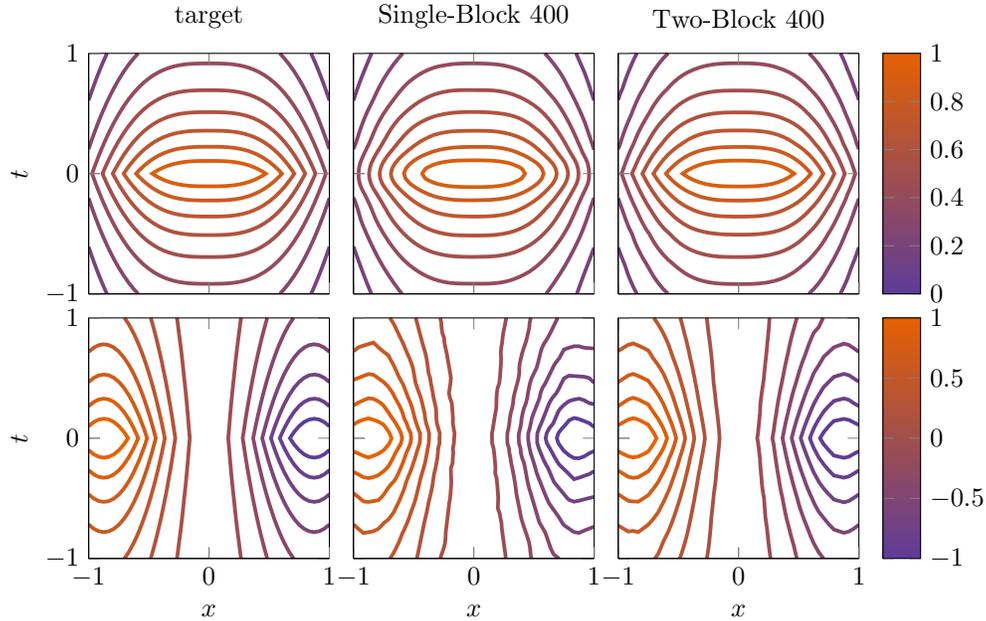}
    \caption{Countour plot of the target function $f$ (left), the single-block model (middle), and the two-block model (right).
    The first row shows the resulting function and the second row shows the partial derivative with respect to $x$.}
    \label{fig:prediction}
\end{figure}

\printbibliography

@article{Gonon23RandomFeatureNeural,
	title = {Random {Feature} {Neural} {Networks} {Learn} {Black}-{Scholes} {Type} {PDEs} {Without} {Curse} of {Dimensionality}},
	volume = {24},
	url = {http://jmlr.org/papers/v24/21-0987.html},
	number = {189},
	journal = {Journal of Machine Learning Research},
	author = {Gonon, Lukas},
	year = {2023},
	pages = {1--51},
}

@article{Caragea23NeuralNetworkApproximation,
	title = {Neural {Network} {Approximation} and {Estimation} of {Classifiers} with {Classification} {Boundary} in a {Barron} {Class}},
	volume = {33},
	issn = {1050-5164},
	url = {https://projecteuclid.org/journals/annals-of-applied-probability/volume-33/issue-4/Neural-network-approximation-and-estimation-of-classifiers-with-classification-boundary/10.1214/22-AAP1884.full},
	doi = {10.1214/22-AAP1884},
	number = {4},
	urldate = {2023-12-05},
	journal = {The Annals of Applied Probability},
	author = {Caragea, Andrei and Petersen, Philipp and Voigtlaender, Felix},
	month = aug,
	year = {2023},
}

@book{Hormander98AnalysisLinearPartial,
	address = {Berlin, Heidelberg},
	series = {Grundlehren der mathematischen {Wissenschaften}},
	title = {The {Analysis} of {Linear} {Partial} {Differential} {Operators} {I}},
	volume = {256},
	isbn = {978-3-642-96750-4},
	url = {http://link.springer.com/10.1007/978-3-642-96750-4},
	urldate = {2023-07-17},
	publisher = {Springer Berlin Heidelberg},
	author = {Hörmander, Lars},
	editor = {Artin, M. and Chern, S. S. and Adobo, J. L. and Grothendieck, A. and Heinz, E. and Hirzebruch, F. and Hörmander, L. and MacLane, S. and Magnus, W. and Moore, C. C. and Moser, J. K. and Nagata, M. and Schmidt, W. and Scott, D. S. and Tits, J. and van der Waerden, B. L. and Berger, M. and Eckmann, B. and Varadhan, S. R. S.},
	year = {1998},
	doi = {10.1007/978-3-642-96750-4},
}

@book{Hytonen17AnalysisBanachSpacesII,
	address = {Cham},
	title = {Analysis in {Banach} {Spaces}},
	volume = {II},
	isbn = {978-3-319-69808-3},
	url = {http://link.springer.com/10.1007/978-3-319-69808-3},
	urldate = {2023-11-28},
	publisher = {Springer International Publishing},
	author = {Hytönen, Tuomas and van Neerven, Jan and Veraar, Mark and Weis, Lutz},
	year = {2017},
	doi = {10.1007/978-3-319-69808-3},
}

@book{Cohn13MeasureTheorySecond,
	address = {New York, NY},
	series = {Birkhäuser {Advanced} {Texts} {Basler} {Lehrbücher}},
	title = {Measure {Theory}: {Second} {Edition}},
	isbn = {978-1-4614-6956-8},
	shorttitle = {Measure {Theory}},
	url = {https://link.springer.com/10.1007/978-1-4614-6956-8},
	language = {en},
	urldate = {2023-10-12},
	publisher = {Springer New York},
	author = {Cohn, Donald L.},
	year = {2013},
	doi = {10.1007/978-1-4614-6956-8},
}

@book{Evans22PartialDifferentialEquations,
	address = {Providence, Rhode Island},
	edition = {Second edition},
	series = {Graduate studies in mathematics},
	title = {Partial {Differential} {Equations}},
	isbn = {978-1-4704-6942-9},
	abstract = {"This is the second edition of the now definitive text on partial differential equations (PDE). It offers a comprehensive survey of modern techniques in the theoretical study of PDE with particular emphasis on nonlinear equations. Its wide scope and clear exposition make it a great text for a graduate course in PDE. For this edition, the author has made numerous changes, including: a new chapter on nonlinear wave equations, more than 80 new exercises, several new sections, and a significantly expanded bibliography."--Publisher's description},
	language = {eng},
	number = {19},
	publisher = {American Mathematical Society},
	author = {Evans, Lawrence C.},
	year = {2022},
}

@book{Grohs22MathematicalAspectsDeep,
	edition = {1},
	title = {Mathematical {Aspects} of {Deep} {Learning}},
	isbn = {978-1-00-902509-6},
	url = {https://www.cambridge.org/core/product/identifier/9781009025096/type/book},
	abstract = {In recent years the development of new classification and regression algorithms based on deep learning has led to a revolution in the fields of artificial intelligence, machine learning, and data analysis. The development of a theoretical foundation to guarantee the success of these algorithms constitutes one of the most active and exciting research topics in applied mathematics. This book presents the current mathematical understanding of deep learning methods from the point of view of the leading experts in the field. It serves both as a starting point for researchers and graduate students in computer science, mathematics, and statistics trying to get into the field and as an invaluable reference for future research.},
	urldate = {2023-09-28},
	publisher = {Cambridge University Press},
	editor = {Grohs, Philipp and Kutyniok, Gitta},
	month = dec,
	year = {2022},
	doi = {10.1017/9781009025096},
}

@article{DeVore21NeuralNetworkApproximation,
	title = {Neural {Network} {Approximation}},
	volume = {30},
	issn = {0962-4929, 1474-0508},
	url = {https://www.cambridge.org/core/product/identifier/S0962492921000052/type/journal_article},
	doi = {10.1017/S0962492921000052},
	language = {en},
	urldate = {2023-10-09},
	journal = {Acta Numerica},
	author = {DeVore, Ronald Alvin and Hanin, Boris and Petrova, Guergana},
	month = may,
	year = {2021},
	pages = {327--444},
}

@article{DeVore98NonlinearApproximation,
	title = {Nonlinear {Approximation}},
	volume = {7},
	doi = {10.1017/S0962492900002816},
	abstract = {This is a survey of nonlinear approximation, especially that part of the subject which is important in numerical computation. Nonlinear approximation means that the approximants do not come from linear spaces but rather from nonlinear manifolds. The central question to be studied is what, if any, are the advantages of nonlinear approximation over the simpler, more established, linear methods. This question is answered by studying the rate of approximation which is the decrease in error versus the number of parameters in the approximant. The number of parameters usually correlates well with computational effort. It is shown that in many settings the rate of nonlinear approximation can be characterized by certain smoothness conditions which are significantly weaker than required in the linear theory. Emphasis in the survey will be placed on approximation by piecewise polynomials and wavelets as well as their numerical implementation. Results on highly nonlinear methods such as optimal basis selection and greedy algorithms (adaptive pursuit) are also given. Applications to image processing, statistical estimation, regularity for PDEs, and adaptive algorithms are discussed.},
	language = {en},
	journal = {Acta Numerica},
	author = {DeVore, Ronald Alvin},
	month = jan,
	year = {1998},
	pages = {51--150},
}

@article{Abdeljawad22IntegralRepresentationsShallow,
	title = {Integral {Representations} of {Shallow} {Neural} {Network} with {Rectified} {Power} {Unit} {Activation} {Function}},
	volume = {155},
	issn = {08936080},
	shorttitle = {Shallow {RePU}},
	url = {https://linkinghub.elsevier.com/retrieve/pii/S0893608022003392},
	doi = {10.1016/j.neunet.2022.09.005},
	language = {en},
	urldate = {2023-04-12},
	journal = {Neural Networks},
	author = {Abdeljawad, Ahmed and Grohs, Philipp},
	month = nov,
	year = {2022},
	keywords = {Infinite Width, RePU, Shallow Network},
	pages = {536--550},
}

@article{Pelczynski86IsomorphismsAnisotropicSobolev,
	title = {On {Isomorphisms} of {Anisotropic} {Sobolev} {Spaces} with "{Classical} {Banach} {Spaces}" and a {Sobolev} {Type} {Embedding} {Theorem}},
	volume = {84},
	issn = {0039-3223, 1730-6337},
	url = {http://www.impan.pl/get/doi/10.4064/sm-84-2-169-215},
	doi = {10.4064/sm-84-2-169-215},
	language = {en},
	number = {2},
	urldate = {2023-11-24},
	journal = {Studia Mathematica},
	author = {Pełczyński, A. and Senator, K.},
	year = {1986},
	pages = {169--215},
}

@inproceedings{Parhi23ModulationSpacesCurse,
	title = {Modulation {Spaces} and the {Curse} of {Dimensionality}},
	url = {https://openreview.net/forum?id=0c-grlqX9Sc},
	booktitle = {Fourteenth international conference on sampling theory and applications},
	author = {Parhi, Rahul and Unser, Michael},
	year = {2023},
}

@article{Ginibre95GeneralizedStrichartzInequalities,
	title = {Generalized {Strichartz} {Inequalities} for the {Wave} {Equation}},
	volume = {133},
	issn = {00221236},
	url = {https://linkinghub.elsevier.com/retrieve/pii/S0022123685711196},
	doi = {10.1006/jfan.1995.1119},
	language = {en},
	number = {1},
	urldate = {2023-10-17},
	journal = {Journal of Functional Analysis},
	author = {Ginibre, J. and Velo, G.},
	month = oct,
	year = {1995},
	pages = {50--68},
}

@article{Keel98EndpointStrichartzEstimates,
	title = {Endpoint {Strichartz} {Estimates}},
	volume = {120},
	issn = {1080-6377},
	url = {http://muse.jhu.edu/content/crossref/journals/american_journal_of_mathematics/v120/120.5keel.pdf},
	doi = {10.1353/ajm.1998.0039},
	language = {en},
	number = {5},
	urldate = {2023-10-17},
	journal = {American Journal of Mathematics},
	author = {Keel, Markus Aloysius and Tao, Terence},
	year = {1998},
	pages = {955--980},
}

@inproceedings{Chizat20ImplicitBiasGradient,
	series = {Proceedings of machine learning research},
	title = {Implicit {Bias} of {Gradient} {Descent} for {Wide} {Two}-{Layer} {Neural} {Networks} {Trained} with the {Logistic} {Loss}},
	volume = {125},
	url = {https://proceedings.mlr.press/v125/chizat20a.html},
	booktitle = {Proceedings of thirty third conference on learning theory},
	publisher = {PMLR},
	author = {Chizat, Lénaïc and Bach, Francis},
	editor = {Abernethy, Jacob and Agarwal, Shivani},
	month = jul,
	year = {2020},
	pages = {1305--1338},
}

@article{Parhi21BanachSpaceRepresenter,
	title = {Banach {Space} {Representer} {Theorems} for {Neural} {Networks} and {Ridge} {Splines}},
	volume = {22},
	issn = {1532-4435},
	abstract = {We develop a variational framework to understand the properties of the functions learned by neural networks fit to data. We propose and study a family of continuous-domain linear inverse problems with total variation-like regularization in the Radon domain subject to data fitting constraints. We derive a representer theorem showing that finite-width, single-hidden layer neural networks are solutions to these inverse problems. We draw on many techniques from variational spline theory and so we propose the notion of polynomial ridge splines, which correspond to single-hidden layer neural networks with truncated power functions as the activation function. The representer theorem is reminiscent of the classical reproducing kernel Hilbert space representer theorem, but we show that the neural network problem is posed over a non-Hilbertian Banach space. While the learning problems are posed in the continuous-domain, similar to kernel methods, the problems can be recast as finite-dimensional neural network training problems. These neural network training problems have regularizers which are related to the well-known weight decay and path-norm regularizers. Thus, our result gives insight into functional characteristics of trained neural networks and also into the design neural network regularizers. We also show that these regularizers promote neural network solutions with desirable generalization properties.},
	number = {1},
	journal = {Journal of Machine Learning Research},
	author = {Parhi, Rahul and Nowak, Robert D.},
	month = jan,
	year = {2021},
	note = {Number of pages: 40
Publisher: JMLR.org
tex.articleno: 43
tex.issue\_date: January 2021},
	keywords = {inverse problems, neural networks, regularization, sparsity, splines},
}

@article{Barron08ApproximationLearningGreedy,
	title = {Approximation and {Learning} by {Greedy} {Algorithms}},
	volume = {36},
	issn = {0090-5364},
	shorttitle = {Greedy {Algorithms}},
	url = {https://projecteuclid.org/journals/annals-of-statistics/volume-36/issue-1/Approximation-and-learning-by-greedy-algorithms/10.1214/009053607000000631.full},
	doi = {10.1214/009053607000000631},
	number = {1},
	urldate = {2023-06-29},
	journal = {The Annals of Statistics},
	author = {Barron, Andrew R. and Cohen, Albert and Dahmen, Wolfgang and DeVore, Ronald A.},
	month = feb,
	year = {2008},
}

@article{Lu22PrioriGeneralizationError,
	title = {A {Priori} {Generalization} {Error} {Analysis} of {Two}-{Layer} {Neural} {Networks} for {Solving} {High} {Dimensional} {Schrödinger} {Eigenvalue} {Problems}},
	volume = {2},
	issn = {2692-3688},
	shorttitle = {Barron {Space} {Schroedinger}},
	url = {https://www.ams.org/cams/2022-02-01/S2692-3688-2022-00005-5/},
	doi = {https://doi.org/10.1090/cams/5},
	language = {en},
	number = {1},
	urldate = {2023-05-22},
	journal = {Communications of the American Mathematical Society},
	author = {Lu, Jianfeng and Lu, Yulong},
	month = jan,
	year = {2022},
	pages = {1--21},
}

@inproceedings{Chen21RepresentationSolutionsElliptic,
	title = {On the {Representation} of {Solutions} to {Elliptic} {PDEs} in {Barron} {Spaces}},
	volume = {34},
	shorttitle = {Barron {Space} {Elliptic}},
	url = {https://proceedings.neurips.cc/paper_files/paper/2021/file/32cfdce9631d8c7906e8e9d6e68b514b-Paper.pdf},
	booktitle = {Advances in neural information processing systems},
	publisher = {Curran Associates, Inc.},
	author = {Chen, Ziang and Lu, Jianfeng and Lu, Yulong},
	editor = {Ranzato, M. and Beygelzimer, A. and Dauphin, Y. and Liang, P.S. and Vaughan, J. Wortman},
	year = {2021},
	pages = {6454--6465},
}

@inproceedings{Savarese19HowInfiniteWidth,
	title = {How {Do} {Infinite} {Width} {Bounded} {Norm} {Networks} {Look} in {Function} {Space}?},
	shorttitle = {Shallow {ReLU}},
	url = {https://proceedings.mlr.press/v99/savarese19a.html},
	language = {en},
	urldate = {2023-04-12},
	booktitle = {Proceedings of the {Thirty}-{Second} {Conference} on {Learning} {Theory}},
	publisher = {PMLR},
	author = {Savarese, Pedro and Evron, Itay and Soudry, Daniel and Srebro, Nathan},
	month = jun,
	year = {2019},
	keywords = {Infinite Width, ReLU, Shallow Network},
	pages = {2667--2690},
}

@inproceedings{Ongie20FunctionSpaceView,
	title = {A {Function} {Space} {View} of {Bounded} {Norm} {Infinite} {Width} {ReLU} {Nets}: {The} {Multivariate} {Case}},
	shorttitle = {Multivariate {Shallow} {Network}},
	url = {https://openreview.net/forum?id=H1lNPxHKDH},
	language = {en},
	urldate = {2023-04-12},
	author = {Ongie, Greg and Willett, Rebecca and Soudry, Daniel and Srebro, Nathan},
	month = mar,
	year = {2020},
	keywords = {Shallow Network},
}

@inproceedings{Lu21PrioriGeneralizationAnalysis,
	series = {Proceedings of machine learning research},
	title = {A {Priori} {Generalization} {Analysis} of the {Deep} {Ritz} {Method} for {Solving} {High} {Dimensional} {Elliptic} {Partial} {Differential} {Equations}},
	volume = {134},
	shorttitle = {Barron {Space} {Elliptic} {PDE}},
	url = {https://proceedings.mlr.press/v134/lu21a.html},
	booktitle = {Proceedings of thirty fourth conference on learning theory},
	publisher = {PMLR},
	author = {Lu, Yulong and Lu, Jianfeng and Wang, Min},
	editor = {Belkin, Mikhail and Kpotufe, Samory},
	month = aug,
	year = {2021},
	pages = {3196--3241},
}

@article{Klusowski16RiskBoundsHighDimensional,
	title = {Risk {Bounds} for {High}-{Dimensional} {Ridge} {Function} {Combinations} {Including} {Neural} {Networks}},
	copyright = {arXiv.org perpetual, non-exclusive license},
	shorttitle = {Risk {Bounds}},
	url = {https://arxiv.org/abs/1607.01434},
	doi = {10.48550/ARXIV.1607.01434},
	urldate = {2023-06-14},
	author = {Klusowski, Jason M. and Barron, Andrew R.},
	year = {2016},
	note = {Publisher: arXiv
Version Number: 4},
}

@article{Klusowski18ApproximationCombinationsReLU,
	title = {Approximation by {Combinations} of {ReLU} and {Squared} {ReLU} {Ridge} {Functions} {With} \${\textbackslash}ell{\textasciicircum}1\$ and \${\textbackslash}ell{\textasciicircum}0\$ {Controls}},
	volume = {64},
	issn = {0018-9448, 1557-9654},
	shorttitle = {{ReQU} {Approximation}},
	url = {https://ieeexplore.ieee.org/document/8485650/},
	doi = {10.1109/TIT.2018.2874447},
	number = {12},
	urldate = {2023-06-14},
	journal = {IEEE Transactions on Information Theory},
	author = {Klusowski, Jason M. and Barron, Andrew R.},
	month = dec,
	year = {2018},
	pages = {7649--7656},
}

@book{Tartar07IntroductionSobolevSpaces,
	address = {Berlin, Heidelberg},
	series = {Lecture {Notes} of the {Unione} {Matematica} {Italiana}},
	title = {An {Introduction} to {Sobolev} {Spaces} and {Interpolation} {Spaces}},
	volume = {3},
	isbn = {978-3-540-71482-8},
	shorttitle = {Sobolev {Space}},
	url = {http://link.springer.com/10.1007/978-3-540-71483-5},
	language = {en},
	urldate = {2023-06-05},
	publisher = {Springer Berlin Heidelberg},
	author = {Tartar, Luc},
	year = {2007},
	doi = {10.1007/978-3-540-71483-5},
}

@article{Siegel23CharacterizationVariationSpaces,
	title = {Characterization of the {Variation} {Spaces} {Corresponding} to {Shallow} {Neural} {Networks}},
	issn = {0176-4276, 1432-0940},
	shorttitle = {Variation {Spaces}},
	url = {https://link.springer.com/10.1007/s00365-023-09626-4},
	doi = {10.1007/s00365-023-09626-4},
	language = {en},
	urldate = {2023-06-07},
	journal = {Constructive Approximation},
	author = {Siegel, Jonathan W. and Xu, Jinchao},
	month = feb,
	year = {2023},
}

@article{Siegel22HighOrderApproximationRates,
	title = {High-{Order} {Approximation} {Rates} for {Shallow} {Neural} {Networks} {With} {Cosine} and {ReLU} {Activation} {Functions}},
	volume = {58},
	issn = {10635203},
	shorttitle = {{CosineRePUBarron}},
	url = {https://linkinghub.elsevier.com/retrieve/pii/S1063520321001056},
	doi = {10.1016/j.acha.2021.12.005},
	language = {en},
	urldate = {2023-05-12},
	journal = {Applied and Computational Harmonic Analysis},
	author = {Siegel, Jonathan W. and Xu, Jinchao},
	month = may,
	year = {2022},
	pages = {1--26},
}

@article{E22RepresentationFormulasPointwise,
	title = {Representation {Formulas} and {Pointwise} {Properties} for {Barron} {Functions}},
	volume = {61},
	issn = {0944-2669, 1432-0835},
	shorttitle = {Barron {Representation} {Formulas}},
	url = {https://link.springer.com/10.1007/s00526-021-02156-6},
	doi = {10.1007/s00526-021-02156-6},
	language = {en},
	number = {2},
	urldate = {2023-05-24},
	journal = {Calculus of Variations and Partial Differential Equations},
	author = {E, Weinan and Wojtowytsch, Stephan},
	month = apr,
	year = {2022},
	pages = {46},
}

@article{E22BarronSpaceFlowInduced,
	title = {The {Barron} {Space} and the {Flow}-{Induced} {Function} {Spaces} for {Neural} {Network} {Models}},
	volume = {55},
	issn = {0176-4276, 1432-0940},
	shorttitle = {Barron {Space}},
	url = {https://link.springer.com/10.1007/s00365-021-09549-y},
	doi = {10.1007/s00365-021-09549-y},
	language = {en},
	number = {1},
	urldate = {2023-05-12},
	journal = {Constructive Approximation},
	author = {E, Weinan and Ma, Chao and Wu, Lei},
	month = feb,
	year = {2022},
	keywords = {Barron Space, Shallow Network},
	pages = {369--406},
}

@article{Makovoz96RandomApproximantsNeural,
	title = {Random {Approximants} and {Neural} {Networks}},
	volume = {85},
	issn = {00219045},
	shorttitle = {Random {Approximants}},
	url = {https://linkinghub.elsevier.com/retrieve/pii/S0021904596900313},
	doi = {10.1006/jath.1996.0031},
	language = {en},
	number = {1},
	urldate = {2023-06-28},
	journal = {Journal of Approximation Theory},
	author = {Makovoz, Y.},
	month = apr,
	year = {1996},
	pages = {98--109},
}

@inproceedings{Marwah23NeuralNetworkApproximations,
	series = {Proceedings of machine learning research},
	title = {Neural {Network} {Approximations} of {PDEs} {Beyond} {Linearity}: {A} {Representational} {Perspective}},
	volume = {202},
	url = {https://proceedings.mlr.press/v202/marwah23a.html},
	abstract = {A burgeoning line of research has developed deep neural networks capable of approximating the solutions to high dimensional PDEs, opening related lines of theoretical inquiry focused on explaining how it is that these models appear to evade the curse of dimensionality. However, most theoretical analyses thus far have been limited to linear PDEs. In this work, we take a step towards studying the representational power of neural networks for approximating solutions to nonlinear PDEs. We focus on a class of PDEs known as ¡em¿nonlinear elliptic variational PDEs¡/em¿, whose solutions minimize an ¡em¿Euler-Lagrange¡/em¿ energy functional E(u) = ∫$_{\textrm{Ω}}$L(x, u(x), ∇u(x)) - f(x) u(x)dx. We show that if composing a function with Barron norm b with partial derivatives of L produces a function of Barron norm at most B$_{\textrm{L}}$ bᵖ, the solution to the PDE can be ϵ-approximated in the L² sense by a function with Barron norm O((dB$_{\textrm{L}}$)$^{\textrm{\{p (1/ ϵ), p{\textless}sup{\textgreater}(1/ϵ)}}$\}{\textless}/sup{\textgreater}). By a classical result due to Barron (1993), this correspondingly bounds the size of a 2-layer neural network needed to approximate the solution. Treating p, ϵ, B$_{\textrm{L}}$ as constants, this quantity is polynomial in dimension, thus showing neural networks can evade the curse of dimensionality. Our proof technique involves neurally simulating (preconditioned) gradient in an appropriate Hilbert space, which converges exponentially fast to the solution of the PDE, and such that we can bound the increase of the Barron norm at each iterate. Our results subsume and substantially generalize analogous prior results for linear elliptic PDEs over a unit hypercube.},
	booktitle = {Proceedings of the 40th international conference on machine learning},
	publisher = {PMLR},
	author = {Marwah, Tanya and Lipton, Zachary Chase and Lu, Jianfeng and Risteski, Andrej},
	editor = {Krause, Andreas and Brunskill, Emma and Cho, Kyunghyun and Engelhardt, Barbara and Sabato, Sivan and Scarlett, Jonathan},
	month = jul,
	year = {2023},
	pages = {24139--24172},
}

@article{Li20ComplexityMeasuresNeural,
	title = {Complexity {Measures} for {Neural} {Networks} with {General} {Activation} {Functions} {Using} {Path}-based {Norms}},
	copyright = {arXiv.org perpetual, non-exclusive license},
	shorttitle = {Neural {Network} {Complexity}},
	url = {https://arxiv.org/abs/2009.06132},
	doi = {10.48550/ARXIV.2009.06132},
	urldate = {2023-05-22},
	author = {Li, Zhong and Ma, Chao and Wu, Lei},
	year = {2020},
	note = {Publisher: arXiv
Version Number: 1},
}

@article{Bach17BreakingCurseDimensionality,
	title = {Breaking the {Curse} of {Dimensionality} {With} {Convex} {Neural} {Networks}},
	volume = {18},
	shorttitle = {Convex {Neural} {Networks}},
	url = {http://jmlr.org/papers/v18/14-546.html},
	number = {19},
	journal = {Journal of Machine Learning Research},
	author = {Bach, Francis},
	year = {2017},
	pages = {1--53},
}

@article{Schmidt07ZurTheorielinearen,
	title = {Zur {Theorie} der linearen und nichtlinearen {Integralgleichungen}: {I}. {Teil}: {Entwicklung} willkürlicher {Funktionen} nach {Systemen} vorgeschriebener},
	volume = {63},
	issn = {0025-5831, 1432-1807},
	shorttitle = {Zur {Theorie} der linearen und nichtlinearen {Integralgleichungen}},
	url = {http://link.springer.com/10.1007/BF01449770},
	doi = {10.1007/BF01449770},
	language = {de},
	number = {4},
	urldate = {2023-09-18},
	journal = {Mathematische Annalen},
	author = {Schmidt, Erhard},
	month = dec,
	year = {1907},
	pages = {433--476},
}

@article{Kurkova01BoundsRatesVariableBasis,
	title = {Bounds on {Rates} of {Variable}-{Basis} and {Neural}-{Network} {Approximation}},
	volume = {47},
	issn = {00189448},
	url = {http://ieeexplore.ieee.org/document/945285/},
	doi = {10.1109/18.945285},
	number = {6},
	urldate = {2023-09-18},
	journal = {IEEE Transactions on Information Theory},
	author = {Kurkova, V. and Sanguineti, M.},
	month = sep,
	year = {2001},
	pages = {2659--2665},
}

@book{Albiac06TopicsBanachSpace,
	address = {New York},
	series = {Graduate texts in mathematics},
	title = {Topics in {Banach} {Space} {Theory}},
	isbn = {978-0-387-28141-4},
	number = {233},
	publisher = {Springer},
	author = {Albiac, Fernando and Kalton, Nigel J.},
	year = {2006},
	keywords = {Banach spaces, Textbooks},
}

@article{Chen23RegularityTheoryStatic,
	title = {A {Regularity} {Theory} for {Static} {Schrödinger} {Equations} on \${\textbackslash}mathbb\{{R}\}{\textasciicircum}d\$ in {Spectral} {Barron} {Spaces}},
	volume = {55},
	shorttitle = {Barron {Space} {Over} {Rd}},
	doi = {10.1137/22M1478719},
	language = {en},
	number = {1},
	journal = {SIAM Journal on Mathematical Analysis},
	author = {Chen, Ziang and Lu, Jianfeng and Lu, Yulong and Zhou, Shengxuan},
	month = feb,
	year = {2023},
	pages = {557--570},
}

@article{Breiman93HingingHyperplanesRegression,
	title = {Hinging {Hyperplanes} for {Regression}, {Classification}, and {Function} {Approximation}},
	volume = {39},
	doi = {10.1109/18.256506},
	number = {3},
	journal = {IEEE Transactions on Information Theory},
	author = {Breiman, L.},
	month = may,
	year = {1993},
	pages = {999--1013},
}

@article{Barron93UniversalApproximationBounds,
	title = {Universal {Approximation} {Bounds} for {Superpositions} of a {Sigmoidal} {Function}},
	volume = {39},
	doi = {10.1109/18.256500},
	number = {3},
	journal = {IEEE Transactions on Information Theory},
	author = {Barron, Andrew R.},
	month = may,
	year = {1993},
	pages = {930--945},
}

@article{Benedek61SpaceMixedNorm,
	title = {The {Space} \${L}{\textasciicircum}p\$, with {Mixed} {Norm}},
	volume = {28},
	doi = {10.1215/S0012-7094-61-02828-9},
	number = {3},
	journal = {Duke Mathematical Journal},
	author = {Benedek, A. and Panzone, R.},
	month = sep,
	year = {1961},
}

@book{Bergh76InterpolationSpacesIntroduction,
	address = {Berlin, Heidelberg},
	series = {Grundlehren der mathematischen {Wissenschaften}},
	title = {Interpolation {Spaces}: {An} {Introduction}},
	volume = {223},
	language = {en},
	publisher = {Springer Berlin Heidelberg},
	author = {Bergh, Jöran and Löfström, Jörgen},
	editor = {Chern, S. S. and Doob, J. L. and Douglas, J. and Grothendieck, A. and Heinz, E. and Hirzebruch, F. and Hopf, E. and Mac Lane, S. and Magnus, W. and Postnikov, M. M. and Schmidt, F. K. and Schmidt, W. and Scott, D. S. and Stein, K. and Tits, J. and Van Der Waerden, B. L. and Eckmann, B. and Moser, J. K.},
	year = {1976},
	doi = {10.1007/978-3-642-66451-9},
}

@article{Siegel20ApproximationRatesNeural,
	title = {Approximation {Rates} for {Neural} {Networks} {With} {General} {Activation} {Functions}},
	volume = {128},
	issn = {08936080},
	url = {https://linkinghub.elsevier.com/retrieve/pii/S0893608020301891},
	doi = {10.1016/j.neunet.2020.05.019},
	language = {en},
	urldate = {2023-08-31},
	journal = {Neural Networks},
	author = {Siegel, Jonathan W. and Xu, Jinchao},
	month = aug,
	year = {2020},
	pages = {313--321},
}

@book{Stromberg15IntroductionClassicalReal,
	address = {Providence, Rhode Island},
	title = {An {Introduction} to {Classical} {Real} {Analysis}},
	isbn = {978-1-4704-2544-9},
	publisher = {American Mathematical Society : AMS Chelsea Publishing},
	author = {Stromberg, Karl R.},
	collaborator = {American Mathematical Society},
	year = {2015},
	keywords = {Mathematical analysis, Measure and integration -- Instructional exposition (textbooks, tutorial papers, etc.), Real functions -- Instructional exposition (textbooks, tutorial papers, etc.)},
}

@article{Smith89HowSmallUnit,
	title = {How {Small} {Is} a {Unit} {Ball}?},
	volume = {62},
	issn = {0025-570X, 1930-0980},
	url = {https://www.tandfonline.com/doi/full/10.1080/0025570X.1989.11977419},
	doi = {10.1080/0025570X.1989.11977419},
	language = {en},
	number = {2},
	urldate = {2023-08-31},
	journal = {Mathematics Magazine},
	author = {Smith, David J. and Vamanamurthy, Mavina K.},
	month = apr,
	year = {1989},
	pages = {101--107},
}

@article{Xu20FiniteNeuronMethod,
	title = {Finite {Neuron} {Method} and {Convergence} {Analysis}},
	volume = {28},
	issn = {1815-2406, 1991-7120},
	shorttitle = {Finite {Neuron}},
	url = {http://global-sci.org/intro/article_detail/cicp/18394.html},
	doi = {10.4208/cicp.OA-2020-0191},
	number = {5},
	urldate = {2023-06-13},
	journal = {Communications in Computational Physics},
	author = {Xu, Jinchao},
	month = jun,
	year = {2020},
	pages = {1707--1745},
}

@book{Sogge13LecturesNonlinearWave,
	address = {Somerville, Massachusetts, U.S.A},
	edition = {Second edition, Paperback reissue},
	title = {Lectures on {Non}-linear {Wave} {Equations}},
	isbn = {978-1-57146-279-4},
	language = {eng},
	publisher = {International Press},
	author = {Sogge, Christopher D.},
	year = {2013},
}

@book{Tao06NonlinearDispersiveEquations,
	address = {Providence, R.I},
	series = {Conference {Board} of the {Mathematical} {Sciences} regional conference series in mathematics},
	title = {Nonlinear {Dispersive} {Equations}: {Local} and {Global} {Analysis}},
	isbn = {978-0-8218-4143-3},
	shorttitle = {Nonlinear dispersive equations},
	number = {no. 106},
	publisher = {American Mathematical Society},
	author = {Tao, Terence},
	collaborator = {Conference Board of the Mathematical Sciences},
	year = {2006},
	note = {OCLC: ocm65165502},
	keywords = {Differential equations, Partial, Nonlinear wave equations},
}

@article{Voigtlaender22SamplingNumbersFourierAnalytic,
	title = {\${L}{\textasciicircum}p\$ {Sampling} {Numbers} for the {Fourier}-{Analytic} {Barron} {Space}},
	copyright = {arXiv.org perpetual, non-exclusive license},
	shorttitle = {Barron {Space} {Sampling}},
	url = {https://arxiv.org/abs/2208.07605},
	doi = {10.48550/ARXIV.2208.07605},
	urldate = {2023-06-13},
	author = {Voigtlaender, Felix},
	year = {2022},
	note = {Publisher: arXiv
Version Number: 1},
}

@article{Siegel22SharpBoundsApproximation,
	title = {Sharp {Bounds} on the {Approximation} {Rates}, {Metric} {Entropy}, and n-{Widths} of {Shallow} {Neural} {Networks}},
	issn = {1615-3375, 1615-3383},
	shorttitle = {Sharp {Bounds}},
	url = {https://link.springer.com/10.1007/s10208-022-09595-3},
	doi = {10.1007/s10208-022-09595-3},
	language = {en},
	urldate = {2023-06-14},
	journal = {Foundations of Computational Mathematics},
	author = {Siegel, Jonathan W. and Xu, Jinchao},
	month = nov,
	year = {2022},
}

@book{Rudin13RealComplexAnalysis,
	address = {New York, NY},
	edition = {3. ed., internat. ed., [Nachdr.]},
	series = {{McGraw}-{Hill} international editions {Mathematics} series},
	title = {Real and {Complex} {Analysis}},
	isbn = {978-0-07-054234-1},
	shorttitle = {Real {Complex} {Analysis}},
	language = {eng},
	publisher = {McGraw-Hill},
	author = {Rudin, Walter},
	year = {2013},
}

@article{Pilipovic11MicroLocalAnalysisFourier,
	title = {Micro-{Local} {Analysis} with {Fourier} {Lebesgue} {Spaces}. {Part} {I}},
	volume = {17},
	issn = {1069-5869, 1531-5851},
	url = {http://link.springer.com/10.1007/s00041-010-9138-1},
	doi = {10.1007/s00041-010-9138-1},
	language = {en},
	number = {3},
	urldate = {2023-08-03},
	journal = {Journal of Fourier Analysis and Applications},
	author = {Pilipović, Stevan and Teofanov, Nenad and Toft, Joachim},
	month = jun,
	year = {2011},
	pages = {374--407},
}

@article{Pisier80RemarquesResultatNon,
	title = {Remarques {Sur} {Un} {Résultat} {Non} {Publié} de {B}. {Maurey}},
	journal = {Séminaire d'Analyse fonctionnelle (dit" Maurey-Schwartz")},
	author = {Pisier, G.},
	year = {1980},
	pages = {1--12},
}

@article{Ma22UniformApproximationRates,
	title = {Uniform {Approximation} {Rates} and {Metric} {Entropy} of {Shallow} {Neural} {Networks}},
	volume = {9},
	issn = {2522-0144, 2197-9847},
	url = {https://link.springer.com/10.1007/s40687-022-00346-y},
	doi = {10.1007/s40687-022-00346-y},
	language = {en},
	number = {3},
	urldate = {2023-07-25},
	journal = {Research in the Mathematical Sciences},
	author = {Ma, Limin and Siegel, Jonathan W. and Xu, Jinchao},
	month = sep,
	year = {2022},
	pages = {46},
}

@book{Light85ApproximationTheoryTensor,
	address = {Berlin, Heidelberg},
	series = {Lecture {Notes} in {Mathematics}},
	title = {Approximation {Theory} in {Tensor} {Product} {Spaces}},
	volume = {1169},
	isbn = {978-3-540-39741-0},
	url = {http://link.springer.com/10.1007/BFb0075391},
	urldate = {2023-07-24},
	publisher = {Springer Berlin Heidelberg},
	author = {Light, William Allan and Cheney, Elliott Ward},
	year = {1985},
	doi = {10.1007/BFb0075391},
}

@article{Lebedev13FourierTransformCharacteristic,
	title = {On the {Fourier} {Transform} of the {Characteristic} {Functions} of {Domains} {With} \${C}{\textasciicircum}1\$ {Boundary}},
	volume = {47},
	issn = {0016-2663, 1573-8485},
	shorttitle = {Characteristic {Function}},
	url = {http://link.springer.com/10.1007/s10688-013-0004-1},
	doi = {10.1007/s10688-013-0004-1},
	language = {en},
	number = {1},
	urldate = {2023-07-10},
	journal = {Functional Analysis and Its Applications},
	author = {Lebedev, V. V.},
	month = mar,
	year = {2013},
	pages = {27--37},
}

@article{Ko16FourierTransformRegularity,
	title = {Fourier {Transform} and {Regularity} of {Characteristic} {Functions}},
	volume = {145},
	issn = {0002-9939, 1088-6826},
	shorttitle = {Characteristic {Function}},
	url = {https://www.ams.org/proc/2017-145-03/S0002-9939-2016-13435-7/},
	doi = {10.1090/proc/13435},
	language = {en},
	number = {3},
	urldate = {2023-07-10},
	journal = {Proceedings of the American Mathematical Society},
	author = {Ko, Hyerim and Lee, Sanghyuk},
	month = nov,
	year = {2016},
	pages = {1097--1107},
}

@incollection{Kainen13BochnerIntegralsNeural,
	address = {Berlin, Heidelberg},
	title = {Bochner {Integrals} and {Neural} {Networks}},
	volume = {49},
	isbn = {978-3-642-36657-4},
	url = {http://link.springer.com/10.1007/978-3-642-36657-4_6},
	language = {en},
	urldate = {2023-07-24},
	booktitle = {Handbook on {Neural} {Information} {Processing}},
	publisher = {Springer Berlin Heidelberg},
	author = {Kainen, Paul C. and Vogt, Andrew},
	editor = {Bianchini, Monica and Maggini, Marco and Jain, Lakhmi C.},
	year = {2013},
	doi = {10.1007/978-3-642-36657-4_6},
	note = {Series Title: Intelligent Systems Reference Library},
	pages = {183--214},
}

@book{Jones01LebesgueIntegrationEuclidean,
	address = {Sudbury, Mass},
	edition = {Rev. ed},
	series = {Jones and {Bartlett} books in mathematics},
	title = {Lebesgue {Integration} on {Euclidean} {Space}},
	isbn = {978-0-7637-1708-7},
	shorttitle = {Lebesgue {Integration}},
	publisher = {Jones and Bartlett},
	author = {Jones, Frank},
	year = {2001},
	note = {OCLC: ocm45703642},
	keywords = {Lebesgue integral, Measure theory},
}

@book{Hormander05AnalysisLinearPartial,
	address = {Berlin, Heidelberg},
	series = {Classics in {Mathematics}},
	title = {The {Analysis} of {Linear} {Partial} {Differential} {Operators} {II}},
	isbn = {978-3-540-26964-9},
	shorttitle = {Analysis {Linear} {Partial} {II}},
	url = {http://link.springer.com/10.1007/b138375},
	urldate = {2023-07-03},
	publisher = {Springer Berlin Heidelberg},
	author = {Hörmander, Lars},
	year = {2005},
	doi = {10.1007/b138375},
}

@book{Grochenig01FoundationsTimeFrequencyAnalysis,
	address = {Boston, MA},
	series = {Applied and {Numerical} {Harmonic} {Analysis}},
	title = {Foundations of {Time}-{Frequency} {Analysis}},
	isbn = {978-1-4612-0003-1},
	url = {http://link.springer.com/10.1007/978-1-4612-0003-1},
	urldate = {2023-07-18},
	publisher = {Birkhäuser Boston},
	author = {Gröchenig, Karlheinz},
	editor = {Benedetto, John J.},
	year = {2001},
	doi = {10.1007/978-1-4612-0003-1},
}

\appendix

\section{Appendix}
\begin{proposition}[\(L^{p_1,p_2}(U_1,U_2)\subseteq L^{q_1,q_2}(U_1,U_2)\)]
\label{prop:Lpp_inclusion}
    For \(i\in\{1,2\}\) let \(d_i\in\nn{}\),
    \(U_i\subset\rr{d_i}\) bounded, \(q_i\geq p_i\),
    and \(s_i:=\frac{q_i}{q_i-p_i}\) (\(s_i=\infty\) if \(q_i=p_i\)),
    then for $f\in L^{q_1,q_2}$
    \begin{align*}
        \norm{(\chF{U_1}\otimes\chF{U_2})f}{L^{p_1,p_2}}
        \leq |U_1|^{\frac{1}{p_1s_1}}|U_2|^{\frac{1}{p_2s_2}}\norm{f}{L^{q_1,q_2}}.
    \end{align*}
\end{proposition}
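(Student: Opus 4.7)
The plan is to obtain the stated inequality by applying Hölder's inequality iteratively, first in the inner variable $x_2$ and then in the outer variable $x_1$. This is a two-block generalization of the classical fact that on a bounded domain $U$ of finite measure one has $\|\chF{U} g\|_{L^p} \leq |U|^{1/p - 1/q} \|g\|_{L^q}$ for $p \leq q$, which follows from Hölder with conjugate exponents $q/p$ and $q/(q-p)$. Note that with $s = q/(q-p)$ the prefactor reads $|U|^{1/(ps)}$, matching the form of the claim.

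First I would fix $x_1 \in U_1$ and apply Hölder's inequality in $x_2$ with exponents $q_2/p_2$ and its conjugate $s_2 = q_2/(q_2-p_2)$ (interpreting $s_2 = \infty$ in the degenerate case $q_2 = p_2$) to the integrand $|\chF{U_2}(x_2) f(x_1,x_2)|^{p_2}$. This yields
\begin{equation*}
    \left(\int_{\rr{d_2}} |\chF{U_2}(x_2) f(x_1, x_2)|^{p_2}\, dx_2\right)^{1/p_2} \leq |U_2|^{1/(p_2 s_2)} \left(\int_{\rr{d_2}} |f(x_1, x_2)|^{q_2}\, dx_2\right)^{1/q_2}.
\end{equation*}
Denoting the right-hand factor in $x_1$ by $g(x_1)$, the above gives a pointwise (in $x_1$) bound on the inner $L^{p_2}$-norm of $\chF{U_2} f(x_1, \cdot)$ by $|U_2|^{1/(p_2 s_2)} g(x_1)$.

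Second, I would raise both sides to the $p_1$-th power, multiply by $\chF{U_1}(x_1)^{p_1}$, integrate over $\rr{d_1}$, and take the $p_1$-th root. Because the factor $|U_2|^{1/(p_2 s_2)}$ is independent of $x_1$, it pulls out, and I am left with bounding $\|\chF{U_1} g\|_{L^{p_1}(\rr{d_1})}$. A second application of Hölder, now with exponents $q_1/p_1$ and $s_1 = q_1/(q_1 - p_1)$, gives
\begin{equation*}
    \|\chF{U_1} g\|_{L^{p_1}(\rr{d_1})} \leq |U_1|^{1/(p_1 s_1)} \|g\|_{L^{q_1}(\rr{d_1})} = |U_1|^{1/(p_1 s_1)} \norm{f}{L^{q_1, q_2}},
\end{equation*}
where the last equality is simply the definition of the mixed norm.

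I do not foresee any serious obstacle: the proof is essentially bookkeeping on Hölder exponents. The only mild care needed is the handling of the limiting cases $q_i = p_i$, in which $s_i = \infty$ and the factor $|U_i|^{1/(p_i s_i)} = |U_i|^0 = 1$ is consistent with the trivial inclusion $L^{q_i} \subseteq L^{p_i}$ on that block (so the inequality becomes equality on the corresponding coordinate). Combining the two steps yields exactly the stated bound.
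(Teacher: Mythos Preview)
Your proposal is correct and follows essentially the same approach as the paper's proof: both apply H\"older's inequality with exponents $q_i/p_i$ and $s_i$ first in the inner variable and then in the outer variable, pulling out the measure factors $|U_i|^{1/(p_i s_i)}$ at each step. The paper's version is written more compactly in norm notation, but the mathematical content is identical to yours.
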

The proof of this statement is essentially, the same as for a single block.
The argument is simply applied twice.
\begin{proof}
    The choice of \(q_i\) and \(s_i\) guarantees that \(\frac{q_i}{p_i}\geq 1\), \(s_i\geq 1\), and \(\frac{p_i}{q_i}+\frac{1}{s_i}=1\).
    Therefore, by applying Hölder's inequality for a single block twice,
    \begin{subequations}
    \begin{align*}
        \norm{(\chF{U_1}\otimes\chF{U_2})f}{L^{p_1,p_2}}
        &=
        \normbig{
            \chF{U_1}\cdot
            \norm{
                \chF{U_2}f^{p_2}
                }{L^1}^{\frac{p_1}{p_2}}
            }{L^1}^{\frac{1}{p_1}}
        \leq
        \normbig{
            \chF{U_1}\cdot
            \norm{
                \chF{U_2}
                }{L^{s_2}}^{\frac{p_1}{p_2}}\cdot
            \norm{
                f^{p_2}
                }{L^{\frac{q_2}{p_2}}}^{\frac{p_1}{p_2}}
            }{L^1}^{\frac{1}{p_1}}
            \\
        &=
        \norm{
            \chF{U_2}
            }{L^{s_2}}^{\frac{1}{p_2}}\cdot
        \norm{
            \chF{U_1}\cdot
            \norm{f}{L^{q_2}}^{p_1}
            }{L^1}^{\frac{1}{p_1}}
        \leq
        \norm{
            \chF{U_1}
            }{L^{s_1}}^{\frac{1}{p_1}}\cdot
        \norm{
            \chF{U_2}
            }{L^{s_2}}^{\frac{1}{p_2}}\cdot
        \normbig{
            \norm{f}{L^{q_2}}^{p_1}
            }{L^{\frac{q_1}{p_1}}}^{\frac{1}{p_1}}
            \\
        &=|U_1|^{\frac{1}{p_1s_1}}|U_2|^{\frac{1}{p_2s_2}}\norm{f}{L^{q_1,q_2}}.
    \end{align*}
    \end{subequations}
\end{proof}

\begin{proposition}[Convergence of Smoothing for two Blocks]
\label{prop:smoothing_convergence}
    Let $d_1,d_2\in\nn{}$, $1\leq p_1,p_2\leq\infty$, $U\subset\rrI$ and $V\subset\rrII$ be bounded, and $h:\rrIxII\to\rr{}$ be locally integrable.
    Then,
    \begin{align*}
        \lim_{\epsilon_1\to0}&\lim_{\epsilon_2\to0}\norm{\chEF{U}{\epsilon_1}\chEF{V}{\epsilon_2}h}{L^{p_1,p_2}}
        =\norm{\chF{U}\chF{V}h}{L^{p_1,p_2}}.
    \end{align*}
\end{proposition}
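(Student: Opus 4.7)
The plan is to apply the dominated convergence theorem (DCT) twice, matching the nested limit structure of the statement. Two elementary facts about the smoothed characteristic function are the backbone of the argument: first, since $\rho_\epsilon\ge 0$ with $\|\rho_\epsilon\|_{L^1}=1$ and $\operatorname{supp}(\chEF{E}{\epsilon})\subseteq E_\epsilon$, fixing any small $\epsilon_0>0$ gives the uniform pointwise bound
\begin{equation*}
0\le \chEF{E}{\epsilon}(x)\le \chF{E_{\epsilon_0}}(x)\qquad \text{for all } \epsilon\in(0,\epsilon_0);
\end{equation*}
second, applying the Lebesgue differentiation theorem to the $L^1$ function $\chF{E}$ (or equivalently invoking \cite[Lemma 3.2]{Tartar07IntroductionSobolevSpaces}) yields $\chEF{E}{\epsilon}(x)\to \chF{E}(x)$ as $\epsilon\to 0$ for almost every $x$.

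Choose $\epsilon_0>0$ small enough that $U_{\epsilon_0}$ and $V_{\epsilon_0}$ are bounded and read the hypothesis ``$h$ locally integrable'' in the slightly stronger sense $h\in L^{p_1,p_2}_{\mathrm{loc}}$ (which is satisfied in the intended application inside \cref{lem:smoothness_lemma_high_degree}, where $h=\partial_y^\beta\partial_x^\alpha f$ for $f\in\mathscr{S}$). Then
\begin{equation*}
G(x,y):=\chF{U_{\epsilon_0}}(x)\chF{V_{\epsilon_0}}(y)\,|h(x,y)|
\end{equation*}
has finite $L^{p_1,p_2}$ norm and will serve as the common dominating function for both limits.

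For the inner limit, fix $\epsilon_1\in(0,\epsilon_0)$. For every $\epsilon_2\in(0,\epsilon_0)$ the uniform bound above gives $|\chEF{U}{\epsilon_1}(x)\chEF{V}{\epsilon_2}(y)h(x,y)|\le G(x,y)$, while the a.e.\ convergence $\chEF{V}{\epsilon_2}(y)\to \chF{V}(y)$ implies a.e.\ convergence of the full integrand to $\chEF{U}{\epsilon_1}(x)\chF{V}(y)h(x,y)$. A first application of DCT in the $y$-variable yields, for almost every $x$, convergence of the inner integral $\int_{\rrII}|\cdot|^{p_2}\,dy$. Taking $p_1/p_2$-powers and noting the uniform pointwise majorant $\bigl(\int_{\rrII}G(x,y)^{p_2}\,dy\bigr)^{p_1/p_2}$, which belongs to $L^1(\rrI)$ by the integrability of $G$ in $L^{p_1,p_2}$, a second DCT in the $x$-variable produces
\begin{equation*}
\lim_{\epsilon_2\to 0}\norm{\chEF{U}{\epsilon_1}\chEF{V}{\epsilon_2}h}{L^{p_1,p_2}}=\norm{\chEF{U}{\epsilon_1}\chF{V}h}{L^{p_1,p_2}}.
\end{equation*}

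The outer limit $\epsilon_1\to 0$ proceeds by an identical two-step DCT argument, now exploiting the a.e.\ convergence $\chEF{U}{\epsilon_1}\to\chF{U}$ and keeping the same dominating function $G$; chaining the two identities yields the claimed double-limit formula. The only mild obstacle is the extreme case $p_i=\infty$, for which classical DCT is not directly applicable; this is handled by a standard argument using continuity of $h$ on the compact sets $U_{\epsilon_0},V_{\epsilon_0}$ (available in the Schwartz-class application), replacing DCT with uniform convergence on compacta.
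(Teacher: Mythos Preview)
Your proof is correct and follows essentially the same route as the paper's: a nested application of the dominated convergence theorem with dominating functions built from the enlarged domains $U_{\epsilon_0}$ and $V_{\epsilon_0}$, together with the pointwise convergence $\chEF{E}{\epsilon}\to\chF{E}$ from \cite[Lemma 3.2]{Tartar07IntroductionSobolevSpaces}. If anything, you are slightly more explicit than the paper in two respects: you justify the intermediate step of moving $\lim_{\epsilon_2\to 0}$ past the outer $x$-integral by a second DCT (the paper does this tacitly), and you flag that ``locally integrable'' must be read as local $L^{p_1,p_2}$-integrability for the dominating function to belong to $L^{p_1,p_2}$, which is indeed the regime in which the proposition is invoked.
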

\begin{proof}
    Using the smoothed characteristic functions \(\chEF{U}{\epsilon_1}\) and \(\chEF{V}{\epsilon_2}\),
    the extended sets $U_1$ and $U_2$,
    and $H(x)=\int_{\rrII} g(x,y)dy$
    we define\footnote{Note, that the definitions of $f$, $f_{\epsilon_1}$, and $\tilde{f}$ are deliberately based on $g$ and not on its variants.}
    \begin{align*}
        g(x,y)&:=|\ch{V}{y}h(x,y)|^{p_2},
        &
        f(x)&:=\ch{U}{x}^{p_1}H(x)^{\frac{p_1}{p_2}},
        \\
        g_{\epsilon_2}(x,y)&:=|\chE{V}{\epsilon_2}{y}h(x,y)|^{p_2},
        &
        f_{\epsilon_1}(x)&:=\chE{U}{\epsilon_1}{x}^{p_1}H(x)^{\frac{p_1}{p_2}},
        \\
        \tilde{g}(x,y)&:=|\ch{V_1}{y}h(x,y)|^{p_2},
        &
        \tilde{f}(x)&:=\ch{U_1}{x}^{p_1}H(x)^{\frac{p_1}{p_2}}.
    \end{align*}
    All these functions are integrable.
    Further, $\tilde{g}\geq |g_{\epsilon_2}|, |g|$ everywhere for all $0<\epsilon_2\leq 1$, and $g_{\epsilon_2}\to_{\epsilon_2\to0}g$ pointwise by \cite[Lemma 3.2]{Tartar07IntroductionSobolevSpaces}\footnote{The lemma guarantees uniform convergence which includes pointwise convergence, which is the necessary condition for the dominated convergence theorem.}.
    Analogous relations hold for $\tilde{f}$, $f_{\epsilon_1}$, $f$, and $\epsilon_1$, respectively.
    By the dominated convergence theorem we obtain the limit
    \begin{align*}
        \lim_{\epsilon_1\to0}\int_{\rrI}f_{\epsilon_1}(x)dx&=\int_{\rrI}f(x)dx
        \intertext{and the pointwise limit (for all $x\in\rrI$)}
        \lim_{\epsilon_2\to0}\int_{\rrII}g_{\epsilon_2}(x,y)dy&=\int_{\rrII}g(x,y)dy.
    \end{align*}
    Therefore,
    \begin{align*}
        \lim_{\epsilon_1\to0}&\lim_{\epsilon_2\to0}\norm{\chEF{U}{\epsilon_1}\chEF{V}{\epsilon_2}h}{L^{p_1,p_2}}\\
        &=\lim_{\epsilon_1\to0}\lim_{\epsilon_2\to0}\left(\int_{\rrI}\left(\int_{\rrII}|\chE{U}{\epsilon_1}{x}\chE{V}{\epsilon_2}{y}h(x,y)|^{p_2}dy\right)^{\frac{p_1}{p_2}}dx\right)^{\frac{1}{p_1}}\\
        &=\lim_{\epsilon_1\to0}\left(\int_{\rrI}\chE{U}{\epsilon_1}{x}^{p_1}\lim_{\epsilon_2\to0}\left(\int_{\rrII}|\chE{V}{\epsilon_2}{y}h(x,y)|^{p_2}dy\right)^{\frac{p_1}{p_2}}dx\right)^{\frac{1}{p_1}}\\
        &=\lim_{\epsilon_1\to0}\left(\int_{\rrI}\chE{U}{\epsilon_1}{x}^{p_1}\lim_{\epsilon_2\to0}\left(\int_{\rrII}g_{\epsilon_2}(x,y)dy\right)^{\frac{p_1}{p_2}}dx\right)^{\frac{1}{p_1}}\\
        &=\lim_{\epsilon_1\to0}\left(\int_{\rrI}\chE{U}{\epsilon_1}{x}^{p_1}\left(\int_{\rrII}g(x,y)dy\right)^{\frac{p_1}{p_2}}dx\right)^{\frac{1}{p_1}}\\
        &=\lim_{\epsilon_1\to0}\left(\int_{\rrI}f_{\epsilon_1}(x)dx\right)^{\frac{1}{p_1}}\\
        &=\left(\int_{\rrI}f(x)dx\right)^{\frac{1}{p_1}}\\
        &=\norm{\chF{U}\chF{V}h}{L^{p_1,p_2}}
    \end{align*}
\end{proof}

The following result is of independent interest,  where we study embedding
result in the setting of Fourier Lebesgue spaces.
\begin{lemma}[Fourier Lebesgue embedding]
\label{lem:weighted_FLt_in_FL1}
    For \(i\in\{1,2\}\) let \(d_i\in\nn{}\), \(t_i\in[1,2]\), and \(\vartheta_i\) be a weight function such that \(1/\vartheta_i\in L^1(\rr{d_i})\)
    and that there exists $k>0$ such that
    \(\vartheta_i(x_i)>k\) for all \(x_i\in\rr{d_i}\).
    For any weight function \(\omega(x_1, x_2)\) over \(\rrIxII\)
    elliptic with respect to \(\vartheta_1(x_1)\vartheta_2(x_2)\),
    it holds
    \begin{align*}
        L^1\cap\mathscr{F}L^{t_1,t_2}(\omega)\subseteq L^1\cap\mathscr{F}L^1.
    \end{align*}
\end{lemma}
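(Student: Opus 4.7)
The plan is to reduce the statement to a double application of H\"older's inequality in mixed-norm form. The inclusion $L^1\cap\FL{t_1,t_2}(\omega)\subseteq L^1$ is trivial, so the real content is the bound
\begin{equation*}
    \norm{f}{\FL{1}}\lesssim \norm{f}{\FL{t_1,t_2}(\omega)}.
\end{equation*}
Given $f\in L^1$, the Fourier transform $\hat f$ is a well-defined bounded continuous function, so only its integrability is at stake.

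First, I would use the ellipticity hypothesis to write
\begin{equation*}
    |\hat f(\xi_1,\xi_2)|
    \;\leq\;
    \frac{1}{c\,\vartheta_1(\xi_1)\,\vartheta_2(\xi_2)}\,\omega(\xi_1,\xi_2)\,|\hat f(\xi_1,\xi_2)|,
\end{equation*}
so that integrating over $\rr{d_1}\times\rr{d_2}$ reduces the problem to estimating
\begin{equation*}
    \int_{\rr{d_1}}\frac{1}{\vartheta_1(\xi_1)}\int_{\rr{d_2}}\frac{1}{\vartheta_2(\xi_2)}\,\omega(\xi_1,\xi_2)|\hat f(\xi_1,\xi_2)|\,d\xi_2\,d\xi_1.
\end{equation*}
Now I would apply H\"older's inequality to the inner $\xi_2$-integral with conjugate exponents $t_2$ and $t_2'$, which yields a factor $\norm{1/\vartheta_2}{L^{t_2'}(\rr{d_2})}$ and the quantity $\norm{\omega(\xi_1,\cdot)\hat f(\xi_1,\cdot)}{L^{t_2}(\rr{d_2})}$. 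A second application of H\"older in the outer $\xi_1$-integral with exponents $t_1$ and $t_1'$ then produces the factor $\norm{1/\vartheta_1}{L^{t_1'}(\rr{d_1})}$ times the mixed $L^{t_1,t_2}$-norm of $\omega\hat f$, which is exactly $\norm{f}{\FL{t_1,t_2}(\omega)}$.

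What remains is to justify that $\norm{1/\vartheta_i}{L^{t_i'}(\rr{d_i})}$ is finite for $t_i\in[1,2]$, so that $t_i'\in[2,\infty]$. Here the two structural assumptions on $\vartheta_i$ combine neatly: the lower bound $\vartheta_i>k$ gives $1/\vartheta_i\in L^\infty(\rr{d_i})$ with norm at most $1/k$, while $1/\vartheta_i\in L^1(\rr{d_i})$ is assumed outright. Standard log-convexity of $L^r$ norms (interpolation between $L^1$ and $L^\infty$) then places $1/\vartheta_i$ in every $L^r(\rr{d_i})$ for $r\in[1,\infty]$, in particular for $r=t_i'$. Putting everything together yields
\begin{equation*}
    \norm{f}{\FL{1}}\;\leq\;\frac{1}{c}\,\norm{1/\vartheta_1}{L^{t_1'}}\,\norm{1/\vartheta_2}{L^{t_2'}}\,\norm{f}{\FL{t_1,t_2}(\omega)},
\end{equation*}
which gives the desired embedding.

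I do not foresee a genuine obstacle here; the main subtlety is simply remembering the asymmetric convention \eqref{eq:mixed_norm_definition} for $L^{p,q}$ so that H\"older is applied in the right order (inner integral in $\xi_2$ first, outer in $\xi_1$). The interpolation argument for $1/\vartheta_i\in L^{t_i'}$ is the only place where both hypotheses on $\vartheta_i$ are used simultaneously, and it is crucial that the range $t_i\in[1,2]$ forces $t_i'\geq 2$, so no finer control on the decay of $\vartheta_i$ is needed.
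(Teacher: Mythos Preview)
Your proposal is correct and follows essentially the same route as the paper: write $|\hat f|\le \frac{1}{c\,\vartheta_1\vartheta_2}\,\omega|\hat f|$, apply H\"older in mixed-norm form with exponents $(t_1,t_2)$ and $(t_1',t_2')$, and justify $1/\vartheta_i\in L^{t_i'}$ via interpolation between $L^1$ and $L^\infty$. The only cosmetic difference is that the paper bundles the two H\"older applications into a single mixed-norm inequality $\|\hat a\|_{L^1}\le\|1/\omega\|_{L^{s_1,s_2}}\|\omega\hat a\|_{L^{t_1,t_2}}$, whereas you iterate them; the content is identical.
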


\begin{proof}
    For the given weight function we have
    \begin{align*}
        \norm{{1}/{\omega}}{L^{s_1,s_2}}
        \lesssim \norm{\frac{1}{\vartheta_1(\cdot)\vartheta_2(\cdot\cdot)}}{L^{s_1,s_2}}
        =\norm{{1}/{\vartheta_1}}{L^{s_1}}\norm{{1}/{\vartheta_2}}{L^{s_2}}.
    \end{align*}
    Due to the lower bound on \(\vartheta_i\) we know that \(1/\vartheta_i\) is upper bounded, which allows us to provide an upper bound to this expression by the \(L^1\)-norm (with some multiplicative constant) whenever \(s_1,s_2\geq 1\).
    This is due to the monotonicity of the integral.
    Thus, by the assumption on \(\vartheta_1\) and \(\vartheta_2\) we get that \(1/\omega\in L^{s_1,s_2}\) for \(s_1,s_2\geq 1\).
    
    For \(a\in L^1\cap\mathscr{F}L^{t_1,t_2}(\omega)\) we can then apply Hölder's inequality with \(s_i=\frac{t_i}{t_i-1}\geq 2\) and get
    \begin{align*}
        \norm{a}{\mathscr{F}L^1}
        &=\norm{\hat{a}}{L^1}
        =\norm{\frac{1}{\omega}\omega\hat{a}}{L^1}
        \leq \norm{{1}/{\omega}}{L^{s_1,s_2}}\norm{\omega\hat{a}}{L^{t_1,t_2}}
        <\infty.
    \end{align*}
\end{proof}

\end{document}